\def\eqref#1{equation~\ref{#1}}
\def\1{\bm{1}}
\DeclareMathAlphabet{\mathsfit}{\encodingdefault}{\sfdefault}{m}{sl}
\SetMathAlphabet{\mathsfit}{bold}{\encodingdefault}{\sfdefault}{bx}{n}
\DeclareMathOperator{\Pb}{\mathbf{P}}
\newcommand{\RN}[1]{%
	\textup{\lowercase\expandafter{\it \romannumeral#1}}%
}
\newcommand{\distas}[1]{\mathbin{\overset{#1}{\kern\z@\sim}}}%
\newcommand{\eg}[0]{\emph{e.g., }}
\newcommand{\wrt}[0]{\emph{wrt.}}
\newcommand{\beq}{\vspace{0mm}\begin{equation}}
\newcommand{\eeq}{\vspace{0mm}\end{equation}}
\newcommand{\beqs}{\vspace{0mm}\begin{eqnarray}}
\newcommand{\eeqs}{\vspace{0mm}\end{eqnarray}}
\newcommand{\barr}{\begin{array}}
\newcommand{\earr}{\end{array}}
\newtheorem{theorem}{Theorem} 
\newtheorem{lemma}{Lemma}
\newtheorem{assumption}{Assumption}
\newtheorem{remark}{Remark}
\definecolor{dark-blue}{rgb}{0.15,0.15,0.4}
\definecolor{medium-blue}{rgb}{0,0,0.5}
\newtheorem{remark}{Remark}
\title{Cyclical Stochastic Gradient MCMC for \\ Bayesian Deep Learning}
\author{Ruqi Zhang \\
Cornell University\\
\texttt{rz297@cornell.edu} 
\And
Chunyuan Li \\
Microsoft Research, Redmond \\
\texttt{chunyl@microsoft.com} 
\And
Jianyi Zhang \\
Duke University \\
\texttt{jz318@duke.edu}
\And
Changyou Chen\\
University at Buffalo, SUNY\\
\texttt{changyou@buffalo.edu}
\And
Andrew Gordon Wilson\\
New York University\\
\texttt{andrewgw@cims.nyu.edu}
}
\begin{document}

\maketitle

\begin{abstract}
The posteriors over neural network weights are high dimensional and multimodal. Each mode typically characterizes a meaningfully different representation of the data. We develop Cyclical Stochastic Gradient MCMC (SG-MCMC) to automatically explore such distributions. In particular, we propose a cyclical stepsize schedule, where larger steps discover new modes, and smaller steps characterize each mode. We also prove non-asymptotic convergence of our proposed algorithm. Moreover, we provide extensive experimental results, including ImageNet, to demonstrate the scalability and effectiveness of cyclical SG-MCMC in learning complex multimodal distributions, especially for fully Bayesian inference with modern deep neural networks.
\end{abstract}

\section{Introduction}

Deep neural networks are often trained with stochastic optimization methods such as stochastic gradient decent (SGD) and its variants. 
Bayesian methods provide a principled alternative, accounting for model uncertainty in weight space~\citep{mackay1992practical,neal1996_bayesian, wilson2020case}, and achieve an automatic balance between model complexity and data fitting. Indeed, Bayesian methods have been shown to improve the generalization performance of DNNs~\citep{hernandez2015probabilistic,blundell2015weight,li2016preconditioned,maddox2019simple,wilson2020bayesian}, while providing a principled representation of uncertainty on predictions, which is crucial for decision making.

Approximate inference for Bayesian deep learning has typically focused on deterministic approaches, such as variational methods~\citep{hernandez2015probabilistic,blundell2015weight}. By contrast, MCMC methods are now essentially unused for inference with modern deep neural networks, despite previously providing the gold standard of performance with smaller neural networks~\citep{neal1996_bayesian}.
Stochastic gradient Markov Chain Monte Carlo (SG-MCMC) methods~\citep{welling2011bayesian,chen2014stochastic,ding2014bayesian,li2016preconditioned} provide a promising direction for a sampling based approach to inference in Bayesian deep learning. Indeed, it has been shown that stochastic methods, which use mini-batches of data, are crucial for finding weight parameters that provide good generalization in modern deep neural networks~\citep{keskar2016large}.

However, SG-MCMC algorithms for inference with modern neural networks face several challenges:
$(\RN{1})$ In theory, SG-MCMC asymptotically converges to target distributions via a decreasing stepsize scheme, but suffers from a bounded estimation error in limited time~\citep{teh2016consistency,chen2015convergence}.
$(\RN{2})$ In practice, empirical successes have been reported by training DNNs in relatively short time~\citep{li2016learning,chen2014stochastic,gan2016scalable,neelakantan2015adding,saatchi2017bayesian}. For example, \cite{saatchi2017bayesian} apply SG-MCMC to generative adversarial networks (GANs) to solve the mode collapse problem and capture diverse generation styles. 
However, the loss surface for DNNs is highly multimodal \citep{auer1996exponentially,choromanska2015loss}.  
In order for MCMC to be effective for posterior inference in modern neural networks, a crucial question remains: how do we make SG-MCMC efficiently explore a highly  multimodal parameter space given a practical computational budget? 

Several attempts have been made to improve the sampling efficiency of SG-MCMC.
Stochastic Gradient Hamiltonian Monte Carlo (SGHMC) ~\citep{chen2014stochastic} introduces momentum to Langevin
dynamics. Preconditioned stochastic gradient Langevin dynamics (pSGLD)~\citep{li2016preconditioned} adaptively adjusts the sampler's step size according to the local geometry of parameter space. Though simple and promising, these methods are still inefficient at exploring multimodal distributions in practice.
It is our contention that this limitation arises from difficulties escaping local modes when using the small stepsizes that SG-MCMC methods typically require. 
Note that the stepsize in SG-MCMC controls the sampler's behavior in two ways:
the magnitude to deterministically drift towards high density regions \wrt ~the current stochastic gradient, 
and the level of injecting noise to randomly explore the parameter space.
Therefore, a small stepsize reduces both abilities, resulting in a large numbers of iterations for the sampler to move across the modes.

\begin{wrapfigure}{R}{8.0cm}
\centering
\includegraphics[width=3.2in]{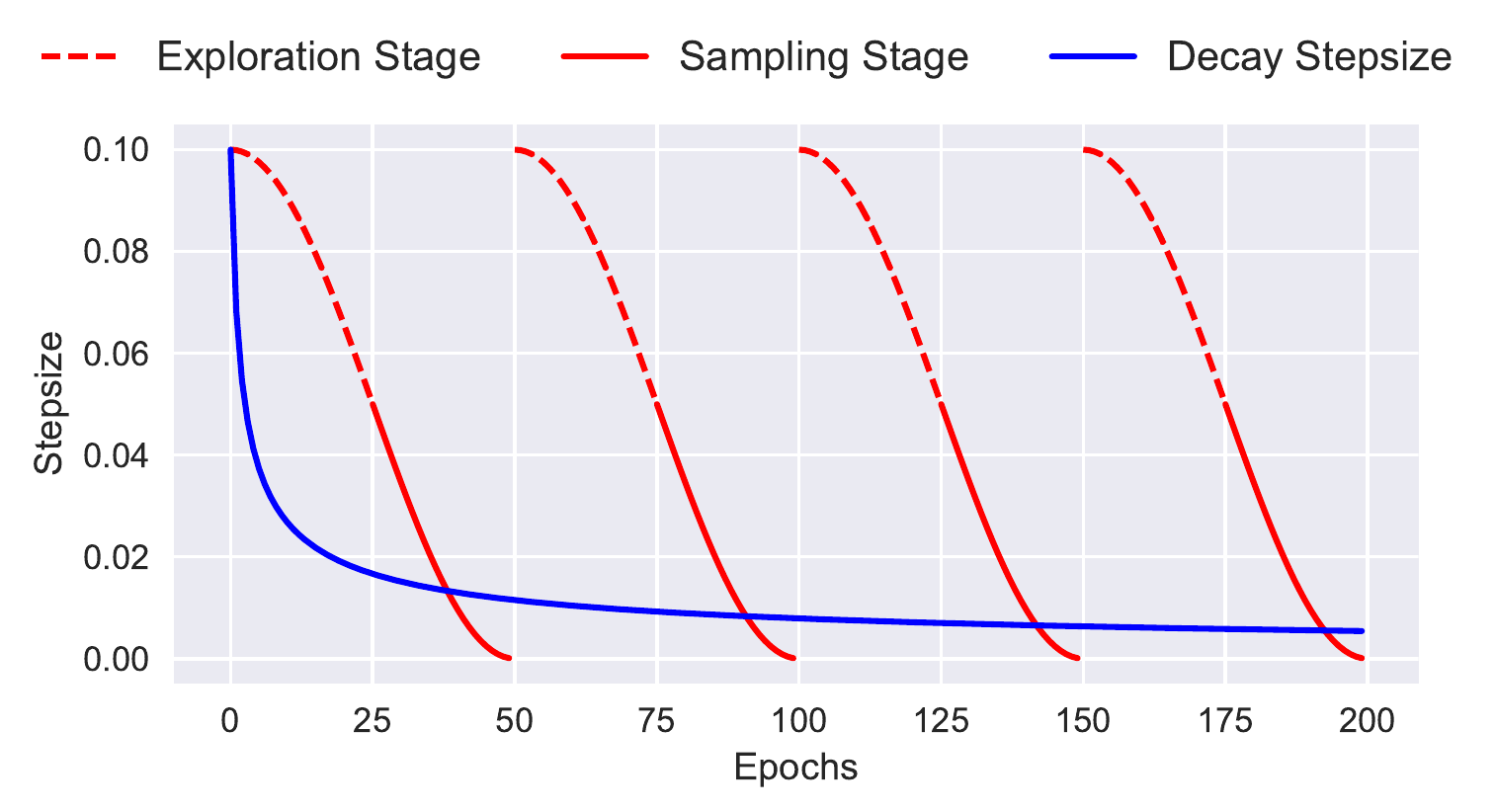}
\caption{Illustration of the proposed cyclical stepsize schedule (red) and the traditional decreasing stepsize schedule (blue) for SG-MCMC algorithms.} \label{fig:lr}
\end{wrapfigure}

In this paper, we propose to replace the traditional decreasing stepsize schedule in SG-MCMC with a cyclical variant.
To note the distinction from traditional SG-MCMC, we refer to this method as {\it  Cyclical SG-MCMC} (cSG-MCMC). 
The comparison is illustrated in Figure~\ref{fig:lr}. The blue curve is the traditional decay, while the red curve shows the proposed cyclical schedule. Cyclical SG-MCMC operates in two stages:
$(\RN{1})$ {\it Exploration}: when the stepsize is large (dashed red curves), we consider this stage as an effective burn-in mechanism, encouraging the sampler to take large moves and leave the local mode using the stochastic gradient.
$(\RN{2})$ {\it Sampling}: when the stepsize is small (solid red curves), the sampler explores one local mode. We collect samples for local distribution estimation during this stage. Further, we propose two practical techniques to improve estimation efficiency: (1) a system temperature for exploration and exploitation; (2) A weighted combination scheme for samples collected in different cycles to reflect their relative importance.

This procedure can be viewed as SG-MCMC with warm restarts: the exploration stage provides the warm restarts for its following sampling stage.
cSG-MCMC combines the advantages from (1) the traditional SG-MCMC to characterize the fine-scale local density of a distribution and (2) the cyclical schedule in optimization to efficiently explore multimodal posterior distributions of the parameter space. In limited time, cSG-MCMC is a practical tool to provide significantly better mixing than the traditional SG-MCMC for complex distributions.
cSG-MCMC can also be considered as an {\it efficient} approximation to parallel MCMC; cSG-MCMC can achieve similar performance to parallel MCMC with only a fraction of cost (reciprocal to the number of chains) that parallel MCMC requires. 

To support our proposal, we also prove the non-asymptotic convergence for the cyclical schedule. We note that this is the first convergence analysis of a cyclical stepsize algorithm (including work in optimization).
Moreover, we provide extensive experimental results to demonstrate the advantages of cSG-MCMC in sampling from multimodal distributions, including Bayesian neural networks and uncertainty estimation on several large and challenging datasets such as ImageNet.

In short, cSG-MCMC provides a simple and automatic approach to inference in modern Bayesian deep learning, with promising results, and theoretical support. This work is a step towards enabling MCMC approaches in Bayesian deep learning. We release code at \\ \url{https://github.com/ruqizhang/csgmcmc}.

\section{Preliminaries: SG-MCMC with a Decreasing Stepsize Schedule}\label{sec:pre}
SG-MCMC is a family of scalable sampling methods that enables inference with mini-batches of data. For a dataset $\mathcal{D}=\{d_i\}_{i=1}^N$ and a $\theta$-parameterized model, we have the likelihood $p(\mathcal{D}|\theta)$ and prior $p(\theta)$. The posterior distribution is 
$
p(\theta|\mathcal{D})\propto\exp(-U(\theta))~,
$
where $U(\theta)$ is the potential energy given by
$
U(\theta) = -\log p(\mathcal{D}|\theta)-\log p(\theta)~.
$

When $\mathcal{D}$ is too large, it is expensive to evaluate $U(\theta)$ for all the data points at each iteration. Instead, SG-MCMC methods use a minibatch to approximate $U(\theta)$:
$
\tilde{U}(\theta) = -\frac{N'}{N}\sum_{i=1}^{N'} \log p(x_i|\theta)-\log p(\theta)~,
$
where $N'\ll N$ is the size of minibatch. 
We recommend \cite{ma2015complete} for a general review of SG-MCMC algorithms. We describe two SG-MCMC algorithms considered in this paper.

\paragraph{SGLD \& SGHMC}  
\cite{welling2011bayesian} proposed Stochastic Gradient Langevin Dynamics (SGLD), which uses stochastic gradients with Gaussian noise. Posterior samples are updated at the $k$-th step as:
$\theta_k = \theta_{k-1}-\alpha_k\nabla \tilde{U}(\theta_k)+\sqrt{2\alpha_k}\epsilon_k$,  
where $\alpha_k$ is the stepsize and $\epsilon_k$ has a standard Gaussian distribution. 

To improve mixing over SGLD, Stochastic Gradient Hamiltonian Monte Carlo (SGHMC) \citep{chen2014stochastic} introduces an auxiliary momentum variable $v$. SGHMC is built upon HMC, with an additional friction term to counteract the noise introduced by a mini-batch. The update rule for posterior samples is: 
$\theta_k = \theta_{k-1}+v_{k-1}$, and 
$v_k = v_{k-1}- \alpha_k\nabla \tilde{U}(\theta_k)- \eta v_{k-1}+\sqrt{2(\eta-\hat{\gamma})\alpha_k}\epsilon_k$, 
where $1-\eta$ is the momentum term and $\hat{\gamma}$ is the estimate of the noise. 

To guarantee asymptotic consistency with the true distribution, SG-MCMC requires that the step sizes satisfy the following assumption:
\begin{assumption}\label{ass:step-size-decreasing}
The step sizes $\{\alpha_{k}\}$ are decreasing, i.e., 
$0 < \alpha_{k+1} < \alpha_{k}$, with 
1) $\sum_{k=1}^{\infty} \alpha_{k} = \infty$; and 
2) $\sum_{k=1}^{\infty} \alpha_{k}^2 < \infty$.
\end{assumption}

Without a decreasing step-size, the estimation error from numerical approximations is asymptotically biased. One typical decaying step-size schedule is 
$\alpha_k = a(b+k)^{-\gamma}$, \label{eq:poly-lr}
with $\gamma \in (0.5,1]$ and $(a, b)$ some positive constants \citep{welling2011bayesian}.

\section{Cyclical SG-MCMC}

We now introduce our {\it cyclical SG-MCMC} (cSG-MCMC) algorithm. cSG-MCMC consists of two stages: {\it exploration} and {\it sampling}. In the following, we first introduce the cyclical step-size schedule, and then describe the exploration stage in Section~\ref{sec:exploration} and the sampling stage in Section~\ref{sec:sampling}. We propose an approach to combining samples for testing in Section~\ref{sec: combine}. 

Assumption 1 guarantees the consistency of our estimation with the true distribution in the asymptotic time. The approximation error in limited time is characterized as the risk of an estimator $R=B^2+V$, where $B$ is the bias and $V$ is the variance.  
In the case of infinite computation time, the traditional SG-MCMC setting can reduce the bias and variance to zero. However, the time budget is often limited in practice, and there is always a trade-off between bias and variance. We therefore decrease the overall approximation error $R$ by reducing the variance through obtaining more effective samples. The effective sample size can be increased if fewer correlated samples from different distribution modes are collected.

For deep neural networks, the parameter space is highly multimodal. In practice, SG-MCMC with the traditional decreasing stepsize schedule becomes trapped in a local mode, though injecting noise may help the sampler to escape in the asymptotic regime \citep{ZhangLC:COLT17}. Inspired to improve the exploration of the multimodal posteriors for deep neural networks, with a simple and automatic approach, we propose the cyclical cosine stepsize schedule for SG-MCMC. The stepsize at iteration $k$ is defined as:
\beq
\alpha_k = \frac{\alpha_0}{2}
\left[\cos\left(\frac{\pi~\text{mod}(k-1,\lceil K/M\rceil)}{\lceil K/M\rceil}\right)+1
\right], \label{eq:cyclic_lr}
\eeq
where $\alpha_0$ is the initial stepsize, $M$ is the number of cycles and $K$ is the number of total iterations \citep{loshchilov2016sgdr, huang2017snapshot}.

The stepsize $\alpha_k$ varies periodically with $k$. In each period, $\alpha_k$ starts at $\alpha_0$, and gradually decreases to $0$. Within one period, SG-MCMC starts with a large stepsize, resulting in aggressive exploration in the parameter space; as the stepsize is decreasing, SG-MCMC explores local regions. In the next period, the Markov chain restarts with a large stepsize, encouraging the sampler to escape from the current mode and explore a new area of the posterior.

\paragraph{Related work in optimization.} In optimization, the cyclical cosine annealing stepsize schedule has been demonstrated to be able to find diverse solutions in multimodal objectives, though not specifically different modes, using stochastic gradient methods \citep{loshchilov2016sgdr,huang2017snapshot,garipov2018loss,fu2019cyclical}. Alternatively, we adopt the technique to SG-MCMC as an effective scheme for sampling from multimodal distributions.

\subsection{Exploration}
\label{sec:exploration}
The first stage of cyclical SG-MCMC, {\it exploration}, discovers parameters near local modes of an objective function.
Unfortunately, it is undesirable to directly apply the cyclical schedule in optimization to SG-MCMC for collecting samples at every step. SG-MCMC often requires a small stepsize in order to control the error induced by the noise from using a minibatch approximation. If the stepsize is too large, the stationary distribution of SG-MCMC might be far away from the true posterior distribution. To correct this error, it is possible to do stochastic Metropolis-Hastings (MH)~\citep{korattikara2014austerity,bardenet2014towards,chen2016efficient}. However, stochastic MH correction is still computationally too expensive. Further, it is easy to get rejected with an aggressive large stepsize, and every rejection is a waste of gradient computations. 

To alleviate this problem, we propose to introduce a system temperature $T$ to control the sampler's behaviour:  
$p(\theta|\mathcal{D})\propto \exp(-U(\theta)/T)$. 
Note that the setting $T=1$ corresponds to sampling from the untempered posterior. 
When $T\rightarrow 0$, the posterior distribution becomes a point mass.
Sampling from $\lim_{T\rightarrow 0}\exp(-U(\theta)/T)$ is equivalent to minimizing $U(\theta)$; in this context, SG-MCMC methods become stochastic gradient optimization methods.

One may increase the temperature $T$ from 0 to 1 when the step-size is decreasing. We simply consider $T=0$ and perform optimization as the burn-in stage, when the completed proportion of a cycle $r(k)=\frac{\mod(k-1,\lceil K/M\rceil)}{\lceil K/M\rceil}$ is smaller than a given threshold:  
$r(k)<\beta$. Note that $\beta \in (0,1)$ balances the proportion of the exploration and sampling stages in cSG-MCMC.

\begin{wrapfigure}{R}{0.55\textwidth}
\begin{minipage}{0.54\textwidth}
\centering
\begin{algorithm}[H]
   \caption{Cyclical SG-MCMC.}
   \label{alg:wr}
\begin{algorithmic}
\REQUIRE The initial stepsize $\alpha_0$, number of cycles $M$, number of training iterations $K$ and the proportion of exploration stage $\beta$. 
\FOR{k = 1:K}
	\STATE $\alpha \leftarrow \alpha_k$ according to Eq \eqref{eq:cyclic_lr}.
    \IF {$\frac{\mod(k-1,\lceil K/M\rceil)}{\lceil K/M\rceil}<\beta$} 
    \STATE $\mathtt{\%~Exploration~stage}$ 
    \STATE $\theta \leftarrow \theta -\alpha \nabla \tilde{U}_k(\theta)$
    \ELSE 
    \STATE $\mathtt{\%~Sampling~stage}$ 
    \STATE Collect samples using SG-MCMC methods
\ENDIF
\ENDFOR
\ENSURE Samples \{$\theta_{k}$\} 
\end{algorithmic}
\end{algorithm}
\end{minipage}
\end{wrapfigure}

\subsection{Sampling}
\label{sec:sampling}
The {\it sampling} stage corresponds to $T=1$ of the exploration stage. When $r(k)>\beta$ or step-sizes are sufficiently small, we initiate SG-MCMC updates and collect samples until this cycle ends. 

\paragraph{SG-MCMC with Warm Restarts.} One may consider the exploration stage as automatically providing warm restarts for the sampling stage. Exploration alleviates the inefficient mixing and inability to traverse the multimodal distributions of the traditional SG-MCMC methods. SG-MCMC with warm restarts explores different parts of the posterior distribution and captures multiple modes in a single training procedure.

In summary, the proposed cyclical SG-MCMC repeats the \emph{two} stages, with three key advantages: 
$(\RN{1})$ It restarts with a large stepsize at the beginning of a cycle which provides enough perturbation and encourages the model to escape from the current mode. $(\RN{2})$ The stepsize decreases more quickly inside one cycle than a traditional schedule, making the sampler better characterize the density of the local regions. 
$(\RN{3})$ This cyclical stepsize shares the advantage of the ``super-convergence'' property discussed in~\cite{smith2017super}: cSG-MCMC can accelerate convergence for DNNs by up to an order of magnitude.

\paragraph{Connection to the Santa algorithm.} It is interesting to note that our approach inverts steps of the Santa algorithm \citep{chen2016bridging} for optimization. Santa is a simulated-annealing-based optimization algorithm with an exploration stage when $T=1$, then gradually anneals $T \rightarrow 0$ in a refinement stage for global optimization. In contrast, our goal is to draw samples for multimodal distributions, thus we explore with $T=0$ and sample with $T=1$. Another fundamental difference is that Santa adopts the traditional stepsize decay, while we use the cyclical schedule. 

We visually compare the difference between cyclical and traditional step size schedules (described in Section~\ref{sec:pre}) in Figure~\ref{fig:lr}.  
The cyclical SG-MCMC algorithm is presented in Algorithm \ref{alg:wr}.

\paragraph{Connection to Parallel MCMC.} Running parallel Markov chains is a natural and effective way to draw samples from multimodal distributions \citep{vanderwerken2013parallel,ahn2014distributed}. However, the training cost increases linearly with the number of chains.
Cyclical SG-MCMC can be seen as an efficient way to approximate parallel MCMC. Each cycle effectively estimates a different region of posterior. Note cyclical SG-MCMC runs along a single training pass. Therefore, its computational cost is the same as single chain SG-MCMC while significantly less than parallel MCMC. 

\paragraph{Combining Samples.}
In cyclical SG-MCMC, we obtain samples from multiple modes of a posterior distribution by running the cyclical step size schedule for many periods. We provide a sampling combination scheme to effectively use the collected samples in Section \ref{sec: combine} in the appendix.

\section{Theoretical Analysis}

Our algorithm is based on the SDE characterizing the Langevin dynamics: $\mathrm{d}\theta_t=-\nabla U(\theta_t)\mathrm{d}t + \sqrt{2}\mathrm{d}\mathcal{W}_t~$, where $\mathcal{W}_t \in \mathbb{R}^{d}$ is a $d$-dimensional Brownian motion. 
In this section, we prove non-asymptotic convergence rates for the proposed cSG-MCMC framework with a cyclical stepsize sequence $\{\alpha_k\}$ defined in \eqref{eq:cyclic_lr}. For simplicity, we do not consider the exploration stage in the analysis as that corresponds to stochastic optimization.  Generally, there are two different ways to describe the convergence behaviours of SG-MCMC. One characterizes the sample average over a particular test function ({\it e.g.}, \cite{chen2015convergence,JMLR:v17:15-494}); the other is in terms of the Wasserstein distance ({\it e.g.}, \cite{raginsky2017non,xu2017global}). We study both in the following. 

\paragraph{Weak convergence}
Following \cite{chen2015convergence} and \citet{JMLR:v17:15-494}, we define the posterior average of an ergodic SDE as:
$\bar{\phi} \triangleq \int_{\mathcal{X}} \phi(\theta) \rho(\theta) \mathrm{d}\theta$ 
for some test function $\phi(\theta)$ of interest. For the corresponding algorithm with generated samples 
$(\theta_{k})_{k=1}^K$, we use the {\em sample average} $\hat{\phi}$ defined as 
$\hat{\phi} = \frac{1}{K} \sum_{k = 1}^K \phi(\theta_{k})$
to approximate $\bar{\phi}$. We prove weak convergence of cSGLD in terms of bias and MSE, as stated in Theorem~\ref{theo:bias_w}.

\begin{theorem}\label{theo:bias_w}
	Under Assumptions~\ref{ass:assumption1} in the appendix, for a smooth test function 
	$\phi$, the bias and MSE of cSGLD are bounded as:
	\begin{align}
		&\text{BIAS: }\left|\mathbb{E}\tilde{\phi} - \bar{\phi}\right| = O\left(\frac{1}{\alpha_0K} + \alpha_0\right),~~~~\text{MSE: }\mathbb{E}\left(\tilde{\phi} - \bar{\phi}\right)^2 = O\left(\frac{1}{\alpha_0K} + \alpha_0^2 \right)~. 
	\end{align}
\end{theorem}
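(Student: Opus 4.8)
The plan is to adapt the Poisson-equation (generator/Stein) machinery of \citet{chen2015convergence} and \citet{JMLR:v17:15-494}, developed there for step sizes satisfying Assumption~\ref{ass:step-size-decreasing}, to the non-monotone cyclical cosine schedule \eqref{eq:cyclic_lr}. Let $\mathcal{L}=-\nabla U\cdot\nabla+\Delta$ be the generator of $\mathrm{d}\theta_t=-\nabla U(\theta_t)\mathrm{d}t+\sqrt2\,\mathrm{d}\mathcal{W}_t$ (invariant density $\rho\propto e^{-U}$), and let $\psi$ solve the Poisson equation $\mathcal{L}\psi=\phi-\bar\phi$; the appendix assumptions are precisely what guarantee that $\psi$ and its first few derivatives are controlled, legitimizing the expansions below. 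Write $S_K=\sum_{k=1}^K\alpha_k$ and take $\tilde\phi=\frac{1}{S_K}\sum_{k=1}^K\alpha_k\phi(\theta_k)$, the step-size-weighted sample estimator, since with varying step sizes it is this weighting that makes the telescoping clean.

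\emph{One-step expansion and telescoping.} First I would Taylor-expand $\psi$ along the SGLD update $\theta_k=\theta_{k-1}-\alpha_k\nabla\tilde{U}_k(\theta_{k-1})+\sqrt{2\alpha_k}\,\epsilon_k$ and take the conditional expectation given $\mathcal{F}_{k-1}$. Using unbiasedness $\mathbb{E}[\nabla\tilde{U}_k\mid\mathcal{F}_{k-1}]=\nabla U(\theta_{k-1})$, the vanishing odd Gaussian moments, and $\mathbb{E}[\epsilon_k^\top\nabla^2\psi\,\epsilon_k]=\Delta\psi$, this gives $\mathbb{E}[\psi(\theta_k)-\psi(\theta_{k-1})\mid\mathcal{F}_{k-1}]=\alpha_k\,\mathcal{L}\psi(\theta_{k-1})+\alpha_k^2 R_k$, where the remainder $R_k$ has conditional moments bounded uniformly in $k$ (by the regularity assumptions together with uniform moment bounds on the chain; the stochastic-gradient variance enters only at order $\alpha_k^2$ and is absorbed here). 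Writing $\zeta_k=\psi(\theta_k)-\mathbb{E}[\psi(\theta_k)\mid\mathcal{F}_{k-1}]$, a martingale-difference sequence whose leading part is $\sqrt{2\alpha_k}\,\nabla\psi^\top\epsilon_k$ so that $\mathbb{E}\zeta_k^2=O(\alpha_k)$, and substituting $\mathcal{L}\psi=\phi-\bar\phi$, summing over $k$ telescopes the left side:
\[
S_K(\tilde\phi-\bar\phi)=\psi(\theta_K)-\psi(\theta_0)-\sum_{k=1}^K\zeta_k-\sum_{k=1}^K\alpha_k^2 R_k+E_K,
\]
where $E_K$ collects the boundary terms from replacing $\phi(\theta_{k-1})$ by $\phi(\theta_k)$; since $\{\alpha_k\}$ performs $M$ monotone descents of size $\alpha_0$, one gets $\mathbb{E}|E_K|=O(M\alpha_0)$, negligible against $S_K$.

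\emph{Cyclical sums, then bias and MSE.} The quantitative ingredient is the elementary lemma that for the cosine schedule with cycle length $L=\lceil K/M\rceil$, $S_K=\Theta(\alpha_0 K)$ and $\sum_{k=1}^K\alpha_k^2=\Theta(\alpha_0^2 K)$, obtained by summing within one cycle and using $\sum_{j=0}^{L-1}\cos(\pi j/L)=O(1)$ and $\sum_{j=0}^{L-1}\cos^2(\pi j/L)=L/2+O(1)$. For the \emph{bias}, take expectations of the displayed identity: the martingale term vanishes, $\psi$ is bounded and $|\mathbb{E}R_k|=O(1)$, so $|\mathbb{E}\tilde\phi-\bar\phi|\le S_K^{-1}\bigl(O(1)+O(\textstyle\sum_k\alpha_k^2)+O(M\alpha_0)\bigr)=O\!\left(\tfrac{1}{\alpha_0 K}+\alpha_0\right)$. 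For the \emph{MSE}, square the identity, use $(a+b+c+d)^2\le 4(a^2+b^2+c^2+d^2)$ and take expectations: $\mathbb{E}(\psi(\theta_K)-\psi(\theta_0))^2=O(1)$; martingale orthogonality gives $\mathbb{E}\bigl(\sum_k\zeta_k\bigr)^2=\sum_k\mathbb{E}\zeta_k^2=O(S_K)=O(\alpha_0 K)$; and $\mathbb{E}\bigl(\sum_k\alpha_k^2 R_k\bigr)^2=O\bigl((\textstyle\sum_k\alpha_k^2)^2\bigr)=O(\alpha_0^4 K^2)$ by Minkowski. Dividing by $S_K^2=\Theta(\alpha_0^2 K^2)$ gives $O\!\left(\tfrac{1}{\alpha_0^2K^2}+\tfrac{1}{\alpha_0 K}+\alpha_0^2\right)=O\!\left(\tfrac{1}{\alpha_0 K}+\alpha_0^2\right)$, the first term absorbed since $\alpha_0 K\gtrsim 1$ in the regime of interest.

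\emph{Main obstacle.} The algebra above is routine; the real difficulty is that Assumption~\ref{ass:step-size-decreasing} fails for the cyclical schedule ($\sum_k\alpha_k^2=\infty$), so the classical convergence theorems cannot be cited off the shelf. I must (i) re-derive the two sum estimates; (ii) verify that the Poisson solution $\psi$ has bounded derivatives and that $\{\theta_k\}$ keeps uniformly bounded moments \emph{despite the repeated large-step restarts} at the start of each cycle, since this uniform control is exactly what keeps $R_k$ and $\zeta_k$ in check; and (iii) handle the boundary/index-shift terms $E_K$ without monotone $\alpha_k$ (summation by parts no longer yields a one-signed bound). Step (ii) is where the appendix assumptions must be invoked most carefully, and I expect it to be the crux.
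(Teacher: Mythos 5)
Your proposal is correct and follows essentially the same route as the paper: the paper simply invokes the general bias/MSE bounds of \cite{chen2015convergence} (whose proof is exactly the Poisson-equation/telescoping argument you re-derive, and which the paper notes does not actually require the monotone-decreasing stepsize assumption — that is only needed for asymptotic unbiasedness), and then plugs in the two sums you compute, $S_K=\Theta(\alpha_0K)$ and $\sum_k\alpha_k^2=\tfrac{3\alpha_0^2K}{8}$, which is the sole cyclical-schedule-specific ingredient. Your extra care about uniform moment bounds across warm restarts and the index-shift boundary terms is reasonable diligence, but the paper treats these as already covered by Assumption~\ref{ass:assumption1} and the cited lemma.
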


\paragraph{Convergence under the Wasserstein distance}
Next, we consider the more general case of SGLD and characterize convergence rates in terms of a stronger metric of 2-Wasserstein distance, defined as:
\begin{align*}
	W_2^2(\mu, \nu) := \inf_{\gamma}\left\{\int_{\Omega \times \Omega}\|\theta - \theta^\prime\|_2^2\mathrm{d}\gamma(\theta, \theta^\prime): \gamma \in \Gamma(\mu, \nu)\right\}
\end{align*}
where $\Gamma(\mu, \nu)$ is the set of joint distributions over $(\theta, \theta^\prime)$ such that the two marginals equal $\mu$ and $\nu$, respectively. 

Denote the distribution of $\theta_t$ in the SDE as $\nu_{t}$. According to \cite{Chiang:1987:DGO:37121.37136}, the stationary distribution $\nu_{\infty}$ matches our target distribution. Let $\mu_K$ be the distribution of the sample from our proposed cSGLD algorithm at the $K$-th iteration. 
Our goal is to derive a convergence bound on $W_2(\mu_K, \nu_{\infty})$. 
We adopt standard assumptions as in most existing work, which are detailed in Assumption~\ref{ass:ass1} in the appendix. Theorem~\ref{thm:thm1} summarizes our main theoretical result.
\begin{theorem} \label{thm:thm1}
	Under Assumption~\ref{ass:ass1} in the appendix, there exist constants $(C_0,C_1,C_2,C_3)$ independent of the stepsizes such that the convergence rate of our proposed cSGLD with cyclical stepsize sequence \eqref{eq:cyclic_lr} is bounded for all $K$ satisfying ($K$ mod $M$ =0), as $W_2(\mu_K,\nu_{\infty}) \leq$

    \begin{align*}
    C_3 \exp(-\frac{K\alpha_0}{2C_4})+\left(6+\frac{C_2K\alpha_0}{2}\right)^{\frac{1}{2}}
	[(C_1 \frac{3\alpha_0^2K}{8}+\sigma C_0\frac{K \alpha_0}{2})^{\frac{1}{2}}+(C_1 \frac{3\alpha_0^2K}{16}+\sigma C_0\frac{K \alpha_0}{4})^{\frac{1}{4}}]~.
	\end{align*}
	Particularly, if we further assume $\alpha_0 = O(K^{-\beta})$ for $\forall \beta > 1$, 
    $W_2(\mu_K,\nu_{\infty}) \leq C_3 +\left(6+\frac{C_2}{K^{\beta-1}}\right)^{\frac{1}{2}}
    	[( \frac{2C_1}{K^{2\beta-1}}+\frac{2 C_0}{K^{\beta - 1}})^{\frac{1}{2}}+(\frac{C_1}{K^{2\beta - 1}} + \frac{C_0}{K^{\beta - 1}})^{\frac{1}{4}}]$. 
   
\end{theorem}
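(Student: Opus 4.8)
<br>

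The plan is to establish the Wasserstein bound by combining a continuous-time contraction argument for the Langevin diffusion with a discretization error estimate that is carefully accumulated over the cyclical stepsize schedule. First I would decompose $W_2(\mu_K,\nu_\infty)$ via the triangle inequality through an auxiliary continuous-time process: let $\theta_t$ be the exact Langevin SDE run for the cumulative time $S_K = \sum_{k=1}^K \alpha_k$, started from the same initialization as the algorithm. Then $W_2(\mu_K,\nu_\infty) \le W_2(\mu_K, \mathrm{Law}(\theta_{S_K})) + W_2(\mathrm{Law}(\theta_{S_K}), \nu_\infty)$. Under the dissipativity/log-concavity-at-infinity assumptions bundled in Assumption~\ref{ass:ass1} (this is the standard setup of \cite{raginsky2017non}), the second term contracts exponentially: $W_2(\mathrm{Law}(\theta_{S_K}),\nu_\infty) \le C_3 \exp(-S_K/C_4)$. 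The key observation is that $S_K = \sum_k \alpha_k = \frac{K\alpha_0}{2}$ exactly when $K$ is a multiple of $M$, because the cosine schedule \eqref{eq:cyclic_lr} integrates to $\alpha_0/2$ times the period length over each complete cycle; this is why the theorem restricts to $K \bmod M = 0$, and it yields the $C_3\exp(-K\alpha_0/(2C_4))$ term.

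Next I would control the discretization error $W_2(\mu_K,\mathrm{Law}(\theta_{S_K}))$ by a synchronous-coupling / Gronwall argument. Running the SGLD recursion and the SDE with shared Brownian increments, one bounds the squared coupling distance by a recursion of the form $\mathbb{E}\|\theta_k^{\mathrm{alg}} - \theta_{t_k}\|^2 \le (1 + c\alpha_k)\mathbb{E}\|\theta_{k-1}^{\mathrm{alg}} - \theta_{t_{k-1}}\|^2 + (\text{local error at step }k)$, where the local error has a deterministic part $O(\alpha_k^3)$ coming from the drift's Lipschitz/smoothness bound (constant $C_1$) and a stochastic-gradient-variance part $O(\sigma \alpha_k^2)$ (constant $C_0$, with $\sigma$ the gradient noise level). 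Unrolling and using $\sum_k \alpha_k^2 \le \alpha_0 \sum_k \alpha_k \cdot (\max_k \alpha_k/\alpha_0) $ — more precisely, summing the cosine squares gives $\sum_{k=1}^K \alpha_k^2 = \frac{3\alpha_0^2 K}{8}$ and $\sum_k \alpha_k^3 = \frac{?\alpha_0^3 K}{?}$ type identities — produces the aggregate terms $C_1\frac{3\alpha_0^2 K}{8}$ and $\sigma C_0 \frac{K\alpha_0}{2}$ inside the bracket. The prefactor $(6 + C_2 K\alpha_0/2)^{1/2}$ arises from a uniform-in-time second-moment bound on the processes (again $K\alpha_0/2$ is the total integration time), and the appearance of both a square-root and a fourth-root term is the usual artifact of converting an $L^2$ bound on the coupling into a $W_2$ bound when the local errors enter at mixed orders (one passes through a bound like $W_2 \le (A + B)^{1/2} \le A^{1/2} + B^{1/2}$ after an intermediate $\sqrt{\cdot}$, giving the $1/4$ powers on the second copy).

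The final asymptotic statement is then a direct substitution: with $\alpha_0 = O(K^{-\beta})$ for $\beta > 1$, we have $K\alpha_0 = O(K^{1-\beta}) \to 0$, so $\alpha_0^2 K = O(K^{1-2\beta})$, and plugging into the general bound collapses the first term's exponent to $O(1)$ (hence just $C_3$, with the exponential $\ge$ some constant since the exponent $\to 0$) and replaces $K\alpha_0/2$ by $1/K^{\beta-1}$, $\alpha_0^2 K$ by $1/K^{2\beta-1}$ throughout, matching the stated expression. I expect the main obstacle to be the accounting in the second paragraph: getting the synchronous-coupling recursion with the right constants $(C_0,C_1,C_2)$ that are genuinely \emph{independent of the stepsizes} requires invoking uniform moment bounds that themselves must be shown stepsize-independent (they depend on the total time $K\alpha_0/2$, not on individual $\alpha_k$), and one must be careful that the cyclical schedule — which repeatedly returns $\alpha_k$ to its maximum $\alpha_0$ — does not spoil the contraction, since the Gronwall factor $\prod_k(1+c\alpha_k) \le \exp(c\sum_k\alpha_k) = \exp(cK\alpha_0/2)$ is exactly what feeds the $(6 + C_2 K\alpha_0/2)^{1/2}$ prefactor and stays controlled only because $\sum_k \alpha_k$ is what it is. Verifying the exact closed-form sums $\sum \alpha_k^2 = 3\alpha_0^2K/8$ (and the analogous cubic sum) from the cosine schedule is routine but is where the specific numeric constants $3/8$, $3/16$ are pinned down.
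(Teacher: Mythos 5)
Your decomposition through the law of the diffusion at cumulative time $\sum_{k=1}^K\alpha_k=\tfrac{K\alpha_0}{2}$, and the exponential-ergodicity bound $C_3\exp(-K\alpha_0/(2C_4))$ for that piece, match the paper exactly, as do the schedule identities $\sum_k\alpha_k=K\alpha_0/2$ and $\sum_k\alpha_k^2=3\alpha_0^2K/8$. The gap is in how you control the discretization term $W_2(\mu_K,\nu_{\sum_k\alpha_k})$. You propose a synchronous coupling with a Gronwall recursion $\mathbb{E}\|\theta_k^{\mathrm{alg}}-\theta_{t_k}\|^2\le(1+c\alpha_k)\,\mathbb{E}\|\theta_{k-1}^{\mathrm{alg}}-\theta_{t_{k-1}}\|^2+(\text{local error})$, and you assert that the unrolled factor $\prod_k(1+c\alpha_k)\le\exp(cK\alpha_0/2)$ ``is exactly what feeds the $(6+C_2K\alpha_0/2)^{1/2}$ prefactor.'' It is not: $\exp(cK\alpha_0/2)$ is exponentially larger than $(6+C_2K\alpha_0/2)^{1/2}$, and under Assumption~\ref{ass:ass1} (smoothness plus dissipativity only, no strong convexity) there is no contraction available to cancel it. A synchronous-coupling argument in this setting yields a bound with an $e^{cK\alpha_0}$ prefactor and only a single square root, so it can produce neither the polynomial prefactor nor the fourth-root term in the statement; your explanation of the $1/4$ power as an ``artifact of mixed orders'' is not an actual mechanism.

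What the paper does instead (following Lemma 3.6 of \cite{raginsky2017non}) is bound the KL divergence between the path measure of a continuous-time interpolation of the iterates and that of an auxiliary diffusion via a Girsanov computation, $D_{KL}=\tfrac14\int_0^{S_K}\mathbb{E}\|\nabla U(\underline{\theta}(s))-G_s(\underline{\theta}(s))\|^2\,\mathrm{d}s$, split into a drift-discretization piece accumulating as $C_1\sum_k\alpha_k^2=C_1\cdot 3\alpha_0^2K/8$ and a gradient-noise piece accumulating as $\sigma C_0\sum_k\alpha_k=\sigma C_0 K\alpha_0/2$, and then converts KL to $W_2$ with the weighted transportation-cost inequality of \cite{bolley2005weighted}: $W_2\le(12+C_2\sum_k\alpha_k)^{1/2}\,(D_{KL}^{1/2}+D_{KL}^{1/4})$. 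That inequality --- not Gronwall --- is the source of both the $(6+C_2K\alpha_0/2)^{1/2}$ prefactor (which comes from an exponential second-moment bound growing linearly in elapsed time) and the $D_{KL}^{1/4}$ term. Without this KL-to-Wasserstein step, or a strong-convexity assumption you are not given, your argument does not close.
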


\begin{remark}
    $\RN{1})$ The bound is decomposed into two parts: the first part measures convergence speed of exact solution to the stationary distribution, {\it i.e.}, $\nu_{\sum_k\alpha_k}$ to $\nu_{\infty}$; the second part measures the numerical error, {\it i.e.}, between $\mu_K$ and $\nu_{\sum_k\alpha_k}$. 
    $\RN{2})$ The overall bound offers a same order of dependency on $K$ as in standard SGLD (please see the bound for SGLD in Section~\ref{app:sgld} of the appendix. See also \cite{raginsky2017non}). 
    $\RN{3})$ If one imposes stricter assumptions such as in the convex case, the bound can be further improved. Specific bounds are derived in the appendix. We did not consider this case due to the discrepancy from real applications.
\end{remark}

\section{Experiments}\label{sec: exp}

We demonstrate cSG-MCMC on several tasks, including a synthetic multimodal distribution (Section \ref{sec: exp_synthetic}), image classification on Bayesian neural networks (Section \ref{sec: bnn}) and uncertainty estimation in Section~\ref{sec:uncertainty}. We also demonstrate cSG-MCMC can improve the estimate efficiency for uni-modal distributions using Bayesian logistic regression in Section \ref{sec:exp_blr} in the appendix.
We choose SLGD and SGHMC as the representative baseline algorithms. Their cyclical counterpart are called cSGLD and cSGHMC, respectively.

\subsection{Synthetic multimodal data}\label{sec: exp_synthetic}

\begin{wrapfigure}{R}{0.50\textwidth}
\begin{minipage}{0.50\textwidth}
	\begin{tabular}{c c c}		
		\hspace{-4mm}
		\includegraphics[width=2.2cm]{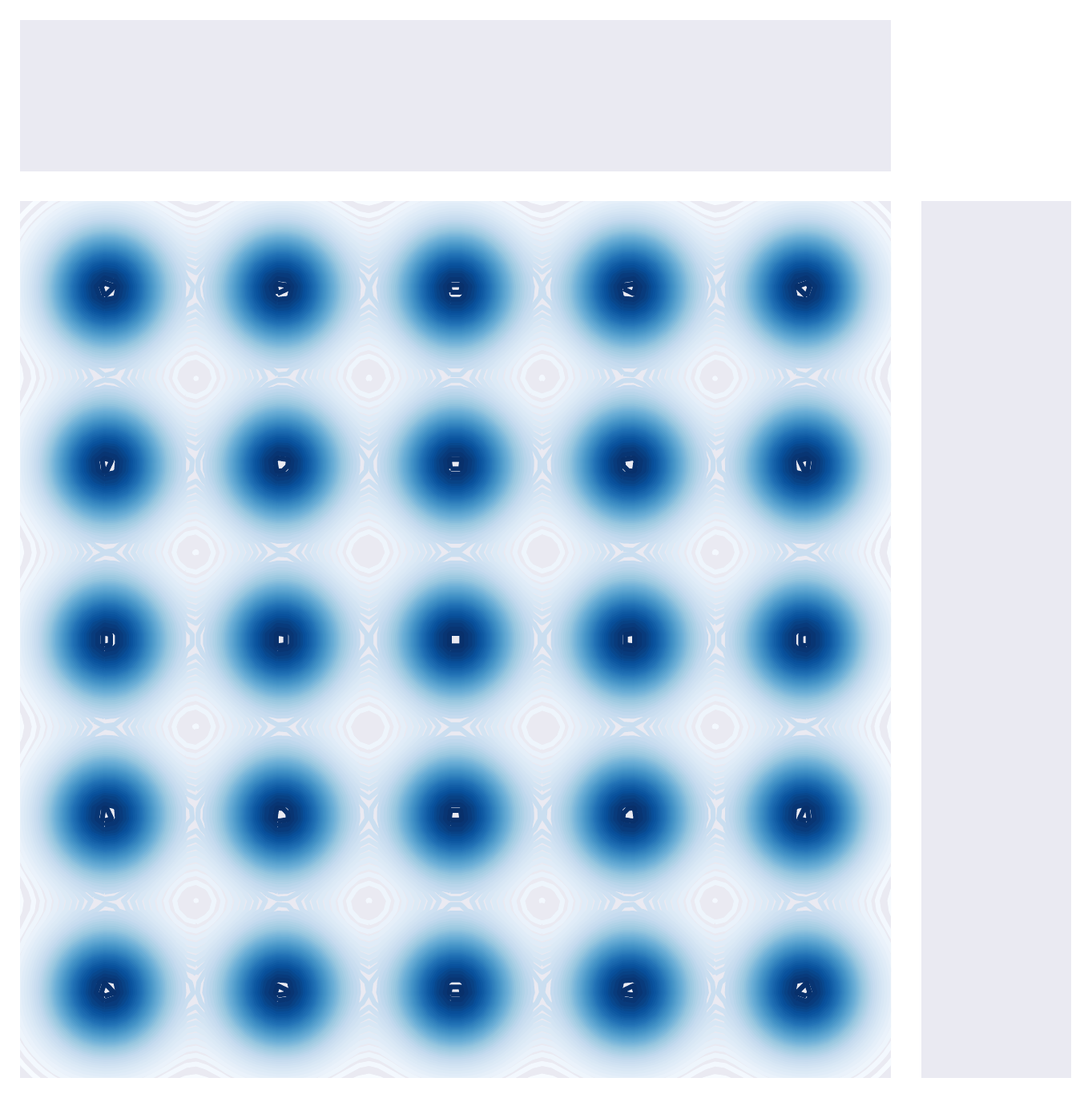}  &
		\hspace{-4mm}
		\includegraphics[width=2.2cm]{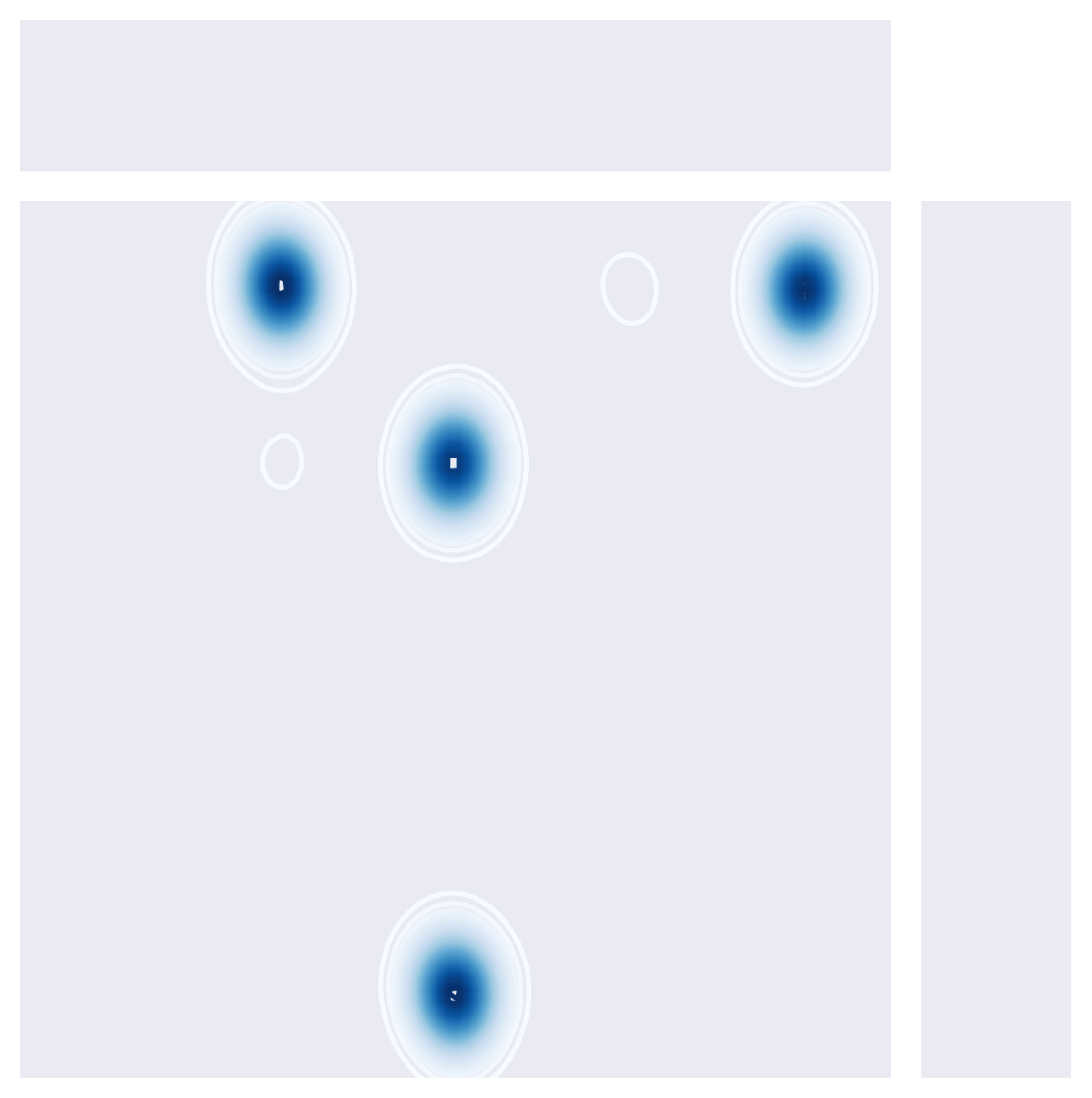}  &
		\hspace{-4mm}
		\includegraphics[width=2.2cm]{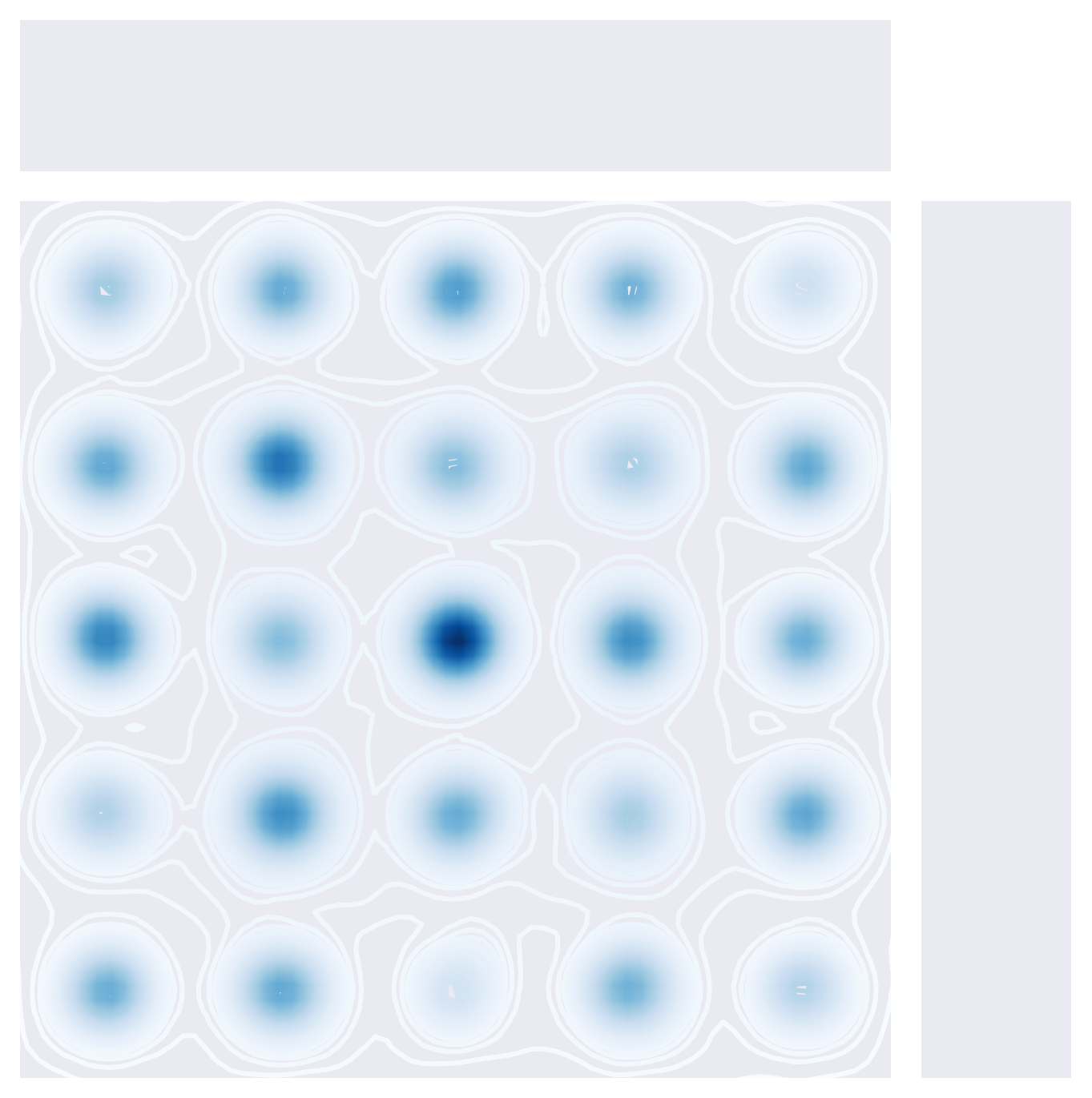}  \\	
		(a) Target \vspace{-0mm} & \hspace{-4mm}		
		(b) SGLD \vspace{-0mm} & \hspace{-4mm}
		(c) cSGLD \hspace{-0mm} \hspace{-2mm}\\
	\end{tabular}
	\caption{Sampling from a mixture of 25 Gaussians shown in (a) for the parallel setting. With a budget of $50\text{k} \! \times \! 4 \! = \!\! 200\text{k}$ samples, traditional SGLD  in (b) has only discovered 4 of the 25 modes, while our cSGLD in (c) has fully explored the distribution.}
	\label{fig:synthetic}
\end{minipage}
\end{wrapfigure}

We first demonstrate the ability of cSG-MCMC for sampling from a multi-modal distribution on a 2D mixture of 25 Gaussians.
Specifically, we compare cSGLD with SGLD in two setting: (1) parallel running with 4 chains and (2) running with a single chain, respectively. Each chain runs for 50k iterations. 
The stepsize schedule of SGLD is $\alpha_k \propto 0.05k^{-0.55}$. 
In cSGLD, we set $M=30$ and the initial stepsize $\alpha_0=0.09$. The proportion of exploration stage $\beta=\frac{1}{4}$. Fig \ref{fig:synthetic} shows the estimated density using sampling results for SGLD and cSGLD in the parallel setting. We observed that SGLD gets trapped in the local modes, depending on the initial position. In any practical time period, SGLD could only characterize partial distribution.
In contrast, cSGLD is able to find and characterize all modes, regardless of the initial position. cSGLD leverages large step sizes to discover a new mode, and small step sizes to explore local modes. This result suggests cSGLD can be a significantly favourable choice in the non-asymptotic setting, for example only 50k iterations in this case. The single chain results and the quantitative results on mode coverage are reported in Section~\ref{sm_sec:toy} of the appendix. 

\subsection{Bayesian Neural Networks}\label{sec: bnn}
\begin{figure*}
	\vspace{-0mm}\centering
	\begin{tabular}{ccc}		
		\hspace{-4mm}
		\includegraphics[width=5.3cm]{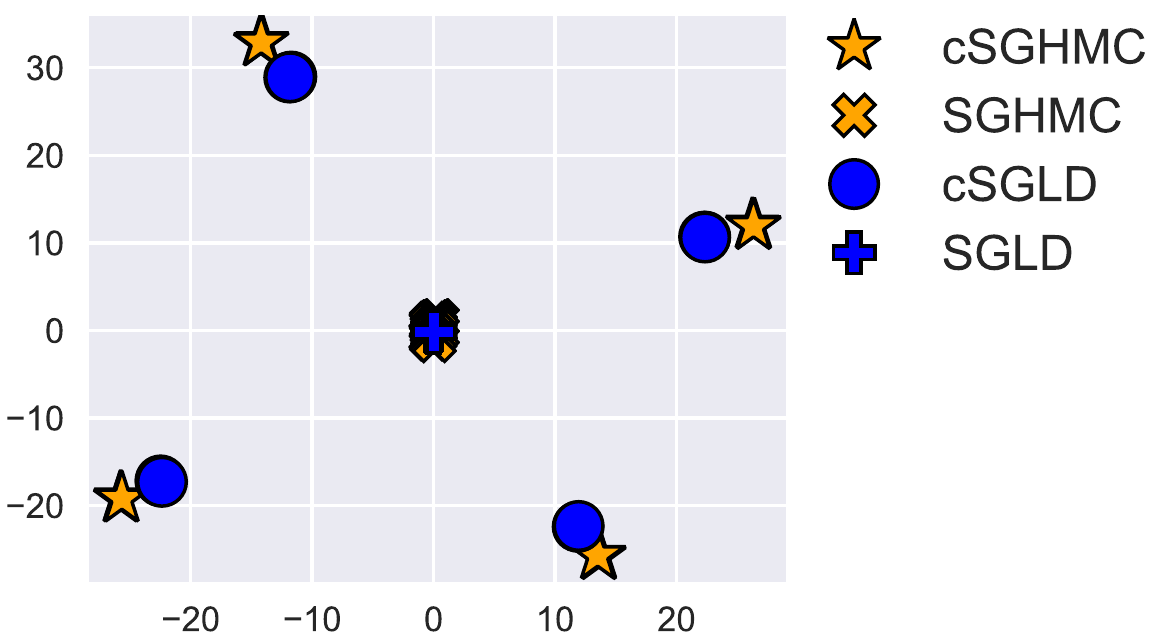}  &
        \hspace{-4mm}
        \includegraphics[width=3.9cm]{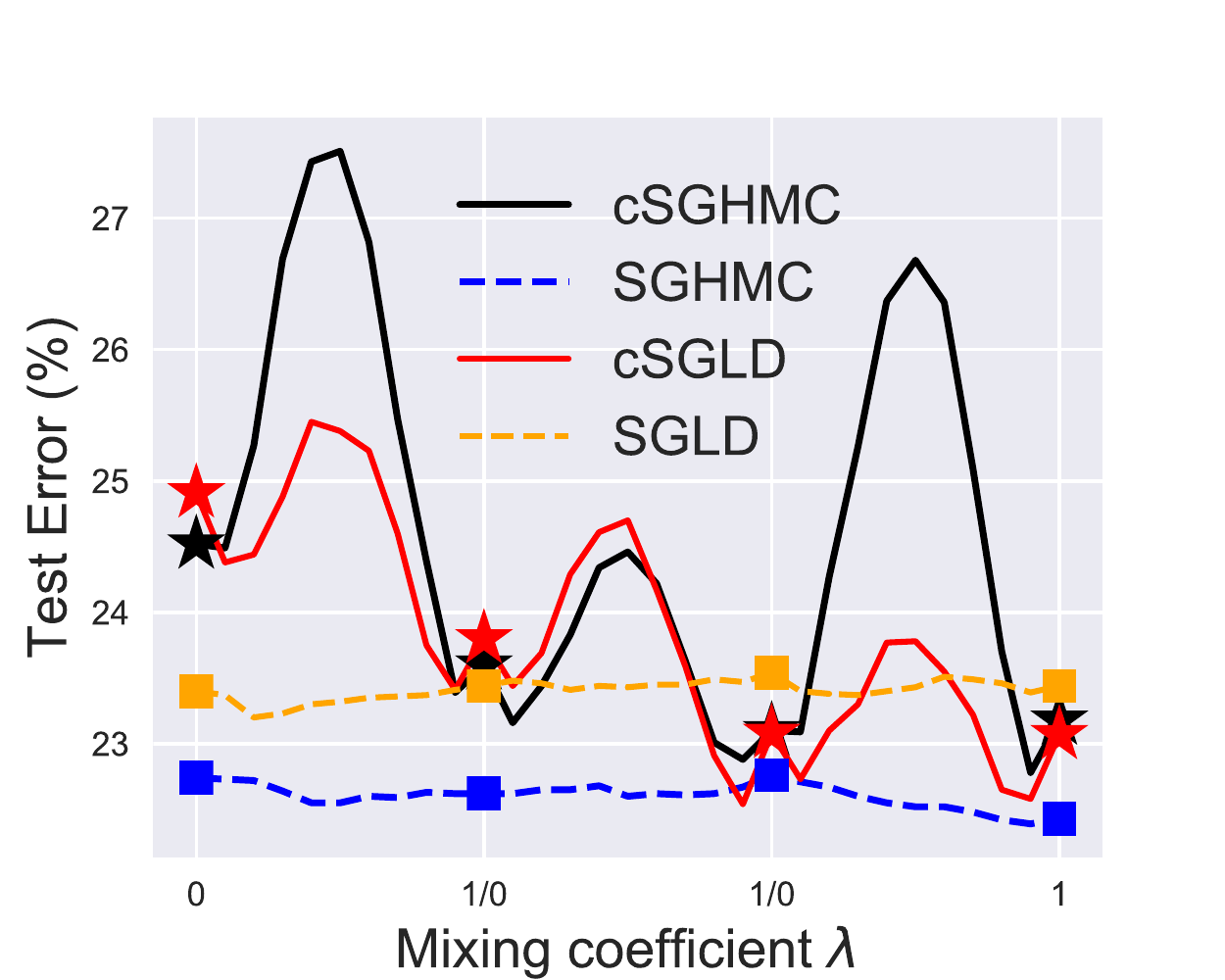}  &
        \hspace{-0mm}
        \includegraphics[width=3.9cm]{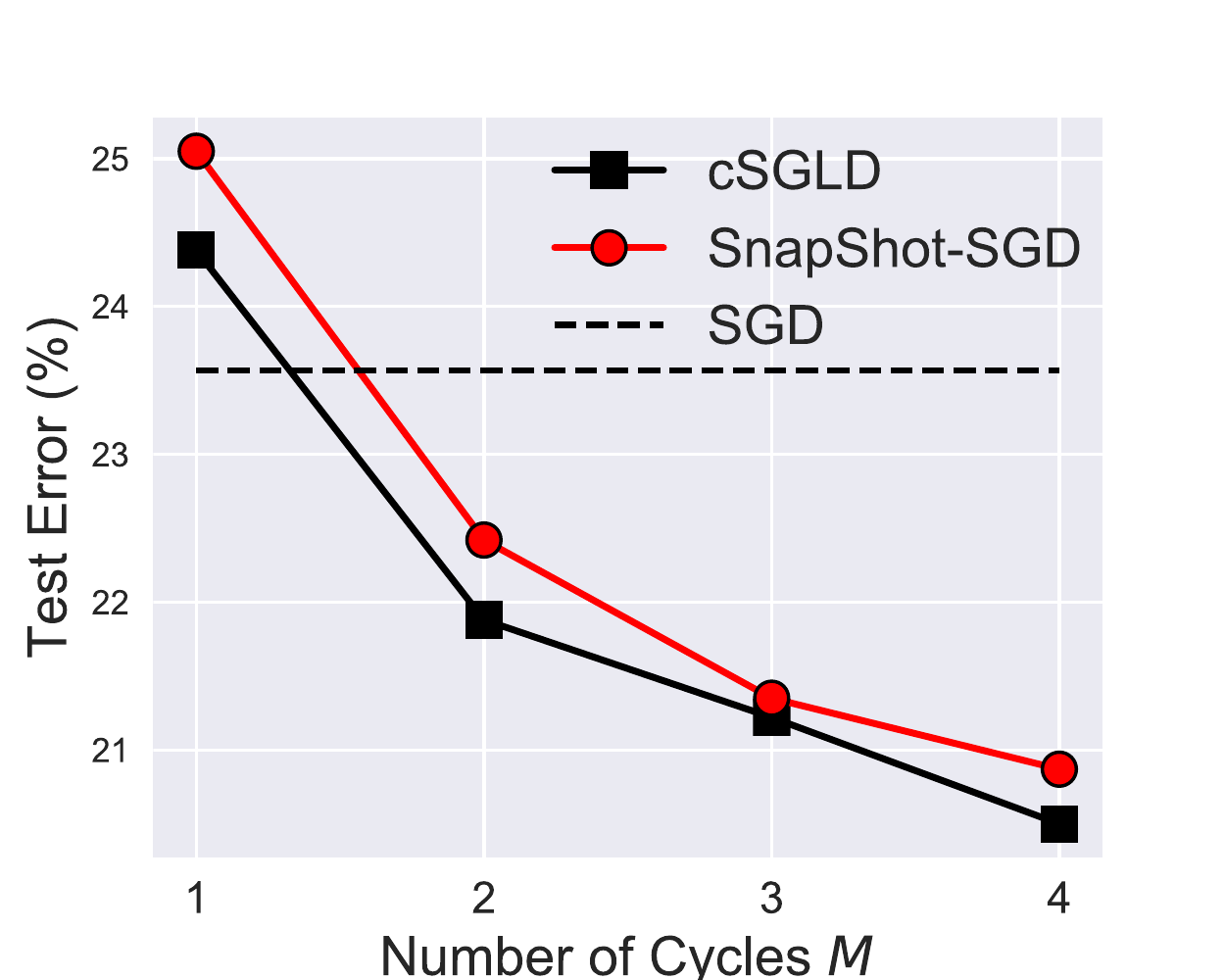} \\
		\hspace{-15mm}
        (a) MDS \hspace{-0mm} &
		(b) Interpolation \hspace{-0mm} \hspace{-2mm}	&
        (c) Comparison
	\end{tabular}
	\caption{Results of cSG-MCMC with DNNs on the CIFAR-100 dataset. (a) MDS visualization in weight space: cSG-MCMC show larger distance than traditional schedules. (b) Testing errors ($\%$) on the path of two samples: cSG-MCMC shows more varied performance. (c) Testing errors ($\%$) as a function of the number of cycles $M$: cSGLD yields consistently lower errors.}
	\label{fig:bnn}
\end{figure*}

We demonstrate the effectiveness of cSG-MCMC on Bayesian neural networks for classification on CIFAR-10 and CIFAR-100. We compare with 
$(\RN{1})$ traditional SG-MCMC;
$(\RN{2})$ traditional stochastic optimization methods, including stochastic gradient descent (SGD) and stochastic gradient descent with momentum (SGDM); and  $(\RN{3})$ \emph{Snapshot}: a stochastic optimization ensemble method method with a the cyclical stepsize schedule 
~\citep{huang2017snapshot}. We use a ResNet-18 \citep{he2016deep} and run all algorithms for 200 epochs. We report the test errors averaged over 3 runs, and the standard error ($\pm$) from the mean predictor.

We set $M=4$ and $\alpha_0=0.5$ for cSGLD, cSGHMC and Snapshot. The proportion hyper-parameter $\beta=$0.8 and 0.94 for CIFAR-10 and CIFAR-100, respectively. 
We collect 3 samples per cycle. In practice, we found that the collected samples share similarly high likelihood for DNNs, thus one may simply set the normalizing term $w_i$ in ~\eqref{f_approx} to be the same for faster testing.

We found that \emph{tempering} helps improve performance for Bayesian inference with neural networks. 
Tempering for SG-MCMC was first used  by~\citet{li2016preconditioned} as a practical technique for neural network training for fast convergence in limited time\footnote{https://github.com/ChunyuanLI/pSGLD/issues/2}.  
We simply use the prescribed temperature of \cite{li2016preconditioned} without tuning, but better results of the sampling methods can be achieved by tuning the temperature. More details are in Appendix~\ref{sec:tempering}. We hypothesize that tempering helps due to the overparametrization of neural networks. Tempering enables one to leverage the inductive biases of the network, while representing the belief that the model capacity can be misspecified. In work on \emph{Safe Bayes}, also known as \emph{generalized} and \emph{fractional} Bayesian inference, tempered posteriors are well-known to help under misspecification \citep[e.g.,][]{barron1991minimum,de2019safe, grunwald2017inconsistency}.

For the traditional SG-MCMC methods, we found that noise injection early in training hurts convergence. To make these baselines as competitive as possible, we thus avoid noise injection for the first 150 epochs of training (corresponding to the zero temperature limit of SGLD and SGHMC), and resume SGMCMC as usual (with noise) for the last 50 epochs. This scheme is similar to the exploration and sampling stages within one cycle of cSG-MCMC. We collect 20 samples for the MCMC methods and average their predictions in testing.

\begin{wraptable}{R}{0.48\textwidth}
\begin{minipage}{0.48\textwidth}
\centering
\begin{adjustbox}{scale=.92,tabular=c|cc}
\hline
&{\bf CIFAR-10}  &{\bf CIFAR-100} \\
\hline 
SGD  &5.29$\pm$0.15      &23.61$\pm$0.09 \\
SGDM  &5.17$\pm$0.09       &22.98$\pm$0.27 \\
Snapshot-SGD    &4.46$\pm$0.04          &20.83$\pm$0.01 \\
Snapshot-SGDM    &4.39$\pm$0.01          &20.81$\pm$0.10 \\ \hline
SGLD	&5.20$\pm$0.06	&23.23$\pm$0.01		\\
cSGLD	&4.29$\pm$0.06	&20.55$\pm$0.06		\\
 \hline
SGHMC	&4.93$\pm$0.1	&22.60$\pm$0.17		\\
cSGHMC &\textbf{4.27}$\pm$0.03 &\textbf{20.50}$\pm$0.11\\
\hline
\end{adjustbox}
\caption{Comparison of test error (\%) between cSG-MCMC with non-parallel algorithms. cSGLD and cSGHMC yields lower errors than their optimization counterparts, respectively.}
\label{tab:CIFAR}
\end{minipage}
\end{wraptable}

\paragraph{Testing Performance for Image Classification}
We report the testing errors in Table~\ref{tab:CIFAR} to compare with the non-parallel algorithms. 
Snapshot and traditional SG-MCMC reduce the testing errors on both datasets. Performance variance for these methods is also relatively small, due to the multiple networks in the Bayesian model average. 
Further, cSG-MCMC significantly outperforms Snapshot ensembles and the traditional SG-MCMC, demonstrating the importance of (1) capturing diverse modes compared to traditional SG-MCMC, and (2) capturing fine-scale characteristics of the distribution compared with Snapshot ensembles.

\paragraph{Diversity in Weight Space.} To further demonstrate our hypothesis that with a limited budget cSG-MCMC can find diverse modes, while traditional SG-MCMC cannot, we visualize the 12 samples we collect from cSG-MCMC and SG-MCMC on CIFAR-100 respectively using Multidimensional Scaling (MDS) in Figure~\ref{fig:bnn} (a). MDS uses a Euclidean distance metric between the weight of samples. We see that the samples of cSG-MCMC form 4 clusters, which means they are from 4 different modes in weight space. However, all samples from SG-MCMC only form one cluster, which indicates traditional SG-MCMC gets trapped in one mode and only samples from that mode. 

\paragraph{Diversity in Prediction.} 
To further demonstrate the samples from different cycles of cSG-MCMC provide diverse predictions we choose one sample from each cycle and linearly interpolate between two of them \citep{goodfellow2014qualitatively, huang2017snapshot}. Specifically, let $J(\theta)$ be the test error of a sample with parameter $\theta$. We compute the test error of the convex combination of two samples
$
J(\lambda\theta_1+(1-\lambda)\theta_2)
$,
where $\lambda \in [0,1]$. 

We linearly interpolate between two samples from neighboring chains of cSG-MCMC since they are the most likely to be similar. We randomly select 4 samples from SG-MCMC. If the samples are from the same mode, the test error of the linear interpolation of parameters will be relatively smooth, while if the samples are from different modes, the test error of the parameter interpolation will have a spike when $\lambda$ is between 0 and 1. 

We show the results of interpolation for cSG-MCMC and SG-MCMC on CIFAR-100 in Figure~\ref{fig:bnn} (b). We see a spike in the test error in each linear interpolation of parameters between two samples from neighboring chains in cSG-MCMC while the linear interpolation for samples of SG-MCMC is smooth. This result suggests that samples of cSG-MCMC from different chains are from different modes while samples of SG-MCMC are from the same mode.

Although the test error of a single sample of cSG-MCMC is worse than that of SG-MCMC shown in Figure~\ref{fig:bnn} (c), the ensemble of these samples significantly improves the test error, indicating that samples from different modes provide different predictions and make mistakes on different data points. Thus these diverse samples can complement each other, resulting in a lower test error, and demonstrating the advantage of exploring diverse modes using cSG-MCMC.

\begin{table*}[t!]
\begin{center}
%
\begin{adjustbox}{scale=.92,tabular=c||cc|cc||cc|cc}
\hline
{\bf Method} &  
\multicolumn{2}{c|}{Cyclical+Parallel} &  
\multicolumn{2}{c||}{Decreasing+Parallel} & 
\multicolumn{2}{c|}{Decreasing+Parallel} & 
\multicolumn{2}{c}{Cyclical+Single} \\ \hline
{\bf Cost} 
& \multicolumn{2}{c|}{200/800}    
&	\multicolumn{2}{c||}{200/800}  
&  \multicolumn{2}{c|}{100/400}	
& \multicolumn{2}{c}{200/200} \\ \hline
{\bf Sampler} 
& SGLD  & SGHMC
& SGLD  & SGHMC
& SGLD  & SGHMC
& SGLD  & SGHMC
\\ \hline
{\bf CIFAR-10}  
& 4.09 &   3.95  &   4.15  &  4.09 
& 5.11 &   4.52  &   4.29  &  4.27 
\\
{\bf CIFAR-100} 
& 19.37 &  19.19  & 20.29  & 19.72     	
& 21.16 &  20.82  & 20.55  & 20.50
\\
\hline 
\end{adjustbox}
\end{center}
\caption{Comparison of test error (\%) between cSG-MCMC with parallel algorithm ($M$=4 chains) on CIFAR-10 and CIFAR-100. The method is reported in the format of ``step-size schedule (cyclical or decreasing) + single/parallel chain''. The cost is reported in the format of ``$\#$epoch per chain / $\#$epoch used in all chains''. Note that a parallel algorithm with a single chain reduces to a non-parallel algorithm. Integration of the cyclical schedule with parallel algorithms provides lower testing errors.
}
\label{tab:CIFAR_multi_chain}
\end{table*}

\paragraph{Comparison to Parallel MCMC.} cSG-MCMC can be viewed as an economical alternative to parallel MCMC. We verify how closely cSG-MCMC can approximate the performance of parallel MCMC, but with more convenience and less computational expense.
We also note that we can improve parallel MCMC with the proposed cyclical stepsize schedule. 

We report the testing errors in Table~\ref{tab:CIFAR_multi_chain} to compare multiple-chain results. (1) Four chains used, each runs 200 epochs (800 epochs in total), the results are shown in the first 4 columns (Cyclical+Parallel vs Decreasing+Parallel). We see that cSG-MCMC variants provide lower errors than plain SG-MCMC. (2) We reduce the number of epochs ($\#$epoch) of parallel MCMC to 100 epoch each for decreasing stepsize schedule. The total cost is 400 epochs. We compare its performance with cyclical single chain (200 epochs in total) in the last 4 columns (Decreasing+Parallel vs Cyclical+Single). We see that the cyclical schedule running on a single chain performs best even with half the computational cost!
All the results indicate the importance of warm re-starts using the proposed cyclical schedule. For a given total cost budget, the proposed cSGMCMC is preferable to parallel sampling.

\paragraph{Comparison to Snapshot Optimization.}
We carefully compared with Snapshot, as our cSG-MCMC can be viewed as the sampling counterpart of the Snapshot optimization method. We plot the test error \wrt various number of cycles $M$ in Fig.~\ref{fig:bnn}. 
As $M$ increases, cSG-MCMC and Snapshot both improve. However, given a fixed $M$, cSG-MCMC yields substantially lower test errors than Snapshot. This result is due to the ability of cSG-MCMC to better characterize the local distribution of modes: Snapshot provides a singe minimum per cycle, while cSG-MCMC fully exploits the mode with more samples, which could provide weight uncertainty estimate and avoid over-fitting.

\begin{wraptable}{R}{0.50\textwidth}
\begin{minipage}{0.50\textwidth}
\centering
\begin{adjustbox}{scale=.92,tabular=c|ccc}
\hline
&{\bf NLL} $\downarrow$  &{\bf Top1} $\uparrow$ &{\bf Top5} $\uparrow$\\
\hline 
SGDM  & 0.9595   &  76.046 & 92.776
\\
Snapshot-SGDM  &  0.8941  & 77.142 & 93.344
\\ 
SGHMC	& 0.9308	& 76.274 &	92.994	\\
cSGHMC &\textbf{0.8882} &77.114 & 93.524 \\
\hline
\end{adjustbox}
\caption{Comparison on the testing set  of ImageNet. cSGHMC yields lower testing NLL than Snapshot and SGHMC.}
\label{tab:imagenet}
\end{minipage}
\end{wraptable}

\paragraph{Results on ImageNet.} We further study different learning algorithms on  a large-scale dataset, ImageNet. ResNet-50 is used as the architecture, and 120 epochs for each run. The results on the testing set are summarized in Table~\ref{tab:imagenet}, including NLL, Top1 and Top5 accuracy ($\%$), respectively. 3 cycles are considered for both cSGHMC and Snapshot, and we collect 3 samples per cycle. We see that cSGHMC yields the lowest testing NLL, indicating that the cycle schedule is an effective technique to explore the parameter space, and diversified samples can help prevent over-fitting.

\subsection{Uncertainty Evaluation}\label{sec:uncertainty}

\begin{wrapfigure}{R}{6.8cm}
\centering
\includegraphics[width=2.5in]{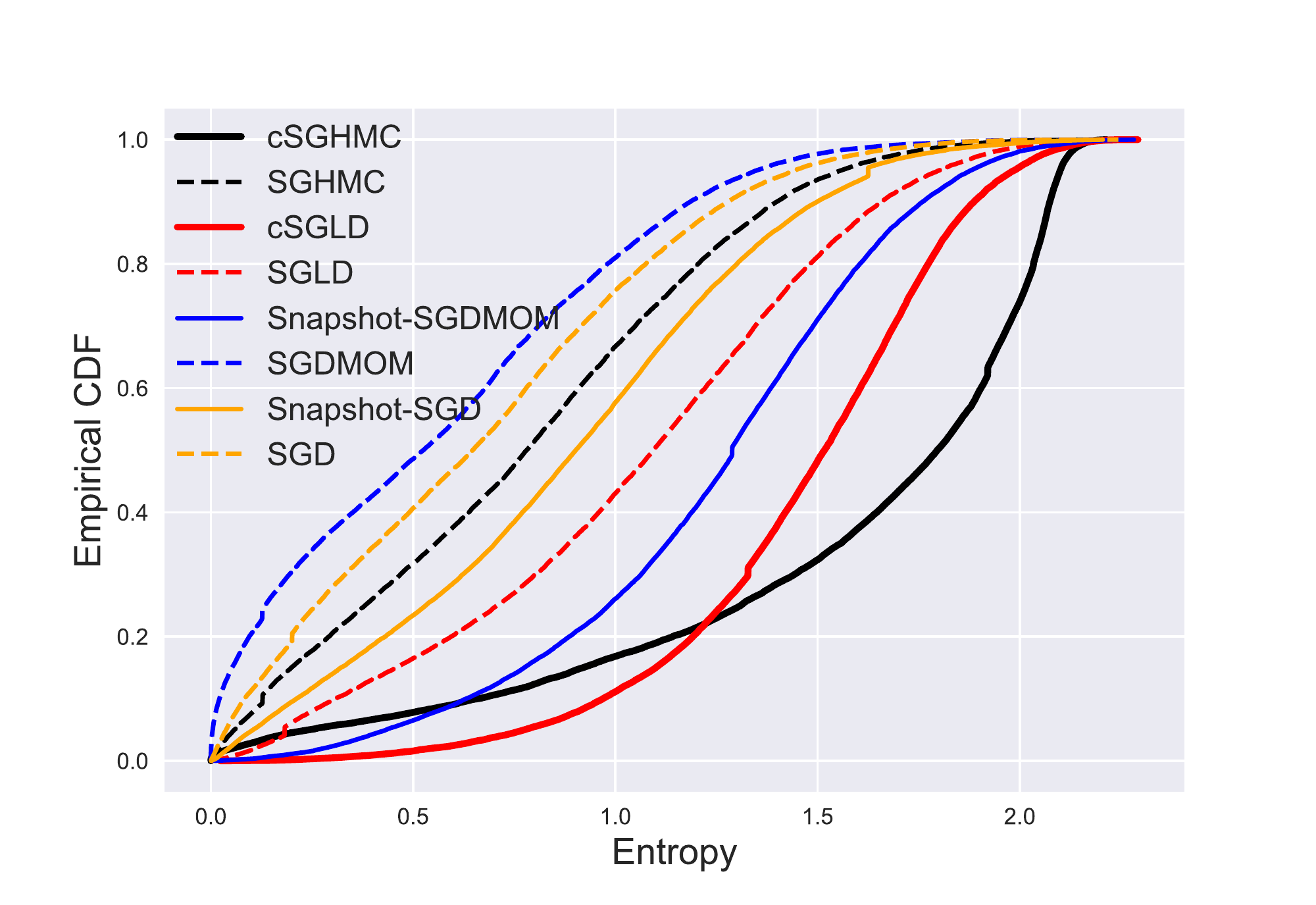}
\caption{Empirical CDF for the entropy of the predictive distribution on notMNIST dataset. cSGLD and cSGHMC show lower probability for the low entropy estimate than other algorithms.}
\label{exp:nm}
\end{wrapfigure}

To demonstrate how predictive uncertainty benefits from exploring multiple modes in the posterior of neural network weights, we consider the task of uncertainty estimation for out-of-distribution samples~\citep{lakshminarayanan2017simple}. We train a three-layer MLP model on the standard MNIST train dataset until convergence using different algorithms, and estimate the entropy of the predictive distribution on the notMNIST dataset \citep{notmnist}.
Since the samples from the notMNIST dataset belong to the unseen classes, ideally the predictive distribution of the trained model should be uniform over the notMNIST digits, which gives the maximum entropy. 

In Figure~\ref{exp:nm}, we plot the  empirical CDF for the entropy of the predictive distributions on notMNIST. 
We see that the uncertainty estimates from cSGHMC and cSGLD are better than the other methods, since the probability of a low entropy prediction is overall lower.
cSG-MCMC algorithms explore more modes in the weight space, each mode characterizes a meaningfully different representation of MNIST data. When testing on the out-of-distribution dataset (notMNIST), each mode can provide different predictions over the label space, leading to more reasonable uncertainty estimates.   
Snapshot achieves less entropy than cSG-MCMC, since it represents each mode with a single point. 

The traditional SG-MCMC methods also provide better uncertainty estimation compared to their optimization counterparts, because they characterize a local region of the parameter space, rather than a single point. cSG-MCMC can be regarded as a combination of these two worlds: a wide coverage of many modes in Snapshot, and fine-scale characterization of local regions in SG-MCMC.

\section{Discussion}

We have proposed cyclical SG-MCMC methods to automatically explore complex multimodal distributions. Our approach is particularly compelling for Bayesian deep learning, which involves rich multimodal parameter posteriors corresponding to meaningfully different representations. We have also shown that our cyclical methods explore unimodal distributions more efficiently. These results are in accordance with theory we developed to show that cyclical SG-MCMC will converge faster to samples from a stationary distribution in general settings.
Moreover, we show cyclical SG-MCMC methods provide more accurate uncertainty estimation, by capturing more diversity in the hypothesis space corresponding to settings of model parameters.

While MCMC was once the gold standard for inference with neural networks, it is now rarely used in modern deep learning. We hope that this paper will help renew interest in MCMC for posterior inference in deep learning. Indeed, MCMC is uniquely positioned to explore the rich multimodal posterior distributions of modern neural networks, which can lead to improved accuracy, reliability, and uncertainty representation.

\subsection*{Acknowledgements}
AGW was supported by an Amazon Research Award, Facebook Research, NSF I-DISRE 193471, NIH R01 DA048764-01A1, NSF IIS-1563887, and NSF IIS-1910266.

\bibliography{references}

\begin{thebibliography}{57}
\providecommand{\natexlab}[1]{#1}
\providecommand{\url}[1]{\texttt{#1}}
\expandafter\ifx\csname urlstyle\endcsname\relax
  \providecommand{\doi}[1]{doi: #1}\else
  \providecommand{\doi}{doi: \begingroup \urlstyle{rm}\Url}\fi

\bibitem[Ahn et~al.(2014)Ahn, Shahbaba, and Welling]{ahn2014distributed}
Sungjin Ahn, Babak Shahbaba, and Max Welling.
\newblock Distributed stochastic gradient {MCMC}.
\newblock In \emph{ICML}, 2014.

\bibitem[Auer et~al.(1996)Auer, Herbster, and Warmuth]{auer1996exponentially}
Peter Auer, Mark Herbster, and Manfred~K Warmuth.
\newblock Exponentially many local minima for single neurons.
\newblock In \emph{NIPS}, 1996.

\bibitem[Bardenet et~al.(2014)Bardenet, Doucet, and
  Holmes]{bardenet2014towards}
R{\'e}mi Bardenet, Arnaud Doucet, and Chris Holmes.
\newblock Towards scaling up {M}arkov {C}hain {M}onte {C}arlo: an adaptive
  subsampling approach.
\newblock In \emph{ICML}, 2014.

\bibitem[Barron \& Cover(1991)Barron and Cover]{barron1991minimum}
Andrew~R Barron and Thomas~M Cover.
\newblock Minimum complexity density estimation.
\newblock \emph{IEEE transactions on information theory}, 37\penalty0
  (4):\penalty0 1034--1054, 1991.

\bibitem[Blundell et~al.(2015)Blundell, Cornebise, Kavukcuoglu, and
  Wierstra]{blundell2015weight}
Charles Blundell, Julien Cornebise, Koray Kavukcuoglu, and Daan Wierstra.
\newblock Weight uncertainty in neural networks.
\newblock \emph{ICML}, 2015.

\bibitem[Bolley \& Villani(2005)Bolley and Villani]{bolley2005weighted}
Fran{\c{c}}ois Bolley and C{\'e}dric Villani.
\newblock Weighted csisz{\'a}r-kullback-pinsker inequalities and applications
  to transportation inequalities.
\newblock In \emph{Annales de la Faculte des sciences de Toulouse}.
  Universit{\'e} Paul Sabatier, 2005.

\bibitem[Bulatov(2011)]{notmnist}
Yaroslav Bulatov.
\newblock {Not MNIST Dataset}.
\newblock 2011.
\newblock \url{http://yaroslavvb.blogspot.com/2011/09/notmnist-dataset.html}.

\bibitem[Chen et~al.(2015)Chen, Ding, and Carin]{chen2015convergence}
Changyou Chen, Nan Ding, and Lawrence Carin.
\newblock On the convergence of stochastic gradient {MCMC} algorithms with
  high-order integrators.
\newblock In \emph{NIPS}, 2015.

\bibitem[Chen et~al.(2016{\natexlab{a}})Chen, Carlson, Gan, Li, and
  Carin]{chen2016bridging}
Changyou Chen, David Carlson, Zhe Gan, Chunyuan Li, and Lawrence Carin.
\newblock Bridging the gap between stochastic gradient {MCMC} and stochastic
  optimization.
\newblock In \emph{Artificial Intelligence and Statistics}, 2016{\natexlab{a}}.

\bibitem[Chen et~al.(2018)Chen, Zhang, Wang, Li, and Chen]{chen2018unified}
Changyou Chen, Ruiyi Zhang, Wenlin Wang, Bai Li, and Liqun Chen.
\newblock A unified particle-optimization framework for scalable bayesian
  sampling.
\newblock \emph{arXiv preprint arXiv:1805.11659}, 2018.

\bibitem[Chen et~al.(2016{\natexlab{b}})Chen, Seita, Pan, and
  Canny]{chen2016efficient}
Haoyu Chen, Daniel Seita, Xinlei Pan, and John Canny.
\newblock An efficient minibatch acceptance test for {M}etropolis-{H}astings.
\newblock \emph{arXiv preprint arXiv:1610.06848}, 2016{\natexlab{b}}.

\bibitem[Chen et~al.(2014)Chen, Fox, and Guestrin]{chen2014stochastic}
Tianqi Chen, Emily Fox, and Carlos Guestrin.
\newblock Stochastic gradient {H}amiltonian {M}onte {C}arlo.
\newblock In \emph{ICML}, 2014.

\bibitem[Chiang \& Hwang(1987)Chiang and Hwang]{Chiang:1987:DGO:37121.37136}
Tzuu-Shuh Chiang and Chii-Ruey Hwang.
\newblock Diffusion for global optimization in rn.
\newblock \emph{SIAM J. Control Optim.}, pp.\  737--753, 1987.
\newblock ISSN 0363-0129.
\newblock \doi{10.1137/0325042}.
\newblock URL \url{http://dx.doi.org/10.1137/0325042}.

\bibitem[Choromanska et~al.(2015)Choromanska, Henaff, Mathieu, Arous, and
  LeCun]{choromanska2015loss}
Anna Choromanska, Mikael Henaff, Michael Mathieu, G{\'e}rard~Ben Arous, and
  Yann LeCun.
\newblock The loss surfaces of multilayer networks.
\newblock In \emph{Artificial Intelligence and Statistics}, 2015.

\bibitem[Dalalyan \& Karagulyan(2019)Dalalyan and Karagulyan]{DALALYAN2019}
Arnak~S. Dalalyan and Avetik Karagulyan.
\newblock User-friendly guarantees for the langevin monte carlo with inaccurate
  gradient.
\newblock \emph{Stochastic Processes and their Applications}, 2019.
\newblock ISSN 0304-4149.
\newblock \doi{https://doi.org/10.1016/j.spa.2019.02.016}.
\newblock URL
  \url{http://www.sciencedirect.com/science/article/pii/S0304414918304824}.

\bibitem[de~Heide et~al.(2019)de~Heide, Kirichenko, Mehta, and
  Gr{\"u}nwald]{de2019safe}
Rianne de~Heide, Alisa Kirichenko, Nishant Mehta, and Peter Gr{\"u}nwald.
\newblock Safe-bayesian generalized linear regression.
\newblock \emph{arXiv preprint arXiv:1910.09227}, 2019.

\bibitem[Ding et~al.(2014)Ding, Fang, Babbush, Chen, Skeel, and
  Neven]{ding2014bayesian}
Nan Ding, Youhan Fang, Ryan Babbush, Changyou Chen, Robert~D Skeel, and Hartmut
  Neven.
\newblock Bayesian sampling using stochastic gradient thermostats.
\newblock In \emph{NIPS}, 2014.

\bibitem[Fortunato et~al.(2017)Fortunato, Blundell, and
  Vinyals]{fortunato2017bayesian}
Meire Fortunato, Charles Blundell, and Oriol Vinyals.
\newblock Bayesian recurrent neural networks.
\newblock \emph{arXiv preprint arXiv:1704.02798}, 2017.

\bibitem[Fu et~al.(2019)Fu, Li, Liu, Gao, Celikyilmaz, and
  Carin]{fu2019cyclical}
Hao Fu, Chunyuan Li, Xiaodong Liu, Jianfeng Gao, Asli Celikyilmaz, and Lawrence
  Carin.
\newblock Cyclical annealing schedule: A simple approach to mitigating {KL}
  vanishing.
\newblock \emph{NAACL}, 2019.

\bibitem[Gan et~al.(2016)Gan, Li, Chen, Pu, Su, and Carin]{gan2016scalable}
Zhe Gan, Chunyuan Li, Changyou Chen, Yunchen Pu, Qinliang Su, and Lawrence
  Carin.
\newblock Scalable {B}ayesian learning of recurrent neural networks for
  language modeling.
\newblock \emph{ACL}, 2016.

\bibitem[Garipov et~al.(2018)Garipov, Izmailov, Podoprikhin, Vetrov, and
  Wilson]{garipov2018loss}
Timur Garipov, Pavel Izmailov, Dmitrii Podoprikhin, Dmitry~P Vetrov, and
  Andrew~G Wilson.
\newblock Loss surfaces, mode connectivity, and fast ensembling of {DNN}s.
\newblock In \emph{Advances in Neural Information Processing Systems}, pp.\
  8789--8798, 2018.

\bibitem[Goodfellow et~al.(2014)Goodfellow, Vinyals, and
  Saxe]{goodfellow2014qualitatively}
Ian~J Goodfellow, Oriol Vinyals, and Andrew~M Saxe.
\newblock Qualitatively characterizing neural network optimization problems.
\newblock \emph{arXiv preprint arXiv:1412.6544}, 2014.

\bibitem[Green(1995)]{green1995reversible}
Peter~J Green.
\newblock Reversible jump {MCMC} computation and bayesian model determination.
\newblock \emph{Biometrika}, 82\penalty0 (4):\penalty0 711--732, 1995.

\bibitem[Gr{\"u}nwald et~al.(2017)Gr{\"u}nwald, Van~Ommen,
  et~al.]{grunwald2017inconsistency}
Peter Gr{\"u}nwald, Thijs Van~Ommen, et~al.
\newblock Inconsistency of bayesian inference for misspecified linear models,
  and a proposal for repairing it.
\newblock \emph{Bayesian Analysis}, 12\penalty0 (4):\penalty0 1069--1103, 2017.

\bibitem[He et~al.(2016)He, Zhang, Ren, and Sun]{he2016deep}
Kaiming He, Xiangyu Zhang, Shaoqing Ren, and Jian Sun.
\newblock Deep residual learning for image recognition.
\newblock In \emph{CVPR}, 2016.

\bibitem[Hern{\'a}ndez-Lobato \& Adams(2015)Hern{\'a}ndez-Lobato and
  Adams]{hernandez2015probabilistic}
Jos{\'e}~Miguel Hern{\'a}ndez-Lobato and Ryan Adams.
\newblock Probabilistic backpropagation for scalable learning of {B}ayesian
  neural networks.
\newblock In \emph{ICML}, 2015.

\bibitem[Hoffman \& Gelman(2014)Hoffman and Gelman]{hoffman2014no}
Matthew~D Hoffman and Andrew Gelman.
\newblock The no-u-turn sampler: adaptively setting path lengths in hamiltonian
  monte carlo.
\newblock \emph{Journal of Machine Learning Research}, 15\penalty0
  (1):\penalty0 1593--1623, 2014.

\bibitem[Huang et~al.(2017)Huang, Li, Pleiss, Liu, Hopcroft, and
  Weinberger]{huang2017snapshot}
Gao Huang, Yixuan Li, Geoff Pleiss, Zhuang Liu, John~E Hopcroft, and Kilian~Q
  Weinberger.
\newblock Snapshot ensembles: Train 1, get m for free.
\newblock \emph{ICLR}, 2017.

\bibitem[Keskar et~al.(2016)Keskar, Mudigere, Nocedal, Smelyanskiy, and
  Tang]{keskar2016large}
Nitish~Shirish Keskar, Dheevatsa Mudigere, Jorge Nocedal, Mikhail Smelyanskiy,
  and Ping Tak~Peter Tang.
\newblock On large-batch training for deep learning: Generalization gap and
  sharp minima.
\newblock \emph{arXiv preprint arXiv:1609.04836}, 2016.

\bibitem[Korattikara et~al.(2014)Korattikara, Chen, and
  Welling]{korattikara2014austerity}
Anoop Korattikara, Yutian Chen, and Max Welling.
\newblock Austerity in {MCMC} land: Cutting the {M}etropolis-{H}astings budget.
\newblock In \emph{ICML}, 2014.

\bibitem[Lakshminarayanan et~al.(2017)Lakshminarayanan, Pritzel, and
  Blundell]{lakshminarayanan2017simple}
Balaji Lakshminarayanan, Alexander Pritzel, and Charles Blundell.
\newblock Simple and scalable predictive uncertainty estimation using deep
  ensembles.
\newblock In \emph{NIPS}, 2017.

\bibitem[Li et~al.(2016{\natexlab{a}})Li, Chen, Carlson, and
  Carin]{li2016preconditioned}
Chunyuan Li, Changyou Chen, David~E Carlson, and Lawrence Carin.
\newblock Preconditioned stochastic gradient {L}angevin dynamics for deep
  neural networks.
\newblock In \emph{AAAI}, 2016{\natexlab{a}}.

\bibitem[Li et~al.(2016{\natexlab{b}})Li, Stevens, Chen, Pu, Gan, and
  Carin]{li2016learning}
Chunyuan Li, Andrew Stevens, Changyou Chen, Yunchen Pu, Zhe Gan, and Lawrence
  Carin.
\newblock Learning weight uncertainty with stochastic gradient {MCMC} for shape
  classification.
\newblock In \emph{CVPR}, 2016{\natexlab{b}}.

\bibitem[Liu et~al.(2019)Liu, Zhuo, and Zhu]{liu2019understanding}
Chang Liu, Jingwei Zhuo, and Jun Zhu.
\newblock Understanding mcmc dynamics as flows on the wasserstein space.
\newblock \emph{arXiv preprint arXiv:1902.00282}, 2019.

\bibitem[Loshchilov \& Hutter(2016)Loshchilov and Hutter]{loshchilov2016sgdr}
Ilya Loshchilov and Frank Hutter.
\newblock {Sgdr}: Stochastic gradient descent with warm restarts.
\newblock 2016.

\bibitem[Ma et~al.(2015)Ma, Chen, and Fox]{ma2015complete}
Yi-An Ma, Tianqi Chen, and Emily Fox.
\newblock A complete recipe for stochastic gradient {MCMC}.
\newblock In \emph{NIPS}, 2015.

\bibitem[MacKay(1992)]{mackay1992practical}
David~JC MacKay.
\newblock A practical bayesian framework for backpropagation networks.
\newblock \emph{Neural computation}, 1992.

\bibitem[Maddox et~al.(2019)Maddox, Izmailov, Garipov, Vetrov, and
  Wilson]{maddox2019simple}
Wesley~J Maddox, Pavel Izmailov, Timur Garipov, Dmitry~P Vetrov, and
  Andrew~Gordon Wilson.
\newblock A simple baseline for bayesian uncertainty in deep learning.
\newblock In \emph{Advances in Neural Information Processing Systems}, pp.\
  13132--13143, 2019.

\bibitem[Mattingly et~al.(2010)Mattingly, Stuart, and
  Tretyakov]{MattinglyST:JNA10}
J.~C. Mattingly, A.~M. Stuart, and M.~V. Tretyakov.
\newblock Construction of numerical time-average and stationary measures via
  {P}oisson equations.
\newblock \emph{SIAM J. NUMER. ANAL.}, 48\penalty0 (2):\penalty0 552--577,
  2010.

\bibitem[Neal(1996)]{neal1996_bayesian}
Radford~M Neal.
\newblock \emph{Bayesian learning for neural networks}.
\newblock New York: Springer-Verlag, 1996.

\bibitem[Neelakantan et~al.(2016)Neelakantan, Vilnis, Le, Sutskever, Kaiser,
  Kurach, and Martens]{neelakantan2015adding}
Arvind Neelakantan, Luke Vilnis, Quoc~V Le, Ilya Sutskever, Lukasz Kaiser,
  Karol Kurach, and James Martens.
\newblock Adding gradient noise improves learning for very deep networks.
\newblock \emph{ICLR workship}, 2016.

\bibitem[Nguyen et~al.(2017)Nguyen, Clune, Bengio, Dosovitskiy, and
  Yosinski]{nguyen2017plug}
Anh Nguyen, Jeff Clune, Yoshua Bengio, Alexey Dosovitskiy, and Jason Yosinski.
\newblock Plug \& play generative networks: Conditional iterative generation of
  images in latent space.
\newblock In \emph{Proceedings of the IEEE Conference on Computer Vision and
  Pattern Recognition}, pp.\  4467--4477, 2017.

\bibitem[Raftery et~al.(2006)Raftery, Newton, Satagopan, and
  Krivitsky]{raftery2006estimating}
Adrian~E Raftery, Michael~A Newton, Jaya~M Satagopan, and Pavel~N Krivitsky.
\newblock Estimating the integrated likelihood via posterior simulation using
  the harmonic mean identity.
\newblock 2006.

\bibitem[Raginsky et~al.(2017)Raginsky, Rakhlin, and
  Telgarsky]{raginsky2017non}
Maxim Raginsky, Alexander Rakhlin, and Matus Telgarsky.
\newblock Non-convex learning via stochastic gradient {L}angevin dynamics: a
  nonasymptotic analysis.
\newblock \emph{arXiv preprint arXiv:1702.03849}, 2017.

\bibitem[Saatchi \& Wilson(2017)Saatchi and Wilson]{saatchi2017bayesian}
Yunus Saatchi and Andrew~Gordon Wilson.
\newblock Bayesian {GAN}.
\newblock \emph{NIPS}, 2017.

\bibitem[Santambrogio(2017)]{Santambrogio2017}
Filippo Santambrogio.
\newblock {\{}Euclidean, metric, and Wasserstein{\}} gradient flows: an
  overview.
\newblock \emph{Bulletin of Mathematical Sciences}, 7\penalty0 (1):\penalty0
  87--154, Apr 2017.

\bibitem[Smith \& Topin(2017)Smith and Topin]{smith2017super}
Leslie~N Smith and Nicholay Topin.
\newblock Super-convergence: Very fast training of residual networks using
  large learning rates.
\newblock \emph{arXiv preprint arXiv:1708.07120}, 2017.

\bibitem[Teh et~al.(2016)Teh, Thiery, and Vollmer]{teh2016consistency}
Yee~Whye Teh, Alexandre~H Thiery, and Sebastian~J Vollmer.
\newblock Consistency and fluctuations for stochastic gradient {L}angevin
  dynamics.
\newblock \emph{The Journal of Machine Learning Research}, 2016.

\bibitem[VanDerwerken \& Schmidler(2013)VanDerwerken and
  Schmidler]{vanderwerken2013parallel}
Douglas~N VanDerwerken and Scott~C Schmidler.
\newblock Parallel {M}arkov {C}hain {M}onte {C}arlo.
\newblock \emph{arXiv preprint arXiv:1312.7479}, 2013.

\bibitem[Vollmer et~al.(2015)Vollmer, Zygalakis, and Teh]{VollmerZT:arxiv15}
S.~J. Vollmer, K.~C. Zygalakis, and Y.~W. Teh.
\newblock ({N}on-)asymptotic properties of stochastic gradient {L}angevin
  dynamics.
\newblock Technical Report arXiv:1501.00438, University of Oxford, UK, January
  2015.
\newblock URL \url{http://arxiv.org/abs/1501.00438}.

\bibitem[Vollmer et~al.(2016)Vollmer, Zygalakis, and Teh]{JMLR:v17:15-494}
Sebastian~J. Vollmer, Konstantinos~C. Zygalakis, and Yee~Whye Teh.
\newblock Exploration of the (non-)asymptotic bias and variance of stochastic
  gradient langevin dynamics.
\newblock \emph{Journal of Machine Learning Research}, 17\penalty0
  (159):\penalty0 1--48, 2016.

\bibitem[Welling \& Teh(2011)Welling and Teh]{welling2011bayesian}
Max Welling and Yee~W Teh.
\newblock Bayesian learning via stochastic gradient {L}angevin dynamics.
\newblock In \emph{ICML}, 2011.

\bibitem[Wilson(2020)]{wilson2020case}
Andrew~Gordon Wilson.
\newblock The case for bayesian deep learning.
\newblock \emph{arXiv preprint arXiv:2001.10995}, 2020.

\bibitem[Wilson \& Izmailov(2020)Wilson and Izmailov]{wilson2020bayesian}
Andrew~Gordon Wilson and Pavel Izmailov.
\newblock Bayesian deep learning and a probabilistic perspective of
  generalization.
\newblock \emph{arXiv preprint arXiv:2002.08791}, 2020.

\bibitem[Xu et~al.(2017)Xu, Chen, Zou, and Gu]{xu2017global}
Pan Xu, Jinghui Chen, Difan Zou, and Quanquan Gu.
\newblock Global convergence of {L}angevin dynamics based algorithms for
  nonconvex optimization.
\newblock \emph{arXiv preprint arXiv:1707.06618}, 2017.

\bibitem[{Zhang} et~al.(2018){Zhang}, {Zhang}, and {Chen}]{2018arXiv180901293Z}
Jianyi {Zhang}, Ruiyi {Zhang}, and Changyou {Chen}.
\newblock {Stochastic Particle-Optimization Sampling and the Non-Asymptotic
  Convergence Theory}.
\newblock \emph{arXiv e-prints}, art. arXiv:1809.01293, Sep 2018.

\bibitem[Zhang et~al.(2017)Zhang, Liang, and Charikar]{ZhangLC:COLT17}
Y.~Zhang, P.~Liang, and M.~Charikar.
\newblock A hitting time analysis of stochastic gradient {L}angevin dynamics.
\newblock In \emph{COLT}, 2017.

\end{thebibliography}
\bibliographystyle{iclr2020_conference}

\appendix
\newpage

\section{Experimental Results}

\subsection{Synthetic Multimodal Distribution}~\label{sm_sec:toy}
The density of the distribution is 
$$
F(x) = \sum_{i=1}^{25} \lambda \mathcal{N}(x|\mu_i,\Sigma),
$$
where $\lambda  =\frac{1}{25}$, $\mu=\{-4, -2, 0, 2, 4\}^{\top}\times\{-4, -2, 0, 2, 4\}$, $\Sigma=\tiny \begin{bmatrix}0.03&0\\0&0.03\end{bmatrix}$.

In Figure~\ref{sm_fig:synthetic}, we show the estimated density for SGLD and cSGLD in the non-parallel setting. 

\begin{figure}[h!]
	\vspace{-0mm}
	\centering
	\begin{tabular}{c c c}		
		\hspace{-4mm}
		\includegraphics[width=3.7cm]{figs/toy_gmm/gt.pdf}  &
		\hspace{-4mm}
		\includegraphics[width=3.7cm]{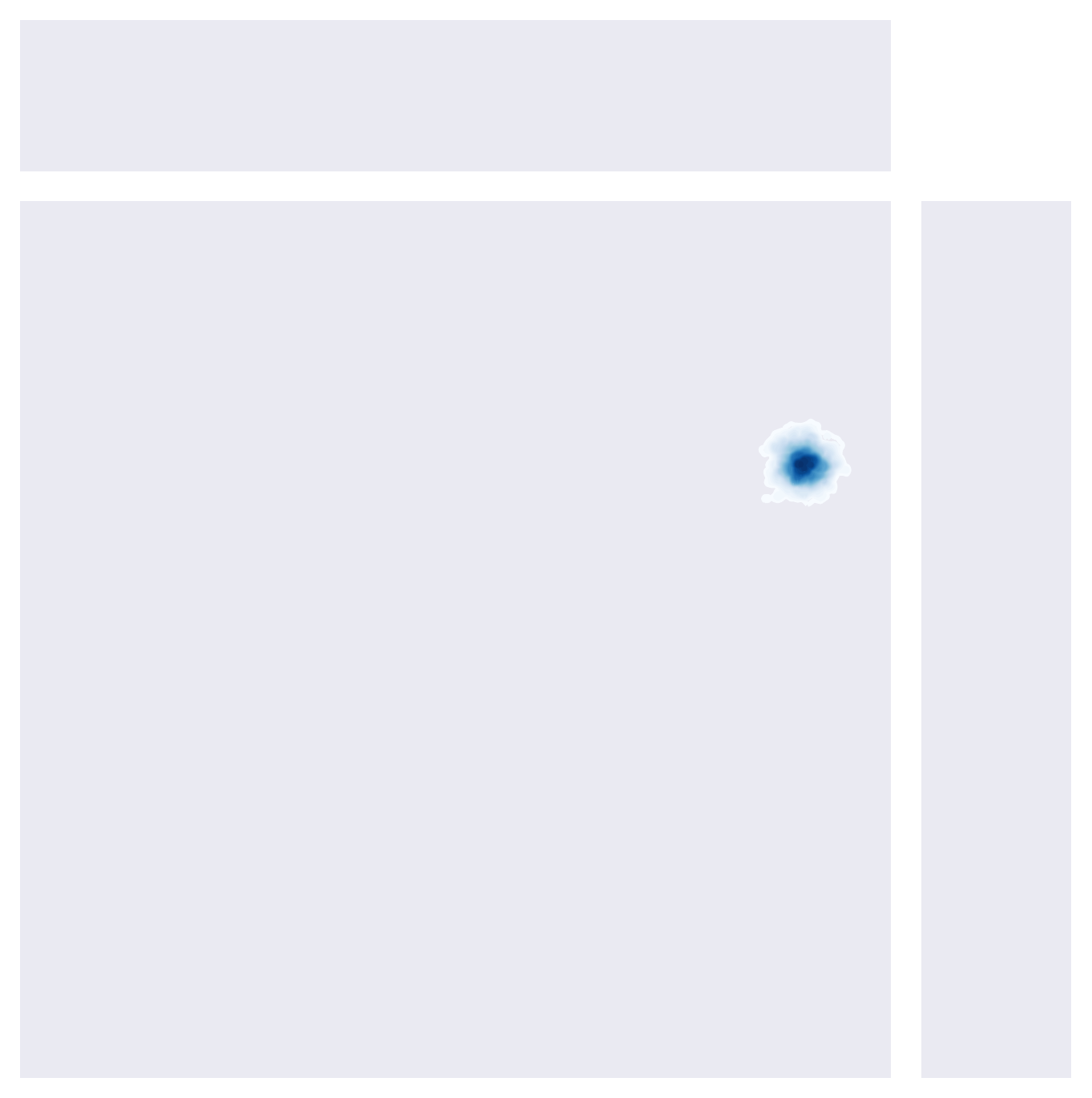}  &
		\hspace{-4mm}
		\includegraphics[width=3.7cm]{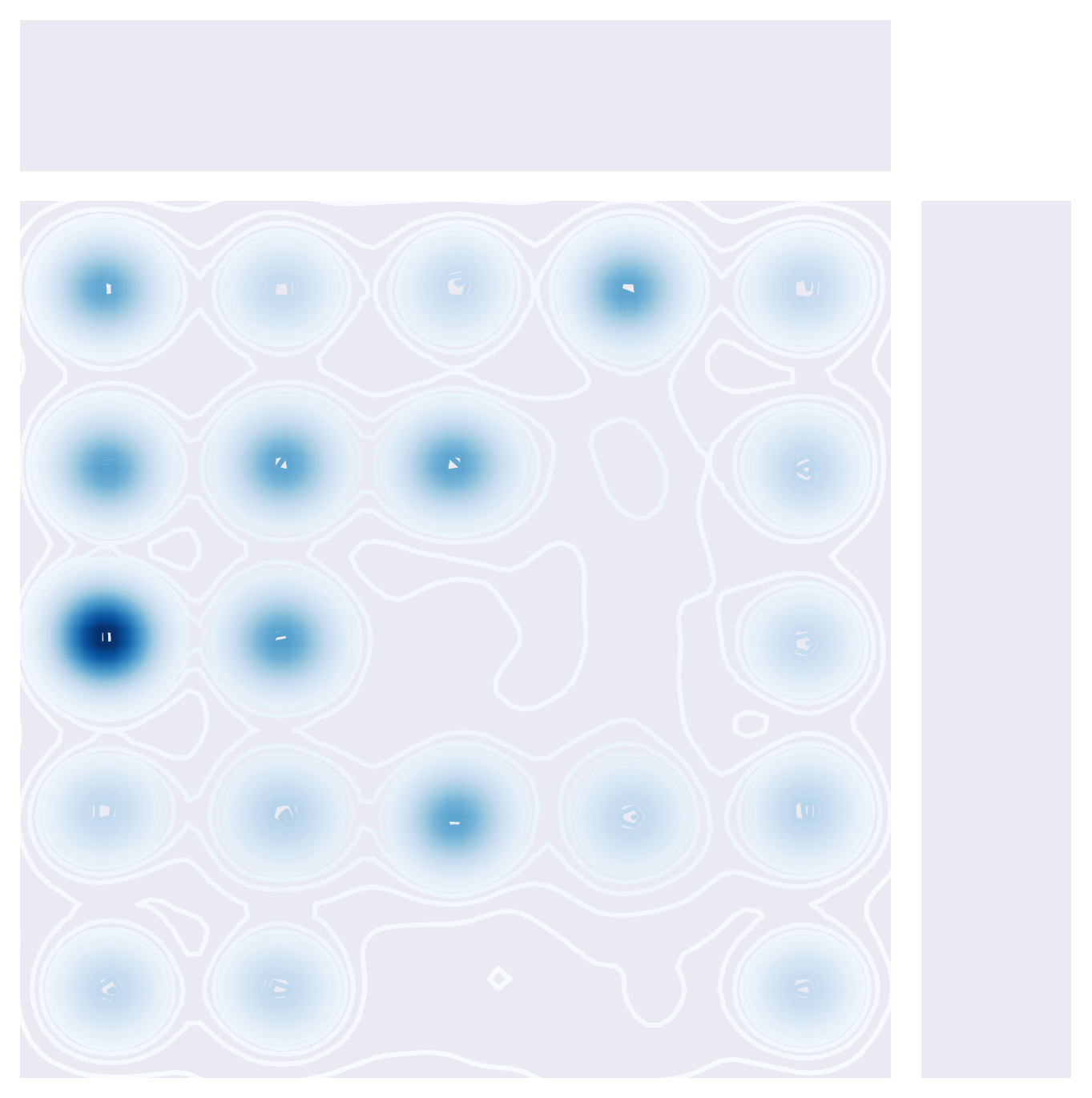}  \\	
		(a) Target \vspace{-0mm} & \hspace{-4mm}		
		(b) SGLD \vspace{-0mm} & \hspace{-4mm}
		(c) cSGLD \hspace{-0mm} \hspace{-2mm}\\
	\end{tabular}
	\caption{Sampling from a mixture of 25 Gaussians in the non-parallel setting. With a budget of 50K samples, traditional SGLD has only discovered one of the 25 modes, while our proposed cSGLD has explored significantly more of the distribution.}
	\label{sm_fig:synthetic}
\end{figure}

To quantitatively show the ability of different algorithms to explore multi-modal distributions, we define the {\it mode-coverage} metric: when the number of samples falling within the radius $r$ of a mode center is larger than a threshold $\bar{n}$, we consider this mode covered. On this dataset, we choose $r=0.25$ and $\bar n=100$. Table ~\ref{tab:mode_coverage} shows the mode-coverage for several algorithms, based on 10 different runs.

\begin{table}[h!]
\begin{center}
\begin{tabular}{c|c}
\hline
{\bf  Algorithm} & {\bf Mode coverage }  \\
\hline 
SGLD  &1.8$\pm$0.13     \\
cSGLD  &{\bf 6.7}$\pm$0.52       \\ \hline
Parallel SGLD    &18$\pm$0.47         \\
Parallel cSGLD    &{\bf 24.4}$\pm$0.22       \\
\hline
\end{tabular}
\end{center}
\caption{Mode coverage over 10 different runs, $\pm$ standard error.}
\label{tab:mode_coverage}
\end{table}

\subsection{Bayesian Logistic Regression}\label{sec:exp_blr}
We consider Bayesian logistic regression (BLR) on three real-world datasets from the UCI repository: \emph{Australian} (15 covariates, 690 data points), \emph{German} (25 covariates, 1000 data points) and \emph{Heart} (14 covariates, 270 data points). For all experiments, we collect 5000 samples with 5000 burn-in iterations. Following the settings in \cite{li2016preconditioned}, we report median {\it effective sample size} (ESS) in Table~\ref{tab: ess}. 

Note that BLR is \emph{unimodal} in parameter space. We use this experiment as an adversarial situation for cSG-MCMC, which we primarily designed to explore multiple modes. We note that even in the unimodal setting, cSG-MCMC more effectively explores the parameter space than popular alternatives. We can also use these experiments to understand how samplers respond to varying parameter dimensionality and training set sizes.

Overall, cSG-MCMC dramatically outperforms SG-MCMC, which demonstrates the fast mixing rate due to the warm restarts. On the small dataset \emph{Heart}, SGHMC and cSGHMC achieve the same results, because the posterior of BLR on this dataset is simple. 
However, in higher dimensional spaces (\eg \emph{Australian} and \emph{German}), cSG-MCMC shows significantly higher ESS; this result means that each cycle in cSG-MCMC can characterize a different region of the posteriors,
combining multiple cycles yields more accurate overall approximation.

\begin{table}[h!]
  \centering
  \begin{tabular}{c|ccccc}
   \hline
    & {\bf Australian} &{\bf German}& {\bf Heart} \\
              \hline 
   SGLD & 1676  &492  &2199\\
   cSGLD & 2138  &978  &2541\\  \hline 
   SGHMC & 1317  &2007  &\textbf{5000}\\
    cSGHMC  &\textbf{4707}  &\textbf{2436}   &\textbf{5000} \\  \hline 
  \end{tabular}
  \vspace{2mm}
  \caption{Effective sample size for samples for the unimodal posteriors in Bayesian linear regression, obtained using cyclical and traditional SG-MCMC algorithms, respectively.}
  \label{tab: ess}
\end{table}

\section{Assumptions}

\subsection{Assumptions in weak convergence analysis}
In the analysis, we define a functional $\psi$ that solves the following \emph{Poisson Equation}:
\begin{align}\label{eq:PoissonEq1}
	\mathcal{L} \psi(\theta_{k}) =  \phi(\theta_{k}) - \bar{\phi}, \;\; \text{or equivalently,} \;\;\; \frac{1}{K}\sum_{k=1}^K \mathcal{L} \psi(\theta_{k}) = \hat{\phi} - \bar{\phi}.
\end{align}
The solution functional $\psi(\theta_{k})$ characterizes the difference between $\phi(\theta_{k})$ 
and the posterior average $\bar{\phi}$ for every $\theta_{k}$, thus would typically possess a unique 
solution, which is at least as smooth as $\phi$ under the elliptic or hypoelliptic settings \citep{MattinglyST:JNA10}. 
Following \cite{chen2015convergence,JMLR:v17:15-494}, we make certain assumptions on the solution functional, $\psi$, of 
the Poisson equation \eqref{eq:PoissonEq1}. 

\begin{assumption}\label{ass:assumption1}
$\psi$ and its up to 3rd-order derivatives, $\mathcal{D}^k \psi$, are bounded by a
function  $\mathcal{V}$, {\it i.e.}, 
$\|\mathcal{D}^k \psi\| \leq H_k\mathcal{V}^{p_k}$ for $k=(0, 1, 2, 3)$, $H_k, p_k > 0$. Furthermore, 
the expectation of $\mathcal{V}$ on $\{\theta_{k}\}$ is bounded: $\sup_l \mathbb{E}\mathcal{V}^p(\theta_{k}) < \infty$, 
and $\mathcal{V}$ is smooth such that 
$\sup_{s \in (0, 1)} \mathcal{V}^p\left(s\theta + \left(1-s\right)\theta^\prime\right) \leq C\left(\mathcal{V}^p\left(\theta\right) + \mathcal{V}^p\left(\theta^\prime\right)\right)$, $\forall \theta, \theta^\prime, p \leq \max\{2p_k\}$ for some $C > 0$.
\end{assumption}

\subsection{Assumptions in convergence under the Wasserstein distance}
Following existing work in \cite{raginsky2017non}, we adopt the following standard assumptions summarized in Assumption~\ref{ass:ass1}.

\begin{assumption}\label{ass:ass1}
\begin{itemize}
\item There exists some constants $A\geq 0$ and $B\geq 0$, such that $U(0) \leq A$ and $\nabla U(0) \leq B$.
\item The function U is $L_U$-smooth : $\|\nabla U(w)-\nabla U(v)\| \leq L_U \|w-v\|$.
\item The function U is $(m_u,b)-dissipative$, which means for some $m_U>0$ and $b>0$ $\langle w,\nabla U(w)\rangle $  $\geq$  $m_U\|w\|^2-b$.
\item There exists some constant $\delta \in [0,1)$, such that $\mathbb{E}[\|\nabla \tilde{U_k}(w)-\nabla U(w)\|^2] \leq 2\sigma  (M_U^2\|w\|^2+B^2)$.
\item We can choose $\mu_0$ which satisfies the requirement: $\kappa_0 := log \int e^{\|w\|^2} \mu_0(w) dw < \infty$.
\end{itemize}
\end{assumption}

\section{Proof of Theorem~\ref{theo:bias_w}}

To prove the theorem, we borrow tools developed by \cite{chen2015convergence,VollmerZT:arxiv15}. We first rephrase the stepsize assumptions in general SG-MCMC in Assumption~\ref{ass:assump3}.

\begin{assumption}\label{ass:assump3}
The algorithm adopts an $N$-th order integrator. The step sizes $\{h_k\}$ are such that $0 < h_{k+1} < h_{k}$, and satisfy 
1) $\sum_{k=1}^\infty h_k = \infty$; and 2) $\lim_{K \rightarrow \infty}\frac{\sum_{k=1}^K h_k^{N+1}}{\sum_{k=1}^K h_k} = 0$.
\end{assumption}

Our prove can be derived by the following results from \cite{chen2015convergence}.
\begin{lemma}[\cite{chen2015convergence}]\label{theo:bias_w}
	Let $S_K \triangleq \sum_{k=1}^Kh_k$. Under Assumptions~\ref{ass:assumption1} and \ref{ass:assump3}, for a smooth test function 
	$\phi$, the bias and MSE of a decreasing-step-size SG-MCMC with a $N$th-order integrator at time 
	$S_L$ are bounded as:
	\begin{align}
		&\text{BIAS: }\left|\mathbb{E}\tilde{\phi} - \bar{\phi}\right| = O\left(\frac{1}{S_K} + \frac{\sum_{k=1}^K h_k^{N+1}}{S_K}\right) \label{eq:bias_decease}\\
		&\text{MSE: }\mathbb{E}\left(\tilde{\phi} - \bar{\phi}\right)^2 \leq C\left(\sum_l \frac{h_k^2}{S_K^2}\mathbb{E}\left\|\Delta V_l\right\|^2 + \frac{1}{S_K} + \frac{(\sum_{k=1}^K h_k^{N+1})^2}{S_K^2} \right)~. \label{eq:mse_decrease}
	\end{align}
\end{lemma}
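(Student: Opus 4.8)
The plan is to prove both bounds through the standard Poisson-equation machinery for SG-MCMC, adapting the weak-expansion argument of \cite{MattinglyST:JNA10} to the decreasing, non-constant stepsizes. First I would let $\psi$ solve the Poisson equation $\mathcal{L}\psi = \phi - \bar\phi$ from \eqref{eq:PoissonEq1}, whose regularity is supplied by Assumption~\ref{ass:assumption1}, and recall that the relevant estimator is the stepsize-weighted average $\tilde\phi = \frac{1}{S_K}\sum_{k=1}^K h_k\phi(\theta_k)$. Substituting the Poisson equation converts the centered error into
\[
\tilde\phi - \bar\phi = \frac{1}{S_K}\sum_{k=1}^K h_k\,\mathcal{L}\psi(\theta_k).
\]
The entire proof then reduces to controlling the right-hand side by replacing each $h_k\mathcal{L}\psi$ with a telescoping increment plus lower-order error.

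The crux is a one-step weak expansion of the numerical integrator. Conditioning on $\theta_{k-1}$ and the current minibatch (equivalently, on the gradient-noise perturbation $\Delta V_k$ induced by $\nabla\tilde U_k - \nabla U$), and writing the local generator of the scheme as $\mathcal{L} + \Delta V_k$, the $N$th-order property of the integrator (Assumption~\ref{ass:assump3}) gives
\[
\mathbb{E}\big[\psi(\theta_k)\mid\theta_{k-1},\Delta V_k\big] = \psi(\theta_{k-1}) + h_k\big(\mathcal{L} + \Delta V_k\big)\psi(\theta_{k-1}) + \mathcal{R}_k,
\]
where the remainder obeys $|\mathcal{R}_k|\le C h_k^{N+1}\mathcal{V}^{p}(\theta_{k-1})$, a bound that follows from the derivative estimates on $\psi$ and the convex-combination smoothness of $\mathcal{V}$ in Assumption~\ref{ass:assumption1}. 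Rearranging isolates $h_k\mathcal{L}\psi(\theta_{k-1})$ as the increment $\psi(\theta_{k-1})-\mathbb{E}[\psi(\theta_k)\mid\cdots]$ plus the gradient-noise and remainder terms; summing over $k$, absorbing the index shift from $\theta_k$ to $\theta_{k-1}$ (which costs an extra $\mathcal{V}$-dominated $O(h_k)$ per term), and telescoping yields the master decomposition
\[
\sum_{k=1}^K h_k\mathcal{L}\psi(\theta_k) = \big(\psi(\theta_0)-\psi(\theta_K)\big) - \mathcal{M}_K - \sum_{k=1}^K h_k\,\Delta V_k\psi(\theta_{k-1}) - \sum_{k=1}^K \mathcal{R}_k,
\]
where $\mathcal{M}_K := \sum_k\big(\psi(\theta_k)-\mathbb{E}[\psi(\theta_k)\mid\theta_{k-1},\Delta V_k]\big)$ is a martingale capturing the injected-noise fluctuations.

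For the bias I would divide by $S_K$ and take full expectations. The martingale term vanishes, and the gradient-noise term vanishes because $\mathbb{E}[\Delta V_k\mid\theta_{k-1}]=0$ by unbiasedness of the minibatch gradient. The boundary term is $O(1/S_K)$ since $\mathbb{E}\psi$ is bounded through $\mathcal{V}$, and the remainder contributes $O\big(\sum_k h_k^{N+1}/S_K\big)$, giving \eqref{eq:bias_decease}. For the MSE I would apply $\big(\sum_{i=1}^4 a_i\big)^2\le 4\sum_i a_i^2$ to the four terms after division by $S_K$. Orthogonality of martingale differences gives $\mathbb{E}\mathcal{M}_K^2=\sum_k\mathbb{E}\big(\psi(\theta_k)-\mathbb{E}[\psi(\theta_k)\mid\cdots]\big)^2 = O\big(\sum_k h_k\big)$, so this contributes $O(1/S_K)$; the same orthogonality applied to the conditionally-centered gradient-noise term produces $\sum_k h_k^2\,\mathbb{E}\|\Delta V_k\|^2/S_K^2$; the squared boundary term is $O(1/S_K^2)$ and the squared remainder is $O\big((\sum_k h_k^{N+1})^2/S_K^2\big)$, together reproducing \eqref{eq:mse_decrease}. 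The uniform moment bound $\sup_k\mathbb{E}\mathcal{V}^p(\theta_k)<\infty$ from Assumption~\ref{ass:assumption1} is precisely what licenses taking expectations of the $\mathcal{V}$-dominated remainders throughout.

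The main obstacle I expect is establishing the one-step weak expansion with a genuinely $O(h_k^{N+1})$ remainder: this requires expanding both the true semigroup $e^{h_k\mathcal{L}}$ and the integrator's transition operator to matching order, invoking the $N$th-order property of Assumption~\ref{ass:assump3}, and then dominating every higher-order term by $\mathcal{V}$ uniformly in $k$ using the derivative bounds on $\psi$. The second delicate point is verifying that the martingale-difference variances are truly $O(h_k)$ rather than $O(1)$ — this is what makes the $1/S_K$ term appear instead of a non-vanishing constant — and handling the index shift between $\mathcal{L}\psi(\theta_k)$ and $\mathcal{L}\psi(\theta_{k-1})$ so that it does not spoil the orders obtained above.
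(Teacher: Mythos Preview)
The paper does not actually prove this lemma: it is stated verbatim as a result imported from \cite{chen2015convergence} (and \cite{VollmerZT:arxiv15}) and is then invoked as a black box to derive Theorem~\ref{theo:bias_w} by plugging in $N=1$ and the cyclical stepsize sums $S_K=O(\alpha_0K)$, $\sum_k\alpha_k^2=3\alpha_0^2K/8$. So there is no in-paper proof to compare against.

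That said, your proposal is the correct reconstruction of the argument in \cite{chen2015convergence}: Poisson-equation substitution, the $N$th-order one-step weak expansion $\mathbb{E}[\psi(\theta_k)\mid\theta_{k-1},\Delta V_k]=\psi(\theta_{k-1})+h_k(\mathcal{L}+\Delta V_k)\psi(\theta_{k-1})+O(h_k^{N+1})$, telescoping to a boundary term plus a martingale plus gradient-noise plus remainder, then taking expectations for the bias and using martingale orthogonality for the MSE. The two ``delicate points'' you flag---getting a genuine $O(h_k^{N+1})$ local remainder via the integrator's order property, and showing the martingale-difference variances are $O(h_k)$---are exactly the places where Assumption~\ref{ass:assumption1} does the work in the original proof. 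One minor remark: you correctly identify $\tilde\phi$ as the stepsize-weighted average $\frac{1}{S_K}\sum_k h_k\phi(\theta_k)$, which is indeed the estimator used in the decreasing-stepsize theorems of \cite{chen2015convergence}, even though the main text of the present paper earlier defines the unweighted $\hat\phi=\frac{1}{K}\sum_k\phi(\theta_k)$; the lemma's notation $\tilde\phi$ signals this distinction.
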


Note that Assumption~\ref{ass:assump3} is only required if one wants to prove the asymptotically unbias of an algorithm. Lemma~\ref{theo:bias_w} still applies even if Assumption~\ref{ass:assump3} is not satisfied. In this case one would obtain a biased algorithm, which is the case of cSGLD.

\begin{proof}[Proof of Theorem~\ref{theo:bias_w}]
Our results is actually a special case of Lemma~\ref{theo:bias_w}. To see that, first note that our cSGLD adopts a first order integrator, thus $N = 1$. To proceed, note that $S_K = \sum_{k=1}^K\alpha_k = O(\alpha_0K)$, and
\begin{align}\label{eq:stepsize}
	\sum_{j=0}^{K-1} \alpha_{j+1}^2 & =\frac{\alpha_0^2}{4} \sum_{j=0}^{K-1}[\cos(\frac{\pi mod(j-1,[K/M])}{[K/M]})+1]^2 \nonumber \\
	 &=\frac{\alpha_0^2}{4} \sum_{j=0}^{K-1}[\cos^2(\frac{\pi mod(j-1,K/M)}{K/M})+1]^2\nonumber \\
	 &=\frac{\alpha_0^2}{4} \frac{K}{M} (\frac{M}{2}+M)=\frac{3\alpha_0^2K}{8}~.
	\end{align}
	As a result, for the bias, we have
	\begin{align*}
	    \left|\mathbb{E}\tilde{\phi} - \bar{\phi}\right| &= O\left(\frac{1}{S_K} + \frac{\sum_{k=1}^K h_k^{N+1}}{S_K}\right) = O\left(\frac{1}{\alpha_0K} + \frac{3\alpha_0^2K/8}{\alpha_0K}\right) \\
	    &= O\left(\frac{1}{\alpha_0K} + \alpha_0\right)~.
	\end{align*}
	For the MSE, note the first term $\sum_l \frac{h_k^2}{S_K^2}\mathbb{E}\left\|\Delta V_l\right\|^2$ has a higher order than other terms, thus it is omitted in the big-O notation, {\it i.e.},
	\begin{align*}
	    \mathbb{E}\left(\tilde{\phi} - \bar{\phi}\right)^2 &= O\left(\frac{1}{\alpha_0K} + (\frac{3\alpha_0^2K/8}{\alpha_0K})^2\right) \\
	    &= O\left(\frac{1}{\alpha_0K} + \alpha_0^2\right)~.
	\end{align*}
	This completes the proof.
\end{proof}

\section{Proof of Theorem \ref{thm:thm1}} \label{app1}

\begin{proof}[Proof of the bound for $W_2(\mu_K,\nu_{\infty})$ in cSGLD]
Firstly, we introduce the following SDE
\begin{align}\label{eq:itodif}
	\mathrm{d}\theta_t = -\nabla U(\theta_t)\mathrm{d}t + \sqrt{2}\mathrm{d}\mathcal{W}_t~,
\end{align}
Let $\nu_t$ denote the distribution of $\theta_t$, and the stationary distribution of \eqref{eq:itodif} be $p(\theta|\mathcal{D})$, which means $\nu_{\infty}=p(\theta|\mathcal{D})$.
\begin{align}\label{eq:discreteitodif}
\theta_{k+1} = \theta_{k} - \nabla \tilde{U_k}(\theta_k) \alpha_{k+1} + \sqrt{2\alpha_{k+1}}\xi_{k+1}	
\end{align}
Further, let $\mu_k$ denote the distribution of $\theta_{k}$.

Since 
\begin{align} \label{decom1}
    W_2(\mu_K,\nu_{\infty}) \leq W_2(\mu_K,\nu_{\sum_{k=1}^{K}\alpha_k})+W_2(\nu_{\sum_{k=1}^{K}\alpha_k},\nu_{\infty})
\end{align}

, we need to give the bounds for these two parts respectively.
\subsection{$W_2(\mu_K,\nu_{\sum_{k=1}^{K}\alpha_k})$}
For the first part, $W_2(\mu_K,\nu_{\sum_{k=1}^{K}\alpha_k})$, our proof
is based on the proof of Lemma 3.6 in \cite{raginsky2017non} with some modifications. 
We first assume $\mathbb{E}(\nabla \tilde{U}(w))=\nabla {U}(w),~~\forall w \in \mathbb{R}^{d} ~,$ which is a general assumption according to the way we choose the minibatch.
And we define $p(t)$ which will be used in the following proof:
\begin{align}
	p(t)=\{k\in \mathbb{Z}|\sum_{i=1}^{k}\alpha_i \leq t <\sum_{i=1}^{k+1}\alpha_i\}
	\end{align}

	Then we focus on the following continuous-time interpolation of $\theta_{k}$:
	\begin{align}
	\underline{\theta}(t)=&{\theta}_{0}-\int_0^{t}\nabla \tilde{U}\left(\underline{\theta}(\sum_{k=1}^{p(s)}\alpha_k)\right)\mathrm{d}s+\sqrt{2}\int_{0}^{t}d\mathcal{W}_{s}^{(d)}
	\end{align}
	where $\nabla \tilde{U}\equiv \nabla \tilde{U}_{k}$ for $t \in\left[\sum_{i=1}^{k}\alpha_{i},\sum_{i=1}^{k+1}\alpha_{i}\right)$. 
	And for each k , $\underline{{\theta}}(\sum_{i=1}^{k}\alpha_i)$ and ${\theta}_{k}$ have the same probability law $\mu_{k}$.\\
Since $\underline{\theta}(t)$ is not a Markov process, we define the following process which has the same one-time marginals as $\underline{\theta}(t)$
	\begin{align}
	V(t)={\theta}_{0}-\int_0^{t}{G}_{s}\left(V(s)\right)\mathrm{d}s+\sqrt{2}\int_{0}^{t}d\mathcal{W}_{s}^{(d)}
	\end{align}
	with\\
    \begin{align}
	G_{t}(x):=\mathbb{E}\left[\nabla \tilde{U}\left(\underline{\theta}(\sum_{i=1}^{q(t)}\alpha_i)\right)|\underline{\theta}(t)=x\right]
	\end{align}
	Let $\Pb_{V}^{t}:=\mathcal{L}\left(V(s):0\leq s\leq t\right)$ and $\Pb_{\theta}^{t}:=\mathcal{L}\left(\theta(s):0\leq s\leq t\right)$ and according to the proof of Lemma 3.6 in \cite{raginsky2017non}, we can derive a similar result for the relative entropy of $\Pb_{V}^{t}$ and $\Pb_{\theta}^{t}$:
	\begin{align*}
	D_{KL}(\Pb_{V}^{t}\|\Pb_{\theta}^{t})=&-\int \mathrm{d}\Pb_{V}^{t} \text{log}\frac{\mathrm{d}\Pb_{V}^{t}}{\mathrm{d}\Pb_{\theta}^{t} }	\nonumber\\
	&=\frac{1}{4}\int_{0}^{t}\mathbb{E}\|\nabla U(V(s))-G_{s}(V(s))\|^2\mathrm{d}s\\
	&=\frac{1}{4}\int_{0}^{t}\mathbb{E}\|\nabla U(\underline{\theta}(s))-G_{s}(\underline{\theta}(s))\|^2\mathrm{d}s
	\end{align*}
    The last line follows the fact that $\mathcal{L}(\underline{\theta}(s))=\mathcal{L}(V(s)),~\forall s$.\\
Then we will let $t=\sum_{k=1}^{K}\alpha_k$ and we can use the martingale property of the integral to derive:
	\begin{align} 
	&D_{KL}(\Pb_{V}^{\sum_{k=1}^{K}\alpha_{k}}\|\Pb_{\theta}^{\sum_{k=1}^{K}\alpha_{k}}) \nonumber\\ 
	&=\frac{1}{4}\sum_{j=0}^{K-1}\int_{\sum_{k=1}^{j}\alpha_{k}}^{\sum_{k=1}^{j+1}\alpha_{k}}\mathbb{E}\|\nabla U(\underline{\theta}(s))-G_{s}(\underline{\theta}(s))\|^2\mathrm{d}s\nonumber\\
	&\leq \frac{1}{2}\sum_{j=0}^{K-1}\int_{\sum_{k=1}^{j}\alpha_{k}}^{\sum_{k=1}^{j+1}\alpha_{k}}\mathbb{E}\|\nabla U(\underline{\theta}(s))-\nabla U(\underline{\theta}(\sum_{k=1}^{q(s)}\alpha_i)\|^2\mathrm{d}s\nonumber\\ 
	&\hspace{2em}+\frac{1}{2}\sum_{j=0}^{K-1}\int_{\sum_{k=1}^{j}\alpha_{k}}^{\sum_{k=1}^{j+1}\alpha_{k}}\mathbb{E}\|\nabla U(\underline{\theta}(\sum_{k=1}^{q(s)}\alpha_i)-G_{s}(\underline{\theta}(\sum_{k=1}^{q(s)}\alpha_i)\|^2\mathrm{d}s\nonumber\\
	&\leq \frac{ L_{U}^2}{2}\sum_{j=0}^{K-1}\int_{\sum_{k=1}^{j}\alpha_{k}}^{\sum_{k=1}^{j+1}\alpha_{k}}\mathbb{E}\|\underline{\theta}(s)-\underline{\theta}(\sum_{k=1}^{q(s)}\alpha_i)\|^2\mathrm{d}s \label{equ:app1}\\ 
	&\hspace{2em}+\frac{1}{2}\sum_{j=0}^{K-1}\int_{\sum_{k=1}^{j}\alpha_{k}}^{\sum_{k=1}^{j+1}\alpha_{k}}\mathbb{E}\|\nabla U(\underline{\theta}(\sum_{k=1}^{q(s)}\alpha_i)-G_{s}(\underline{\theta}(\sum_{k=1}^{q(s)}\alpha_i)\|^2\mathrm{d}s \label{equ:app2}
	\end{align}
	For the first part (\ref{equ:app1}), we consider some $s \in [{\sum_{k=1}^{j}\alpha_{k}},{\sum_{k=1}^{j+1}\alpha_{k}})$, for which the following holds:
	{\small \begin{align}
	&\underline{\theta}(s)-\underline{\theta}(\sum_{k=1}^{j}\alpha_k)\nonumber\\
	&=
    -(s-\sum_{k=1}^{j}\alpha_{k})\nabla \tilde{U}_{k}(\theta_k)+\sqrt{2}(\mathcal{W}_{s}^{(d)}-\mathcal{W}_{\sum_{k=1}^{j}\alpha_{k}}^{(d)})\nonumber\\
    &=
	-(s-\sum_{k=1}^{j}\alpha_{k})\nabla U(\theta_k)+
	(s-\sum_{k=1}^{j}\alpha_{k})(\nabla U(\theta_k)-\nabla \tilde{U}_{k}(\theta_k)) 
	+\sqrt{2}(\mathcal{W}_{s}^{(d)}-\mathcal{W}_{\sum_{k=1}^{j}\alpha_{k}}^{(d)})
	\end{align}}
  Thus, we can use Lemma 3.1 and 3.2 in \cite{raginsky2017non} for the following result:
	\begin{align*}
	\mathbb{E}\|\underline{\theta}(s)-\underline{\theta}(\sum_{k=1}^{j}\alpha_k)\|^2 &\leq 3\alpha_{j+1}^2 \mathbb{E}\|\nabla U(\theta_j)\|^2
	+3\alpha_{j+1}^2\mathbb{E}\|\nabla U(\theta_j)-\nabla \tilde{U}_{j}(\theta_j)\|^2+6\alpha_{j+1}d\\
	&\leq  12\alpha_{j+1}^2(L_U^2\mathbb{E}\|\theta_j\|^2+B^2)+6\alpha_{j+1}d\\ 
	\end{align*}

	Hence we can bound the first part, (choosing $\alpha_0 \leq 1$), 
	\begin{align}
	&\frac{L_{U}^2}{2}\sum_{j=0}^{K-1}\int_{\sum_{k=1}^{j}\alpha_{k}}^{\sum_{k=1}^{j+1}\alpha_{k}}\mathbb{E}\|\underline{\theta}(s)-\underline{\theta}(\sum_{k=1}^{q(s)}\alpha_i)\|^2\mathrm{d}s \nonumber\\
	&\leq  \frac{L_{U}^2}{2}\sum_{j=0}^{K-1}\left[12\alpha_{j+1}^3(L_U^2\mathbb{E}\|\theta_j\|^2+B^2)+6\alpha_{j+1}^2 d\right] \nonumber \\
	&\leq      {L_{U}^2} \max_{0\leq j\leq K-1}\left[6(L_U^2\mathbb{E}\|\theta_j\|^2+B^2)+3 d\right](\sum_{j=0}^{K-1} \alpha_{j+1}^2)\nonumber \\
	&\leq      {L_{U}^2} \max_{0\leq j\leq K-1}\left[6(L_U^2\mathbb{E}\|\theta_j\|^2+B^2)+3 d\right]\frac{3\alpha_0^2K}{8} \label{equ:app3}
	\end{align}
	
	The last line (\ref{equ:app3}) follows from\footnote{Note: we only focus on the case when $K\mod M=0$.} \eqref{eq:stepsize}. 
	The second part (\ref{equ:app2}) can be bounded as follows:
	\begin{align*}
	&\frac{1}{2}\sum_{j=0}^{K-1}\int_{\sum_{k=1}^{j}\alpha_{k}}^{\sum_{k=1}^{j+1}\alpha_{k}}\mathbb{E}\|\nabla U(\underline{\theta}(\sum_{k=1}^{q(s)}\alpha_i)-G_{s}(\underline{\theta}(\sum_{k=1}^{q(s)}\alpha_i)\|^2\mathrm{d}s\\
	&=\frac{1}{2}\sum_{j=0}^{K-1}\alpha_{j+1}\mathbb{E}\|\nabla U({\theta}_j)-\nabla \tilde{U}({\theta}_j)\|^2\\
	&\leq\sigma \max_{0\leq j\leq K-1} (L_U^2\mathbb{E}\|\theta_j\|^2+B^2) \sum_{j=0}^{K-1}\alpha_{j+1} \\
	&\leq\sigma \max_{0\leq j\leq K-1} (L_U^2\mathbb{E}\|\theta_j\|^2+B^2)  (\frac{\alpha_0}{2}\sum_{j=0}^{K-1} (\cos(\frac{\pi mod(j,K/M)}{K/M})+1))\\
	&\leq  \sigma \max_{0\leq j\leq K-1} (L_U^2\mathbb{E}\|\theta_j\|^2+B^2)(\frac{K \alpha_0}{2})
	\end{align*}
Due to the data-processing inequality for the relative entropy, we have
	\begin{align*}
	&D_{KL}(\mu_{K}\|\nu_{\sum_{k=1}^{K}\alpha_{k}})\leq D_{KL}(\Pb_{V}^{t}\|\Pb_{\theta}^{t})\\
	&\leq\frac{L_{U}^2}{2}\sum_{j=0}^{K-1}\int_{\sum_{k=1}^{j}\alpha_{k}}^{\sum_{k=1}^{j+1}\alpha_{k}}\mathbb{E}\|\underline{\theta}(s)-\underline{\theta}(\sum_{k=1}^{q(s)}\alpha_i)\|^2\mathrm{d}s \nonumber
	\\&\hspace{2em}+\frac{1}{2}\sum_{j=0}^{K-1}\int_{\sum_{k=1}^{j}\alpha_{k}}^{\sum_{k=1}^{j+1}\alpha_{k}}\mathbb{E}\|\nabla U(\underline{\theta}(\sum_{k=1}^{q(s)}\alpha_i)-G_{s}(\underline{\theta}(\sum_{k=1}^{q(s)}\alpha_i)\|^2\mathrm{d}s\\
	&\leq    {L_{U}^2} \max_{0\leq j\leq K-1}\left[6(L_U^2\mathbb{E}\|\theta_j\|^2+B^2)+3 d\right]\frac{3\alpha_0^2K}{8} 
	\\&\hspace{2em}+  \sigma \max_{0\leq j\leq K-1} (L_U^2\mathbb{E}\|\theta_j\|^2+B^2)(\frac{K \alpha_0}{2})
	\end{align*}
	
According to the proof of Lemma 3.2 in \cite{raginsky2017non}, we can bound the term $\mathbb{E}\|\theta_k\|^2$
\begin{align*}
    \mathbb{E}\|\theta_{k+1}\|^2\leq (1-2\alpha_{k+1} m_U +4 \alpha_{k+1}^2M_U^2)\mathbb{E}\|\theta_{k}\|^2+2\alpha_{k+1}b+4\alpha_{k+1}^2B^2+\frac{2\alpha_{k+1}d}{\beta}
\end{align*}
Similar to the statement of Lemma 3.2 in \cite{raginsky2017non}, we can fix $\alpha_0 \in (0,1\wedge\frac{m_U}{4M_U^2})$. Then, we can know that 
\begin{align}\label{lemma:32}
    \mathbb{E}\|\theta_{k+1}\|^2\leq (1-2\alpha_{min} m_U +4 \alpha_{min}^2M_U^2)\mathbb{E}\|\theta_{k}\|^2+2\alpha_{0}b+4\alpha_{0}^2B^2+\frac{2\alpha_{0}d}{\beta}
\end{align}, where $\alpha_{min}$ is defined as $\alpha_{min} \triangleq \frac{\alpha_0}{2}
\left[\cos\left(\frac{\pi~\text{mod}(\lceil K/M\rceil-1,\lceil K/M\rceil)}{\lceil K/M\rceil}\right)+1 \right]$.

There are two cases to consider.
\begin{itemize}
    \item If $1-2\alpha_{min} m_U +4 \alpha_{min}^2M_U^2 \leq 0$, then from \eqref{lemma:32} it follows that
    \begin{align*}
         \mathbb{E}\|\theta_{k+1}\|^2&\leq 2\alpha_{0}b+4\alpha_{0}^2B^2+\frac{2\alpha_{0}d}{\beta}\\
         &\leq \mathbb{E}\|\theta_{0}\|^2+2(b+2B^2+\frac{d}{\beta})
    \end{align*}
    \item If $0\leq 1-2\alpha_{min} m_U +4 \alpha_{min}^2M_U^2 \leq 1$, then iterating \eqref{lemma:32} gives 
    \begin{align}
        \mathbb{E}\|\theta_{k}\|^2 &\leq (1-2\alpha_{min} m_U +4 \alpha_{min}^2M_U^2)^k\mathbb{E}\|\theta_{0}\|^2+\frac{\alpha_{0}b+2\alpha_{0}^2B^2+\frac{\alpha_{0}d}{\beta}}{\alpha_{min} m_U -2 \alpha_{min}^2M_U^2}\\
        &\leq \mathbb{E}\|\theta_{0}\|^2+\frac{2\alpha_0}{m_U\alpha_{min}}(b+2B^2+\frac{d}{\beta})
    \end{align}

\end{itemize}
	Now, we have 
	\begin{align*}
		&\max_{0\leq j\leq K-1} (L_U^2\mathbb{E}\|\theta_j\|^2+B^2)\\
        &\leq  (L_U^2(\kappa_0+2(1\wedge \frac{\alpha_0}{m_U\alpha_{min}})(b+2B^2+d))+B^2) := C_0
	\end{align*}
	Due to the expression of $\frac{\alpha_0}{\alpha_{min}}$, $C_0$ is independent of $\alpha_0$.
	Then we denote the $6L_{U}^2(C_0+d)$ as $C_1$ and we can derive 
	\begin{align*}
		D_{KL}(\mu_{K}\|\nu_{\sum_{k=1}^{K}\alpha_{k}}) \leq C_1 (\frac{3\alpha_0^2K}{8})+\sigma C_0 (\frac{K \alpha_0}{2})
	\end{align*}
    Then according to Proposition 3.1 in \cite{bolley2005weighted} and Lemma 3.3 in \cite{raginsky2017non}, if we denote $\kappa_0+2b+2d$ as $C_2$, we can derive the following result:
{\small
\begin{align*}
	W_2(\mu_K,\nu_{\sum_{k=1}^{K}\alpha_k}) &\leq 
	(12+C_2(\sum_{k=1}^{K}\alpha_k))^{\frac{1}{2}} \cdot [D_{KL}(\mu_{K}\|\nu_{\sum_{k=1}^{K}\alpha_{k}})^{\frac{1}{2}}+{D_{KL}(\mu_{K}\|\nu_{\sum_{k=1}^{K}\alpha_{k}})}^{\frac{1}{4}}]\\
    &\leq(12+\frac{C_2 K \alpha_0}{2})^{\frac{1}{2}}\cdot[( \frac{3C_1\alpha_0^2K}{8}+ \frac{K\sigma C_0 \alpha_0}{2})^{\frac{1}{2}}+( \frac{3C_1\alpha_0^2K}{16}+ \frac{K\sigma C_0 \alpha_0}{4})^{\frac{1}{4}}]
\end{align*}
}

\subsection{$W_2(\nu_{\sum_{k=1}^{K}\alpha_k},\nu_{\infty})$}
We can directly get the following results from (3.17) in \cite{raginsky2017non} that there exist some positive constants $(C_3,C_4)$,
\begin{align*}
	W_2(\nu_{\sum_{k=1}^{K}\alpha_k},\nu_{\infty}) \leq C_3 \exp({-\sum_{k=1}^{K}\alpha_k/C_4})
\end{align*}

Now combining the bounds for $W_2(\mu_K,\nu_{\sum_{k=1}^{K}\alpha_k})$ and $W_2(\nu_{\sum_{k=1}^{K}\alpha_k},\nu_{\infty})$, substituting $\alpha_0 = O(1/K^{\beta})$, and noting $W_2(\nu_{\sum_{k=1}^{K}\alpha_k},\nu_{\infty})$ decreases w.r.t.\! $K$, we arrive at the bound stated in the theorem.

\end{proof}

\section{Relation with SGLD}\label{app:sgld}
For the standard polynomially-decay-stepsize SGLD, the convergence rate is bounded as
    \begin{align}\label{decom2}
		W_2(\tilde{\mu}_K,\nu_{\infty}) &\leq W_2(\tilde{\mu}_K,\nu_{\sum_{k=1}^{K}h_k})+W_2(\nu_{\sum_{k=1}^{K}h_k},\nu_{\infty})	
	\end{align} 
	where $W_2(\tilde{\mu}_K,\nu_{\sum_{k=1}^{K}h_k})\leq (6+h_0\sum_{k=1}^{K} \frac{1}{k})^{\frac{1}{2}}\cdot$
    {\small \begin{align*}
   [(D_1&h_0^2 \frac{\pi^2}{6}+\sigma D_0 h_0\sum_{k=1}^{K} \frac{1}{k})^{\frac{1}{2}}+(D_1h_0^2 \frac{\pi^2}{16}+\sigma D_0 \frac{h_0}{2}\sum_{k=1}^{K} \frac{1}{k})^{\frac{1}{4}}]
    \end{align*}}
    and $W_2(\nu_{\sum_{k=1}^{K}h_k},\nu_{\infty}) \leq C_3 \exp(-\frac{\sum_{k=1}^{K}h_k}{C_4})$.  

\begin{proof}[Proof of the bound of $W_2(\tilde{\mu}_K,\nu_{\infty})$ in the standard SGLD]

Similar to the proof of $W_2({\mu}_K,\nu_{\infty})$ in cSGLD, we get the following update rule for SGLD with the stepsize following a polynomial decay {\it i.e.}, $h_k=\frac{h_0}{k}$,
\begin{align}
\theta_{k+1} = \theta_{k} - \nabla \tilde{U_k}(\theta_k) h_{k+1} + \sqrt{2h_{k+1}}\xi_{k+1}	
\end{align}\label{eq:discreteitodif}
Let $\tilde{\mu}_k$ denote the distribution of $\theta_{k}$.

Since 
\begin{align}
    W_2(\tilde{\mu}_K,\nu_{\infty}) \leq W_2(\tilde{\mu}_K,\nu_{\sum_{k=1}^{K}h_k})+W_2(\nu_{\sum_{k=1}^{K}h_k},\nu_{\infty})
\end{align}
, we need to give the bounds for these two parts respectively.\\
\subsection{$W_2(\tilde{\mu}_K,\nu_{\sum_{k=1}^{K}h_k})$}
We first assume $\mathbb{E}(\nabla \tilde{U}(w))=\nabla {U}(w),~~\forall w \in \mathbb{R}^{d} ~,$ which is a general assumption according to the way we choose the minibatch.
Following the proof in \citet{raginsky2017non} and the analysis of the SPOS method in \cite{2018arXiv180901293Z},
  we define the following $p(t)$ which will be used in the following proof:
	\begin{align}
	p(t)=\{k\in \mathbb{Z}|\sum_{i=1}^{k}h_i \leq t <\sum_{i=1}^{k+1}h_i\}
	\end{align}

	Then we focus on the following continuous-time interpolation of $\theta_{k}$:
	\begin{align}
	\underline{\theta}(t)=&{\theta}_{0}-\int_0^{t}\nabla \tilde{U}\left(\underline{\theta}(\sum_{k=1}^{p(s)}h_k)\right)\mathrm{d}s+\sqrt{2}\int_{0}^{t}d\mathcal{W}_{s}^{(d)},\\
	\end{align}
	where $\nabla \tilde{U}\equiv \nabla \tilde{U}_{k}$ for $t \in\left[\sum_{i=1}^{k}h_{i},\sum_{i=1}^{k+1}h_{i}\right)$. 
	And for each $k$ , $\underline{{\theta}}(\sum_{i=1}^{k}h_i)$ and ${\theta}_{k}$ have the same probability law $\tilde{\mu}_{k}$.\\
Since $\underline{\theta}(t)$ is not a Markov process, we define the following process which has the same one-time marginals as $\underline{\theta}(t)$
\begin{align}
	V(t)={\theta}_{0}-\int_0^{t}{G}_{s}\left(V(s)\right)\mathrm{d}s+\sqrt{2}\int_{0}^{t}d\mathcal{W}_{s}^{(d)}
	\end{align}
	with\\
	\begin{align}
	G_{t}(x):=\mathbb{E}\left[\nabla \tilde{U}\left(\underline{\theta}(\sum_{i=1}^{q(t)}h_i)\right)|\underline{\theta}(t)=x\right]
	\end{align}
	Let $\Pb_{V}^{t}:=\mathcal{L}\left(V(s):0\leq s\leq t\right)$ and $\Pb_{\theta}^{t}:=\mathcal{L}\left(\theta(s):0\leq s\leq t\right)$ and according to the proof of Lemma 3.6 in \cite{raginsky2017non}, we can derive the similar result for the relative entropy of $\Pb_{V}^{t}$ and $\Pb_{\theta}^{t}$:
	\begin{align*}
	D_{KL}(\Pb_{V}^{t}\|\Pb_{\theta}^{t})=&-\int \mathrm{d}\Pb_{V}^{t} \text{log}\frac{\mathrm{d}\Pb_{V}^{t}}{\mathrm{d}\Pb_{\theta}^{t} }	\nonumber\\
	&=\frac{1}{4}\int_{0}^{t}\mathbb{E}\|\nabla U(V(s))-G_{s}(V(s))\|^2\mathrm{d}s\\
	&=\frac{1}{4}\int_{0}^{t}\mathbb{E}\|\nabla U(\underline{\theta}(s))-G_{s}(\underline{\theta}(s))\|^2\mathrm{d}s
	\end{align*}
The last line follows the fact that $\mathcal{L}(\underline{\theta}(s))=\mathcal{L}(V(s)),~\forall s$.\\
Then we will let $t=\sum_{k=1}^{K}h_k$ and we can use the martingale property of integral to derive:
	\begin{align} 
	&D_{KL}(\Pb_{V}^{\sum_{k=1}^{K}h_k}\|\Pb_{\theta}^{\sum_{k=1}^{K}h_k}) \nonumber\\ 
	&=\frac{1}{4}\sum_{j=0}^{K-1}\int_{\sum_{k=1}^{j}h_k}^{\sum_{k=1}^{j+1}h_k}\mathbb{E}\|\nabla U(\underline{\theta}(s))-G_{s}(\underline{\theta}(s))\|^2\mathrm{d}s\nonumber\\
	&\leq \frac{1}{2}\sum_{j=0}^{K-1}\int_{\sum_{k=1}^{j}h_k}^{\sum_{k=1}^{j+1}h_k}\mathbb{E}\|\nabla U(\underline{\theta}(s))-\nabla U(\underline{\theta}(\sum_{k=1}^{q(s)}h_i)\|^2\mathrm{d}s\nonumber\\ 
	&\hspace{2em}+\frac{1}{2}\sum_{j=0}^{K-1}\int_{\sum_{k=1}^{j}h_{k}}^{\sum_{k=1}^{j+1}h_{k}}\mathbb{E}\|\nabla U(\underline{\theta}(\sum_{k=1}^{q(s)}h_i)-G_{s}(\underline{\theta}(\sum_{k=1}^{q(s)}h_i)\|^2\mathrm{d}s\nonumber\\
	&\leq \frac{ L_{U}^2}{2}\sum_{j=0}^{K-1}\int_{\sum_{k=1}^{j}h_{k}}^{\sum_{k=1}^{j+1}h_{k}}\mathbb{E}\|\underline{\theta}(s)-\underline{\theta}(\sum_{k=1}^{q(s)}h_i)\|^2\mathrm{d}s \label{equdecay:app1}\\ 
	&\hspace{2em}+\frac{1}{2}\sum_{j=0}^{K-1}\int_{\sum_{k=1}^{j}h_{k}}^{\sum_{k=1}^{j+1}h_{k}}\mathbb{E}\|\nabla U(\underline{\theta}(\sum_{k=1}^{q(s)}h_i)-G_{s}(\underline{\theta}(\sum_{k=1}^{q(s)}h_i)\|^2\mathrm{d}s \label{equdecay:app2}
	\end{align}
	For the first part (\ref{equdecay:app1}), we consider some $s \in [{\sum_{k=1}^{j}h_{k}},{\sum_{k=1}^{j+1}h_{k}})$, the following equation holds:
	{\small \begin{align}
	&\underline{\theta}(s)-\underline{\theta}(\sum_{k=1}^{j}h_k)\nonumber\\
	&= 
    -(s-\sum_{k=1}^{j}h_{k})\nabla \tilde{U}_{k}(\theta_k)+\sqrt{2}(\mathcal{W}_{s}^{(d)}-\mathcal{W}_{\sum_{k=1}^{j}h_{k}}^{(d)})\nonumber\\
    &= -(s-\sum_{k=1}^{j}h_{k})\nabla U(\theta_k)+
	(s-\sum_{k=1}^{j}h_{k})(\nabla U(\theta_k)-\nabla \tilde{U}_{k}(\theta_k)) +\sqrt{2}(\mathcal{W}_{s}^{(d)}-\mathcal{W}_{\sum_{k=1}^{j}h_{k}}^{(d)})
	\end{align}}
  Thus, we can use Lemma 3.1 and 3.2 in \cite{raginsky2017non} for the following result:
	\begin{align*}
	&\mathbb{E}\|\underline{\theta}(s)-\underline{\theta}(\sum_{k=1}^{j}h_k)\|^2 \\&\leq 3h_{j+1}^2 \mathbb{E}\|\nabla U(\theta_j)\|^2
	 +3h_{j+1}^2\mathbb{E}\|\nabla U(\theta_j)-\nabla \tilde{U}_{j}(\theta_j)\|^2+6h_{j+1}d\\
	&\leq  12h_{j+1}^2(L_U^2\mathbb{E}\|\theta_j\|^2+B^2)+6h_{j+1}d
	\end{align*}
	Hence we can bound the first part, (choosing $h_0 \leq 1$),
    \begin{align}
	&\frac{L_{U}^2}{2}\sum_{j=0}^{K-1}\int_{\sum_{k=1}^{j}h_{k}}^{\sum_{k=1}^{j+1}h_{k}}\mathbb{E}\|\underline{\theta}(s)-\underline{\theta}(\sum_{k=1}^{q(s)}h_k)\|^2\mathrm{d}s \nonumber\\
	&\leq  \frac{L_{U}^2}{2}\sum_{j=0}^{K-1}\left[12h_{j+1}^3(L_U^2\mathbb{E}\|\theta_j\|^2+B^2)+6h_{j+1}^2 d\right] \nonumber \\
	&\leq      {L_{U}^2} \max_{0\leq j\leq K-1}\left[6(L_U^2\mathbb{E}\|\theta_j\|^2+B^2)+3 d\right](\sum_{j=0}^{K-1} h_{j+1}^2)\nonumber \\
	&\leq      {L_{U}^2} \max_{0\leq j\leq K-1}\left[6(L_U^2\mathbb{E}\|\theta_j\|^2+B^2)+3 d\right]\frac{\pi^2}{6}h_0^2 \label{equdecay:app3}
	\end{align}
	where the last line follows from the fact that
	\begin{align*}
	\sum_{j=0}^{K-1}\frac{1}{(j+1)^3}\leq\sum_{j=0}^{K-1}\frac{1}{(j+1)^2}\leq\sum_{j=0}^{\infty}\frac{1}{(j+1)^2}=\frac{\pi^2}{6}~.
	\end{align*}
    The second part (\ref{equdecay:app2}) can be bounded as follows:
	\begin{align*}
	&\frac{1}{2}\sum_{j=0}^{K-1}\int_{\sum_{k=1}^{j}h_{k}}^{\sum_{k=1}^{j+1}h_{k}}\mathbb{E}\|\nabla U(\underline{\theta}(\sum_{k=1}^{q(s)}h_i)-G_{s}(\underline{\theta}(\sum_{k=1}^{q(s)}h_i)\|^2\mathrm{d}s\\
	&=\frac{1}{2}\sum_{j=0}^{K-1}h_{j+1}\mathbb{E}\|\nabla U({\theta}_j)-\nabla \tilde{U}({\theta}_j)\|^2\\
	&\leq\sigma \max_{0\leq j\leq K-1} (L_U^2\mathbb{E}\|\theta_j\|^2+B^2) \sum_{j=0}^{K-1}h_{j+1} \\
	&\leq\sigma \max_{0\leq j\leq K-1} (L_U^2\mathbb{E}\|\theta_j\|^2+B^2)  (h_0\sum_{j=1}^{K} \frac{1}{j})
	\end{align*}
	Due to the data-processing inequality for the relative entropy, we have
	\begin{align*}
	D_{KL}(\tilde{\mu}_{K}\|\nu_{\sum_{k=1}^{K}h_{k}})
	&\leq D_{KL}(\Pb_{V}^{t}\|\Pb_{\theta}^{t})\\
	&\leq\frac{L_{U}^2}{2}\sum_{j=0}^{K-1}\int_{\sum_{k=1}^{j}h_{k}}^{\sum_{k=1}^{j+1}h_{k}}\mathbb{E}\|\underline{\theta}(s)-\underline{\theta}(\sum_{k=1}^{q(s)}h_i)\|^2\mathrm{d}s 
	\\&\hspace{2em}+\frac{1}{2}\sum_{j=0}^{K-1}\int_{\sum_{k=1}^{j}h_{k}}^{\sum_{k=1}^{j+1}h_{k}}\mathbb{E}\|\nabla U(\underline{\theta}(\sum_{k=1}^{q(s)}h_i)-G_{s}(\underline{\theta}(\sum_{k=1}^{q(s)}h_i)\|^2\mathrm{d}s\\
	&\leq    {L_{U}^2} \max_{0\leq j\leq K-1}\left[6(L_U^2\mathbb{E}\|\theta_j\|^2+B^2)+3 d\right]\frac{\pi^2}{6}h_0^2 
	\\&\hspace{2em} +  \sigma \max_{0\leq j\leq K-1} (L_U^2\mathbb{E}\|\theta_j\|^2+B^2)(h_0\sum_{j=1}^{K} \frac{1}{j})
	\end{align*}
	
Similar to the proof of cSGLD , we have 
	\begin{align*}
		\max_{0\leq j\leq K-1} (L_U^2\mathbb{E}\|\theta_j\|^2+B^2)
        \leq D_0
	\end{align*}
	Then we denote the $6L_{U}^2(D_0+d)$ as $D_1$ and we can derive 
	\begin{align*}
		D_{KL}(\tilde{\mu}_{K}\|\nu_{\sum_{k=1}^{K}h_{k}}) \leq D_1h_0^2 \frac{\pi^2}{6}+\sigma D_0 h_0\sum_{j=1}^{K} \frac{1}{j}
	\end{align*}
	
Then according to Proposition 3.1 in \cite{bolley2005weighted} and Lemma 3.3 in \cite{raginsky2017non}, if we denote $\kappa_0+2b+2d$ as $D_2$, we can derive the following result,
{\small
\begin{align*}
	&W_2(\tilde{\mu}_{K},\nu_{\sum_{k=1}^{K}h_k})\\ &\leq
	[12+D_2(\sum_{k=1}^{K}h_k)]^{1/2} \cdot [ (D_{KL}(\tilde{\mu}_{K}\|\nu_{\sum_{k=1}^{K}h_{k}}))^{1/2}+
    (D_{KL}(\tilde{\mu}_{K}\|\nu_{\sum_{k=1}^{K}h_{k}})/2)^{1/4}]\\
	&=[12+D_2(h_0\sum_{j=1}^{K} \frac{1}{j})]^{1/2} \cdot [ (D_1h_0^2 \frac{\pi^2}{6}+\sigma D_0 h_0\sum_{j=1}^{K} \frac{1}{j})^{1/2}+(D_1h_0^2 \frac{\pi^2}{12}+\sigma D_0 h_0\sum_{j=1}^{K} \frac{1}{2j})^{1/4}]
	\end{align*}
}
Now we derive the bound for $W_2(\tilde{\mu}_{K},\nu_{\sum_{k=1}^{K}h_k})$. 

\subsection{$W_2(\nu_{\sum_{k=1}^{K}h_k},\nu_{\infty})$}
We can directly get the following results from (3.17) in \cite{raginsky2017non} that there exist some positive constants $(C_3,C_4)$,
\begin{align*}
	W_2(\nu_{\sum_{k=1}^{K}h_k},\nu_{\infty}) \leq C_3 \exp({-\sum_{k=1}^{K}h_k/C_4})
\end{align*}
\end{proof}
Based on the convergence error bounds, we discuss an informal comparison with standard SGLD. Consider the following two cases.We must emphasize that since the term $W_2(\mu_K,\nu_{\sum_{k=1}^{K}\alpha_k})$ in the \eqref{decom1} increases w.r.t. $K$, our $\alpha_0$ must be set small enough in practice. Hence, in this informal comparison, we also set $\alpha_0$ small enough to make $W_2(\mu_K,\nu_{\sum_{k=1}^{K}\alpha_k})$ less important.

$\RN{1})$ If the initial stepsizes satisfy $\alpha_0 \geq h_0$, our algorithm cSGLD runs much faster than the standard SGLD in terms of the amount of ``diffusion time'' {\it i.e.}, the "t" indexing $\theta_t$ in the continuous-time SDE mentioned above. This result follows from $\sum_{k=1}^{K}\alpha_k=\frac{K\alpha_0}{2}$ and $\sum_{k=1}^{K}h_k=\sum_{k=1}^{K}\frac{h_0}{k}=\mathcal{O}(h_0\log K)\ll \frac{K\alpha_0}{2}$. In standard SGLD, since the error described by $W_2(\tilde{\mu}_K,\nu_{\sum_{k=1}^{K}h_k})$ increases w.r.t. $K$, $h_0$ needs to be set small enough in practice to reduce the error. Following the general analysis of SGLD in \cite{raginsky2017non,xu2017global}, the dominant term in the decomposition \eqref{decom2} will be $W_2(\nu_{\sum_{k=1}^{K}h_k},\nu_{\infty})$ since it decreases exponentially fast with the increase of $t$ and $W_2(\tilde{\mu}_K,\nu_{\sum_{k=1}^{K}h_k})$ is small due to the setting of small $h_0$. Since $\sum_{k=1}^{K}\alpha_k$ increases much faster in our algorithm than the term $\sum_{k=1}^{K}h_k$ in standard SGLD, our algorithm thus endows less error for K iterations, {\it i.e.}, $W_2(\nu_{\sum_{k=1}^{K}\alpha_k},\nu_{\infty})\ll W_2(\nu_{\sum_{k=1}^{K}h_k},\nu_{\infty})$. Hence, our algorithm outperforms standard SGLD, as will be verified in our experiments.

$\RN{2})$ Instead of setting the $h_0$ small enough, one may consider increasing $h_0$ to make standard SGLD run as ``fast'' as our proposed algorithm, {\it i.e.}, $\sum_{k=1}^{K}h_k \approx \sum_{k=1}^{K}\alpha_k$. Now the $W_2(\nu_{\sum_{k=1}^{K}h_k},\nu_{\infty})$ in \eqref{decom2} is almost the same as the $W_2(\nu_{\sum_{k=1}^{K}h_k},\nu_{\infty})$ in \eqref{decom1}. However, in this case, it is worth noting that $h_0$ scales as $\mathcal{O}(\alpha_0 K/\log{K})$. We can notice that $h_0$ is much larger than the $\alpha_0$ and thus the $W_2(\tilde{\mu}_K,\nu_{\sum_{k=1}^{K}h_k})$ cannot be ignored. Now the $h^2_0$ term in $W_2(\tilde{\mu}_K,\nu_{\sum_{k=1}^{K}h_k})$ would scale as $\mathcal{O}(\alpha^2_0 K^2/\log^2{K})$, which makes $W_2(\tilde{\mu}_K,\nu_{\sum_{k=1}^{K}\alpha_k})$ in \eqref{decom2} much larger than our $W_2(\mu_K,\nu_{\sum_{k=1}^{K}\alpha_k})$ defined in \eqref{decom1} since  $\mathcal{O}(\alpha^2_0 K^2/\log^2{K})\gg\mathcal{O}(\alpha^2_0 K)$. Again, our algorithm cSGLD achieves a faster convergence rate than standard SGLD.    

\section{Combining Samples}\label{sec: combine}
In cyclical SG-MCMC, we obtain samples from multiple modes of a posterior distribution by running the cyclical step size schedule for many periods.
We now show how to effectively utilize the collected samples.
We consider each cycle exploring different part of the target distribution $p(\theta|\mathcal{D})$ on a metric space $\Theta$. As we have $M$ cycles in total, the $m$th cycle characterizes a local region $\Theta_m \subset \Theta$, defining the ``sub-posterior'' distribution:
$
p_m(\theta|\mathcal{D})=\frac{p(\theta|\mathcal{D})\textbf{1}_{\Theta_m}}{w_m}, \text{with}\ w_m = \int_{\Theta_m} p(\theta|\mathcal{D}) d\theta,
$
where $w_m$ is a normalizing constant.
For a testing function $f(\theta)$, we are often interested in its
true posterior expectation $\bar{f} = \int f(\theta)p(\theta|\mathcal{D})d\theta$.
The sample-based estimation is
\begin{align} \label{f_approx}
\vspace{-1mm}
\hat{f} = \sum_{m=1}^{M} w_m \hat{f}_m~~\text{with}~~
\hat{f}_m = \frac{1}{K_m}\sum_{j=1}^{K_m} f(\theta_j^{(m)}),
\vspace{-1mm}
\end{align}
where $K_m$ is the number of samples from the $m$th cycle, and $\theta^{(m)} \in \Theta_m$.

The weight for each cycle $w_i$ is estimated using the \emph{harmonic mean} method~\citep{green1995reversible,raftery2006estimating}:
$ 
\hat{w}_m  
\approx [\frac{1}{K_m}\sum_{j=1}^{K_m} \frac{1}{ p(\mathcal{D}|\theta_j^{(m)}) } ]^{-1} .
$ 
This approach provides a simple and consistent estimator, where the only additional cost is to traverse the training dataset to evaluate the likelihood $p(\mathcal{D}|\theta_j^{(m)})$ for each sample $\theta_j^{(m)}$. We evaluate the likelihood once off-line and store the result for testing.

If $\Theta_m$ are not disjoint, we can assume new sub-regions $\tilde \Theta_m$ which are disjoint and compute the estimator as following
\begin{align*}
\hat{f}_m = \frac{1}{\overline{n}_m}\sum_{m=1}^{M}\sum_{j=1}^{K_m} f(\theta_m^{(j)})\textbf{1}_{\tilde \Theta_m}(\theta_{j}^{(m)})
\end{align*}
where
\[
\overline{n}_m=\sum_{m=1}^{M}\sum_{j=1}^{K_m}\textbf{1}_{\tilde \Theta_m}(\theta_j^{(m)}) 
\]
and $\textbf{1}_{\tilde \Theta_m}(\theta_{j}^{(m)})$ equals 1 only when $\theta_{j}^{(m)}\in \tilde \Theta_m$. By doing so, our estimator still holds even if $\Theta_m$ are not disjoint.

\section{Theoretical Analysis under Convex Assumption}

Firstly, we introduce the following SDE
\begin{align}\label{eq:itodif}
	\mathrm{d}\theta_t = -\nabla U(\theta_t)\mathrm{d}t + \sqrt{2}\mathrm{d}\mathcal{W}_t~,
\end{align}
Let $\nu_t$ denote the distribution of $\theta_t$, and the stationary distribution of \eqref{eq:itodif} be $p(\theta|\mathcal{D})$, which means $\nu_{\infty}=p(\theta|\mathcal{D})$.

However, the exact evaluation of the gradient $\nabla U$ is computationally expensive. Hence, we need to adopt noisy evaluations of $\nabla U $. For simplicity, we assume that at any point $\theta_k$, we can observe the value 
\begin{align*}
    \tilde{\nabla U_k} = \nabla U(\theta_k)+ \zeta_k
\end{align*}
where ${\zeta_k: k= 0,1,2,...}$ is a sequence of random (noise) vectors. Then the algorithm is defined as:
\begin{align}\label{eq:discreteitodif1}
\theta_{k+1} = \theta_{k} - \alpha_{k+1}\tilde{\nabla U_k} + \sqrt{2\alpha_{k+1}}\xi_{k+1}	
\end{align}
Further, let $\mu_k$ denote the distribution of $\theta_{k}$. 

Following the existing work in \cite{DALALYAN2019}, we adopt the following standard assumptions summarized in Assumption ~\ref{ass:ass2},

\begin{assumption}\label{ass:ass2}
\
\begin{itemize}
\item For some positive constants m and M, it holds
\begin{align*}
U(\theta)-U(\theta ^{\prime})-\nabla U(\theta ^{\prime})^{T}(\theta-\theta ^{\prime})\geq (m/2) \|\theta-\theta^\prime\|_2^2\\
\|\nabla U(\theta)-\nabla U(\theta^\prime)\|_2 \leq M \|\theta-\theta^\prime \|_2
\end{align*}
for any $\theta,\theta^\prime \in \mathbb{R}^d$

\item (bounded bias) $\mathbb{E}[\|\mathbb{E}(\zeta_k|\theta_k)\|_2^2] \leq \delta^2 d$
\item (bounded variance) $\mathbb{E}[\|\zeta_k-\mathbb{E}(\zeta_k|\theta_k)\|_2^2\leq \sigma^2 d$
\item (independence of updates) $\xi_{k+1}$ in \eqref{eq:discreteitodif1} is independent of $(\zeta_1,\zeta_2,...,\zeta_k)$

\end{itemize}
\end{assumption}
\subsection{Theorem}
Under Assumption \ref{ass:ass2} in the appendix and $\alpha_0 \in (0,\frac{1}{m}\wedge \frac{2}{M})$, if we define the $\alpha_{min}$ as $\frac{\alpha_0}{2}\left[\cos\left(\frac{\pi~\text{mod}(\lceil K/M\rceil-1,\lceil K/M\rceil)}{\lceil K/M\rceil}\right)+1 \right]$, we can derive the the following bounds.

If $m\alpha_{min}+M\alpha_0\leq 2$, then $W_2(\mu_{k+1},\nu_{\infty}) \leq$
    \begin{align}
        (1-m\alpha_{min})^K W_2(\mu_0,\nu_{\infty})+\frac{(1.65M\alpha_0^{3/2}+\alpha_{0}\delta )d^{1/2}}{m\alpha_{min}}+\frac{\delta^2\alpha_{0}d^{1/2}}{1.65M\alpha_0^{1/2}+\delta+\sqrt{m\alpha_{min}}\delta}.
    \end{align}
If $m\alpha_{min}+M\alpha_0>2$, then $W_2(\mu_{k+1},\nu_{\infty}) \leq$
    \begin{align}
        (1-(2-M\alpha_{0}))^K W_2(\mu_0,\nu_{\infty})+\frac{(1.65M\alpha_0^{3/2}+\alpha_{0}\delta )d^{1/2}}{2-M\alpha_{0}}+\frac{\delta^2\alpha_{0}d^{1/2}}{1.65M\alpha_0^{1/2}+\delta+\sqrt{2-M\alpha_{0}}\delta},
    \end{align}
where the $M,m, \delta, \sigma$ are some positive constants defined in Assumption \ref{ass:ass2}
\subsection{Proof}
\begin{proof}

According to the \eqref{eq:cyclic_lr}, we can find that the stepsize $\alpha_k$ varies from $\alpha_0$ to $\alpha_{min}$, where $\alpha_{min}$ is defined as $\alpha_{min} \triangleq \frac{\alpha_0}{2}
\left[\cos\left(\frac{\pi~\text{mod}(\lceil K/M\rceil-1,\lceil K/M\rceil)}{\lceil K/M\rceil}\right)+1 \right]$.
When $0<\alpha_0<\min(2/M,1/m)$, it is easy for us to know that $0<\alpha_k<\min(2/M,1/m)$ for every $k>0$. Then we can derive that all the $\rho_k\triangleq \max(1-m\alpha_k,M\alpha_k-1)$ will satisfy $0<\rho_k<1$.
Now according to the Proposition 2 in \cite{DALALYAN2019}, we can derive the result that
\begin{align}\label{proof:based}
   W_2(\mu_k+1,\nu_{\infty})^2 \leq \{\rho_{k+1}W_2(\mu_k,\nu_{\infty})+1.65M(\alpha_{k+1}^3d)^{1/2}+\alpha_{k+1}\delta \sqrt{p}\}^2+\delta^2\alpha_{k+1}^2d
\end{align}
Then we will use another lemma derived from \cite{DALALYAN2019}.

\begin{lemma}\label{lemma1}
If A,B,C are non-negative numbers such that A $\in$ (0,1) and the sequence of non-negative numbers ${y_k}$ satisfies the following inequality
\begin{align*}
    y_{k+1}^2\leq[(1-A)y_k+C]^2+B^2
\end{align*}
for every integer $k>0$. Then,
\begin{align*}
    y_k\leq(1-A)^ky_0+\frac{C}{A}+\frac{B^2}{C+\sqrt{A}B}
\end{align*}
\end{lemma}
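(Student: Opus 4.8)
The statement is an elementary scalar recursion, and the plan is to prove it by induction on $k$ using the explicit candidate bound $L := \frac{C}{A} + \frac{B^2}{C+\sqrt{A}B}$ together with the contraction factor $\rho := 1-A \in (0,1)$; concretely, I would show $y_k \le \rho^k y_0 + L$ for every $k$. The base case is immediate because $L \ge 0$. For the inductive step the key observation is that the map $t \mapsto \sqrt{(\rho t + C)^2 + B^2}$ is nondecreasing on $[0,\infty)$ (there $\rho t + C \ge 0$), so the hypothesis $y_k \le \rho^k y_0 + L$ combined with $y_{k+1} \le \sqrt{(\rho y_k + C)^2 + B^2}$ reduces the step to checking that the affine map $t \mapsto \rho t + L$ is a super-solution: writing $a := \rho^k y_0 \ge 0$, it suffices to prove $\sqrt{(\rho(a+L)+C)^2 + B^2} \le \rho a + L$.

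Next I would square this inequality (both sides are nonnegative) and rewrite it as a difference of two squares. Setting $P := \rho a \ge 0$, the target becomes $(P+L)^2 - (P + \rho L + C)^2 \ge B^2$, and factoring the left-hand side gives $(AL - C)\,(2P + (1+\rho)L + C) \ge B^2$. The chosen $L$ makes the first factor transparent: $AL - C = \frac{AB^2}{C+\sqrt{A}B} \ge 0$. Since this factor is nonnegative and the second factor is increasing in $P \ge 0$, the product is minimized at $P=0$, so the inductive step follows once I establish the single inequality $(AL-C)\,((1+\rho)L + C) \ge B^2$.

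Finally I would verify this remaining inequality by direct substitution. Writing $D := C + \sqrt{A}B$ and using $1+\rho = 2-A$, one finds $(1+\rho)L + C = \frac{2C}{A} + \frac{(2-A)B^2}{D}$, whence $(AL-C)((1+\rho)L+C) = \frac{2CB^2}{D} + \frac{A(2-A)B^4}{D^2}$; clearing $D^2$ shows the inequality is equivalent to $2CD + A(2-A)B^2 \ge D^2$, which after expanding $D^2 = C^2 + 2\sqrt{A}BC + AB^2$ collapses exactly to $C^2 + A(1-A)B^2 \ge 0$. This holds trivially because $A \in (0,1)$ and $B,C \ge 0$, closing the induction; the degenerate cases $B=0$ (a geometric recursion with limit $C/A$) and $B=C=0$ are immediate and consistent with the stated bound. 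The only genuinely nonroutine part, and the main obstacle, is discovering the correct constant $L$: it is the fixed point of $y = \sqrt{(\rho y + C)^2 + B^2}$, and the entire argument rests on recognizing that this fixed point admits the closed form $\frac{C}{A} + \frac{B^2}{C+\sqrt{A}B}$, which simultaneously makes $AL-C$ factor cleanly and serves as a super-solution of the recursion.
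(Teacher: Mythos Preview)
Your induction argument is correct and complete: the monotonicity of $t\mapsto\sqrt{(\rho t+C)^2+B^2}$, the difference-of-squares factoring, the reduction to $P=0$, and the final algebraic verification that $2CD+A(2-A)B^2\ge D^2$ collapses to $C^2+A(1-A)B^2\ge 0$ all check out. The paper does not actually prove this lemma; it simply imports it from \cite{DALALYAN2019}, so there is no in-paper argument to compare against---your proposal supplies a self-contained elementary proof where the paper defers to the literature.

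One small inaccuracy in your closing commentary: $L=\frac{C}{A}+\frac{B^2}{C+\sqrt{A}B}$ is \emph{not} the fixed point of $y=\sqrt{(\rho y+C)^2+B^2}$. Your own computation shows $(AL-C)((1+\rho)L+C)\ge B^2$ with strict inequality whenever $C^2+A(1-A)B^2>0$, so $L$ lies strictly above the fixed point in the generic case. This does not affect the proof---what you need and verify is precisely the inequality, not equality---but the heuristic remark about how $L$ was ``discovered'' is slightly off.
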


Using Lemma \ref{lemma1}, we can finish our proof now.
\begin{itemize}
    \item If $m\alpha_{min}+M\alpha_0\leq 2$, the $\rho_k$ will satisfy $\rho_k \leq 1-m\alpha_{min}$ for every $k>0$. Then the \eqref{proof:based} will turn into 
\begin{align*}
    W_2(\mu_{k+1},\nu_{\infty})^2 \leq \{(1-m\alpha_{min})W_2(\mu_k,\nu_{\infty})+1.65M(\alpha_0^3d)^{1/2}+\alpha_{0}\delta d^{1/2}\}^2+(\delta\alpha_{0}d^{1/2})^2
\end{align*} for every $k>0$. 
Then we can set $A=m\alpha_{min}$, $C=1.65M(\alpha_0^3d)^{1/2}+\alpha_{0}\delta d^{1/2}$, $B=\delta\alpha_{0}d^{1/2}$ and we can get the result.

   \item If $m\alpha_{min}+M\alpha_0>2$, the $\rho_k$ will satisfy $\rho_k \leq M\alpha_{0}-1$ for every $k>0$. Then the \eqref{proof:based} will turn into 
\begin{align*}
    W_2(\mu_{k+1},\nu_{\infty})^2 \leq \{[1-(2-M\alpha_{0})]W_2(\mu_k,\nu_{\infty})+1.65M(\alpha_0^3d)^{1/2}+\alpha_{0}\delta d^{1/2}\}^2+(\delta\alpha_{0}d^{1/2})^2
\end{align*} for every $k>0$. 
Then we can set $A=2-M\alpha_{0}$, $C=1.65M(\alpha_0^3d)^{1/2}+\alpha_{0}\delta d^{1/2}$, $B=\delta\alpha_{0}d^{1/2}$ and we can get the result.
\end{itemize}
\end{proof}

\section{Future Direction for the Wasserstein gradient flows}
We would like to point out that the convergence theorems developed in the above several sections can be potentially applied to study the convergence of the Wasserstein gradient flows \citep{Santambrogio2017}, which can be regarded as a continuous-time MCMC \citep{chen2018unified,liu2019understanding}. The theorems may shed some lights on the stepsize choice of the Wasserstein gradient flows which is less studied in the literature. We leave it as an interesting future work.

\section{Hyperparameters setting}
\subsection{Sensitivity of Hyperparameters}
Compared to SG-MCMC, there are two additional hyperparameters in Algorithm \ref{alg:wr}: the number of cycles $M$ and the proportion of exploration stage $\beta$. We now study how sensitive they are when comparing to the parallel MCMC. With the same setup as in Section \ref{sec: bnn}, We compare our method with $M$ cycles and $L$ epochs per cycle with running $M$ chains parallel MCMC for $L$ epochs. The training budget is 200 epochs. In Table \ref{tab:CIFAR_multi_chain}, $M=4$ and $\beta=0.8$ on CIFAR-10. We compare cSGLD and parallel SGLD with smaller and larger values of $M$ and $\beta$. In Table \ref{tab:parallel2}, we see that the conclusion that cSG-MCMC is better than parallel SG-MCMC holds with different values of $M$ and $\beta$. 

\subsection{Hyperparameters Setting in Practice}
Given the training budget, there is a trade-off between the number of cycles $M$ and the cycle
length. We find that it works well in practice by setting the cycle length such that the model with optimization methods will be close to a mode after running for that length. (e.g. the cycle length for CIFAR-10 is 50 epochs. The model optimized by SGD can achieve about 5\% error after 50 epochs which means the model is close but not fully converge to a mode after 50 epochs.) Once the cycle length is fixed, $M$ is fixed. $\beta$ needs tuning for different tasks by cross-validation. Generally, $\beta$ needs to be tuned so that the sampler has enough time to reach a good region before starting sampling.

\begin{table}[h!]
  \centering
  \begin{tabular}{c|ccccc}
   \hline
    & $M=2,\beta=0.8$ &$M=5,\beta=0.8$ & $M=4,\beta=0.7$ & $M=4,\beta=0.9$\\
              \hline 
   cSGLD & 4.27  &4.33  &4.08 &4.34\\
   Parallel SGLD &5.49 & 7.38  &6.03  &6.03\\
  \hline 
  \end{tabular}
  \vspace{2mm}
  \caption{Comparison of test error (\%) between cSG-MCMC and parallel algorithm with varying values of hyperparameters on CIFAR-10.}
  \label{tab:parallel2}
\end{table}

\section{Tempering in Bayesian Neural Networks}\label{sec:tempering}

Tempering is common in modern Bayesian deep learning, for both variational inference and MCMC approaches \citep[e.g.,][]{li2016preconditioned,nguyen2017plug,fortunato2017bayesian}. In general, tempering reflects the belief that the model capacity is misspecified. This combination of beliefs with data is what shapes the posterior we want to use to form a good predictive distribution.

Although we use the prescribed temperature in pSGLD \citep{li2016preconditioned} for all neural network experiments in the main text ($T \approx 0.0045$), we here investigate the effect of temperature $T$ on performance. We show negative log-likelihood (NLL) and classification error as a function of temperature on CIFAR-10 and CIFAR-100 using cSGLD with the same setup as in Section \ref{sec: bnn}. We consider $T \in [1,0.5,0.1,0.05,0.01,0.005,0]$. Figure \ref{fig:tempering-c10} and \ref{fig:tempering-c100} show the results on CIFAR-10 and CIFAR-100, respectively. On CIFAR-10, the best performance is achieved at $T=0.1$ with NLL 0.1331 and error 4.22\%. On CIFAR-100, the best performance is achieved at $T=0.01$ with NLL 0.7835 and error 20.53\%. We find that the optimal temperature is often less than 1. We hypothesize that this result is due to the model misspecification common to neural networks.

Indeed, modern neural networks are particularly overparametrized. Tempering enables one to use a model with similar inductive biases to a modern neural network, but with a more well calibrated capacity (which is especially important when we are doing Bayesian integration instead of optimization). Indeed, we show that by sampling from the tempered posterior, we outperform optimization. Learning the amount of tempering by cross-validation is a principled way of aligning the tempering procedure with representing a reasonable posterior. We have shown that sampling with cSGMCMC with tempering helps in terms of both NLL and accuracy, which indicates that we are finding a better predictive distribution. 

\begin{figure}[h!]
	\vspace{-0mm}\centering
	\begin{tabular}{ccc}		
		\hspace{-4mm}
		\includegraphics[width=5.3cm]{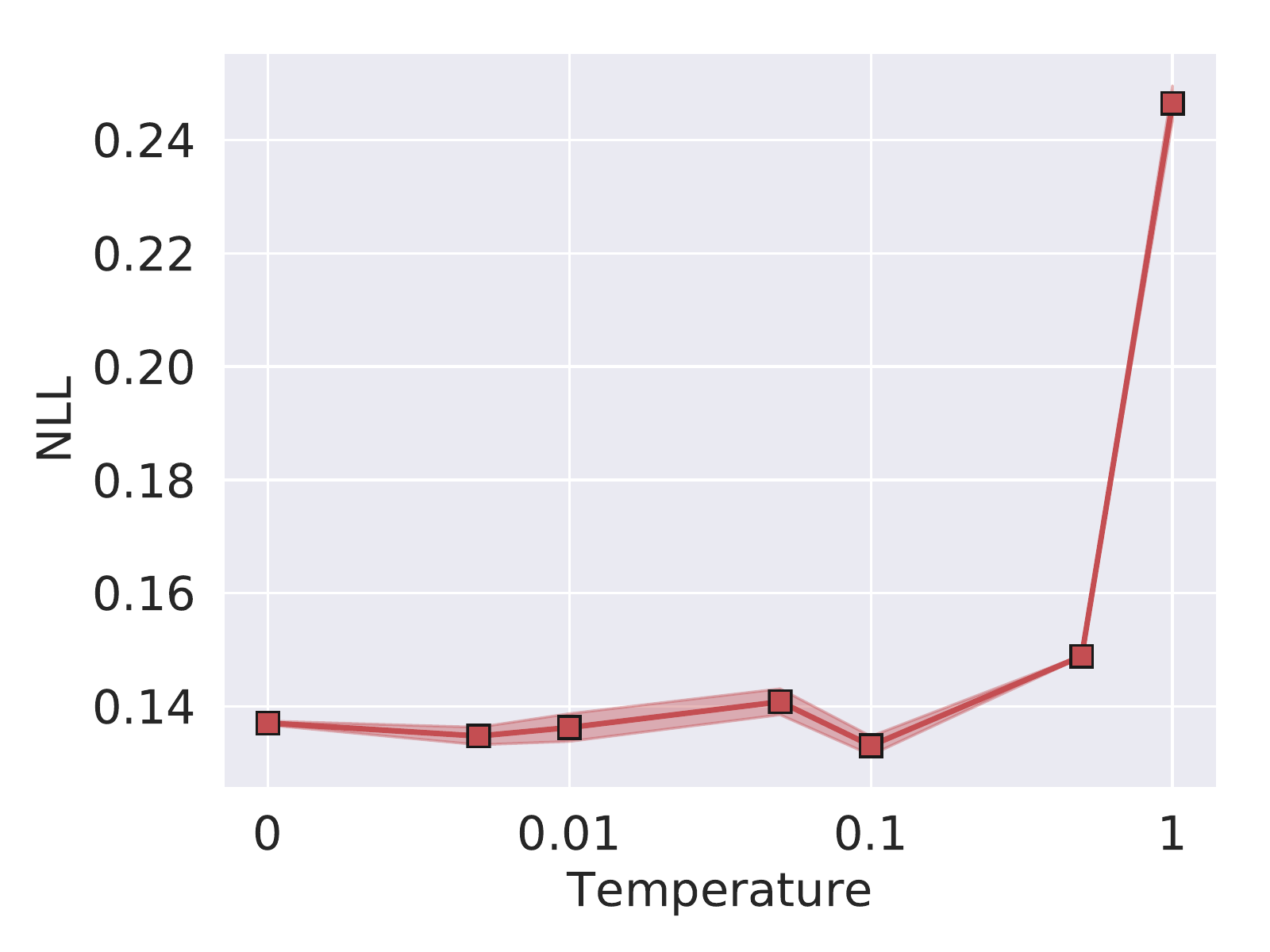}  &
        \includegraphics[width=5.3cm]{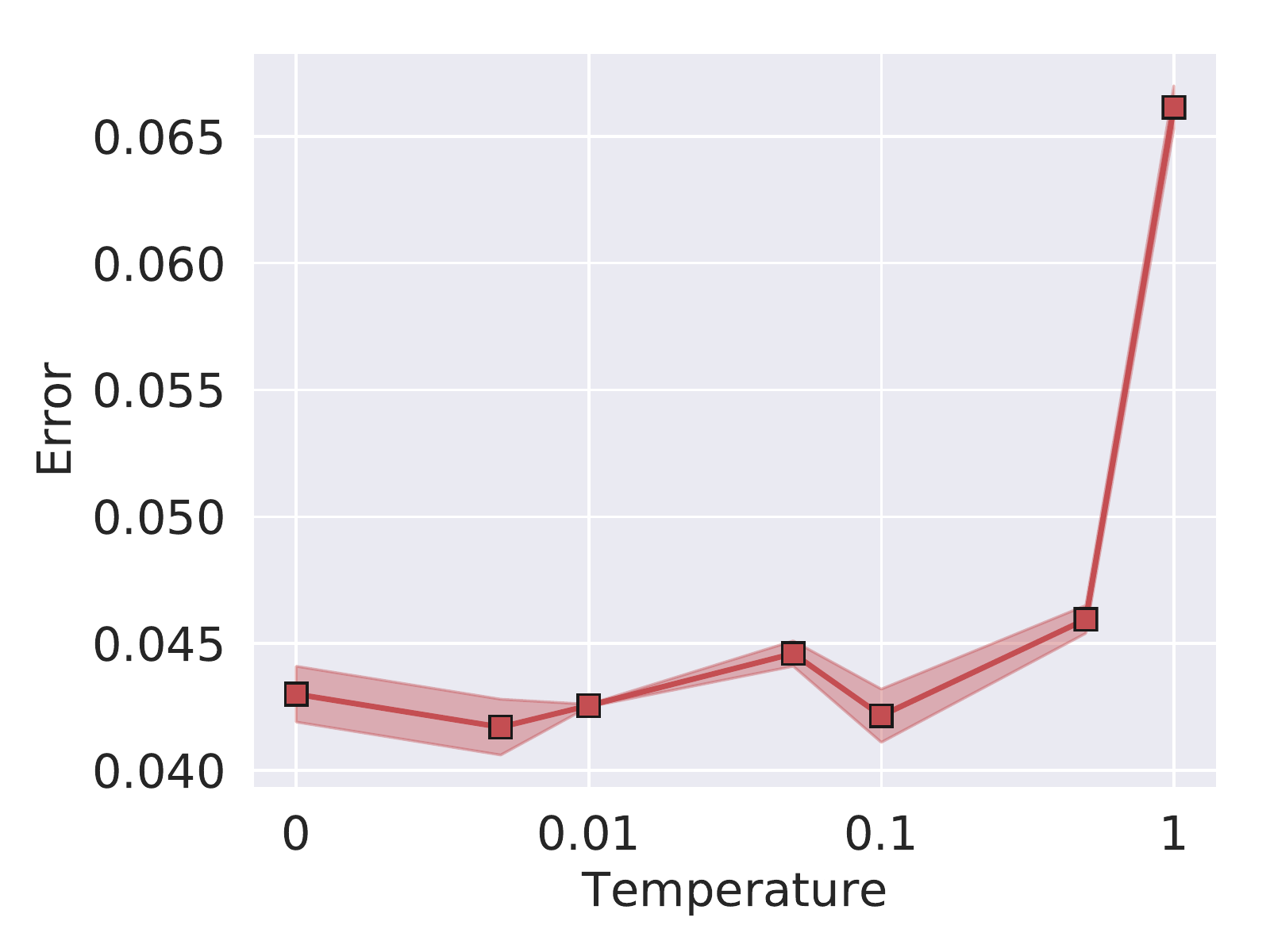}  \\
        (a) Test negative log-likelihood \hspace{-0mm} &
		(b) Test error \hspace{-0mm} 
	\end{tabular}
	\caption{NLL and error (\%) as a function of temeprature on CIFAR-10 using cSGLD. The best performance of both NLL and error is achieved at $T=0.1$.}
	\label{fig:tempering-c10}
\end{figure}

\begin{figure}[h!]
	\vspace{-0mm}\centering
	\begin{tabular}{ccc}		
		\hspace{-4mm}
		\includegraphics[width=5.3cm]{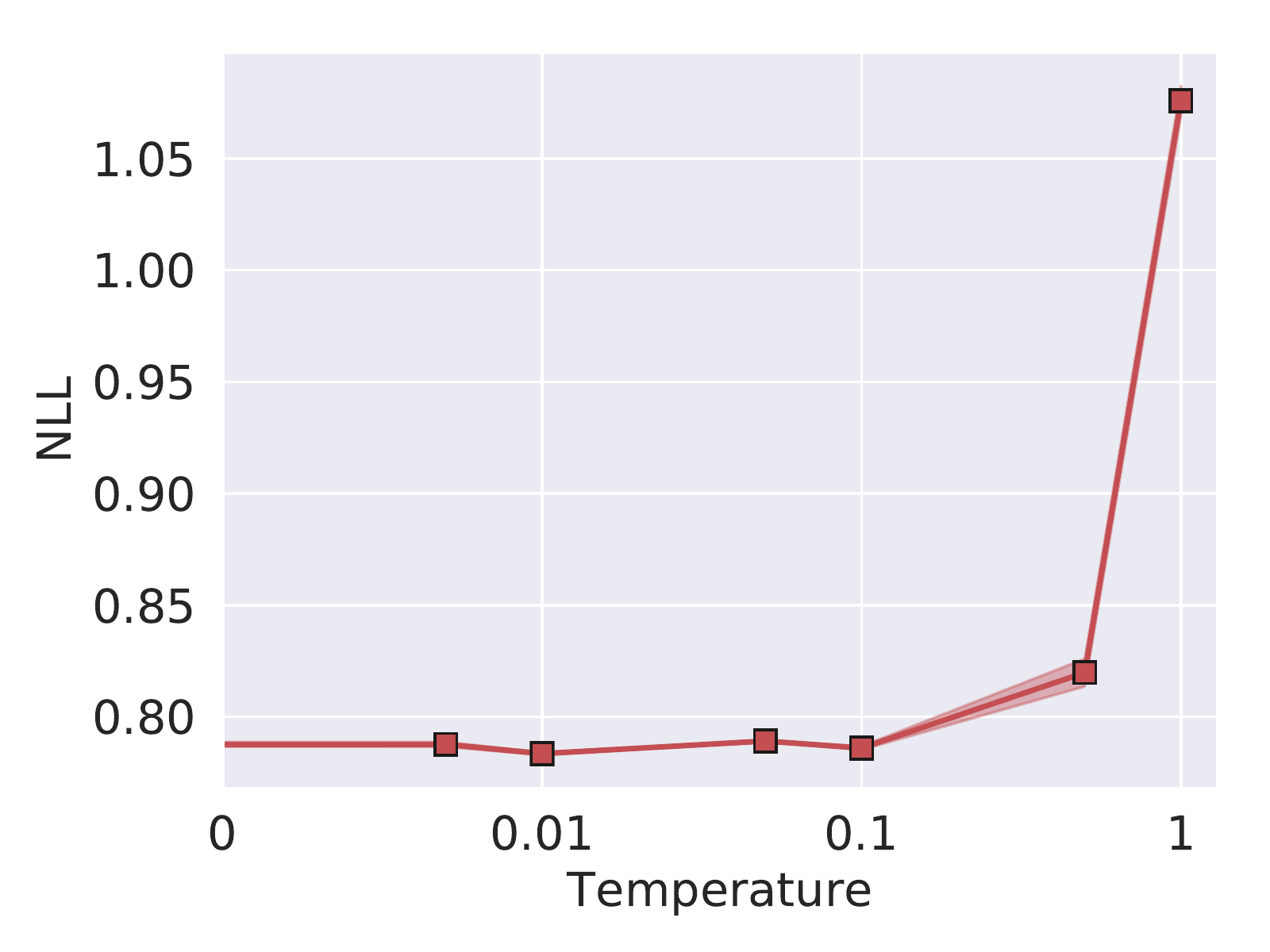}  &
        \includegraphics[width=5.3cm]{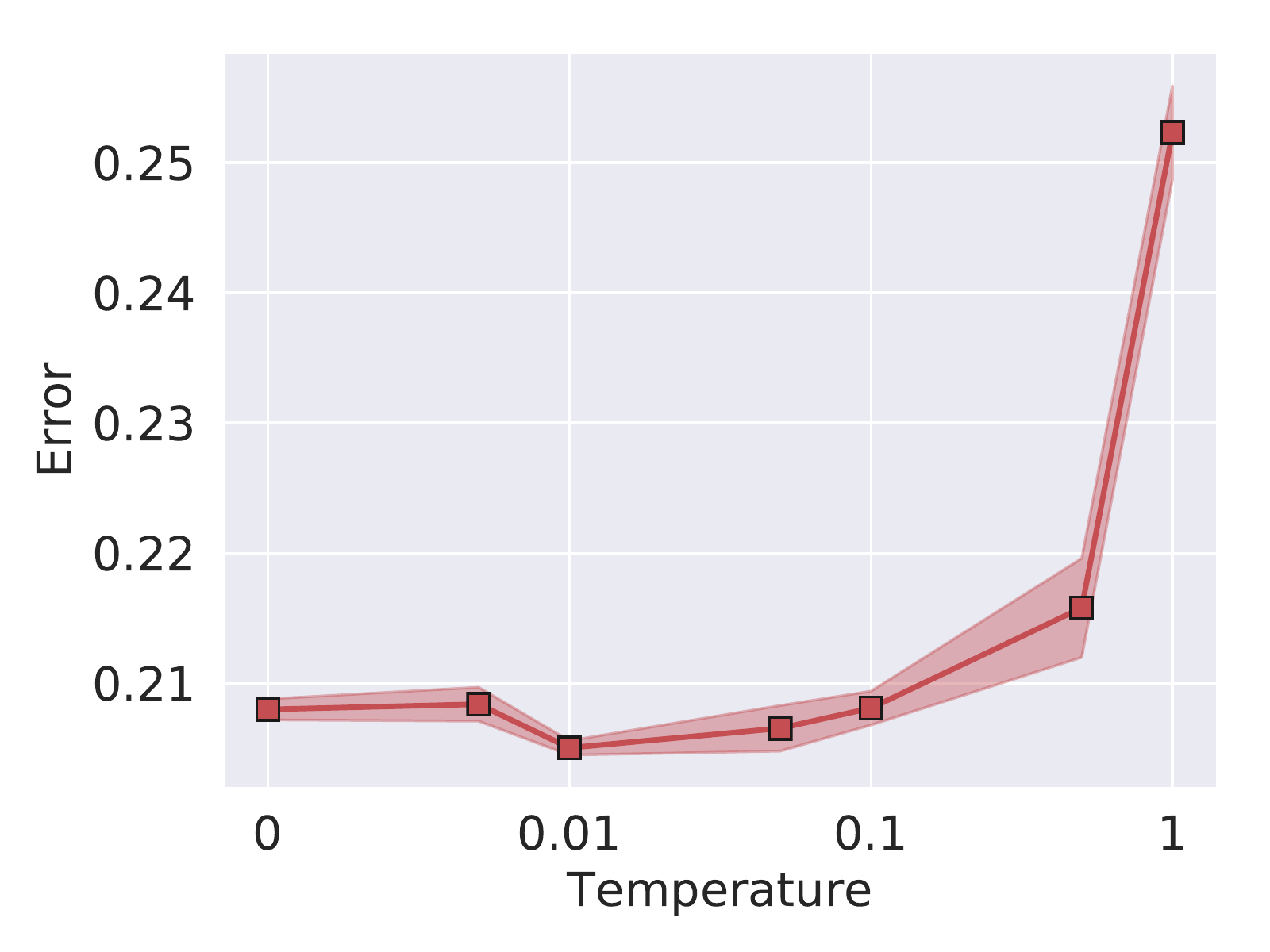}  \\
        (a) Test negative log-likelihood \hspace{-0mm} &
		(b) Test error \hspace{-0mm} 
	\end{tabular}
	\caption{NLL and error (\%) as a function of temeprature on CIFAR-100 using cSGLD. The best performance of both NLL and error is achieved at $T=0.01$.}
	\label{fig:tempering-c100}
\end{figure}

\section{Experimental Setting Details}
\subsection{Bayesian Logistic Regression}
For both cSGLD and cSGHMC, $M=100$, $\beta=0.01$. For cSGLD, $\alpha_0N=1.2, 0.5, 1.5$ for Austrilian, German and Hear respectively. For cSGHMC  $\alpha_0N=0.5, 0.3, 1.0$ for Austrilian, German and Hear respectively. For SG-MCMC, the stepsize is $a$ for the first 5000 iterations and then switch to the decay schedule (\ref{eq:poly-lr}) with $b=0$, $\gamma=0.55$. $aN=1.2,0.5,1.5$ for Austrilian, German and Hear respectively for SGLD and $aN=0.5, 0.3, 1.0$ for Austrilian, German and Hear respectively for SGHMC. $\eta=0.5$ in cSGHMC and SGHMC.

Assume that we collect $\{\theta_b\}_{b=1}^B$ samples. \emph{Effective sample size} (ESS) is computed by
\[
\text{ESS} = \frac{B}{1+2\sum_s^{B-1}(1-\frac{s}{B})\rho_s}
\]
where $\rho_s$ is estimated by
\[
\hat\rho_s = \frac{1}{\hat\sigma^2(B-s)}\sum_{b=s+1}^B (\theta_b - \hat\mu)(\theta_{b-s} - \hat\mu)
\]

Similar to \cite{hoffman2014no}, $\hat\sigma^2$ and $\hat\mu$ are obtained by running an independent sampler. We use HMC in this paper.

\subsection{Bayesian Neural Networks}
For SG-MCMC, the stepsize decays from 0.1 to 0.001 for the first 150 epochs and then switch to the decay schedule (\ref{eq:poly-lr}) with $a=0.01, b=0$ and $\gamma=0.5005$. $\eta=0.9$ in cSGHMC, Snapshot-SGDM and SGHMC. 

\subsection{Uncertainty Evaluation}
For both cSG-MCMC and Snapshot, $M=4$. $\beta=0.8$ in cSG-MCMC. $\alpha_0N = 0.01$ and $0.008$ for cSGLD and cSGHMC respectively. For SG-MCMC, the stepsize is $a$ for the first 50 iterations and then switch to the decay schedule (\ref{eq:poly-lr}) with $b=0$, $\gamma=0.5005$. $aN=0.01$ for SGLD and $aN=0.008$ for SGHMC. $\eta=0.5$ in cSGHMC, Snapshot-SGDM and SGHMC.
\end{document}